\title{State Space Models are Provably Comparable to Transformers in Dynamic Token Selection}
\author{
\!\!Naoki Nishikawa \\
The University of Tokyo\\
RIKEN AIP\\
\texttt{nishikawa-naoki259@g.ecc.u-tokyo.ac.jp} \\
\And
Taiji Suzuki \\
The University of Tokyo\\
RIKEN AIP\\
\texttt{taiji@mist.i.u-tokyo.ac.jp} \\
}
\begin{document}

\maketitle

\begin{abstract}
Deep neural networks based on state space models (SSMs) are attracting significant attention 
in sequence modeling since their computational cost is much smaller than that of Transformers. 
While the capabilities of SSMs have been demonstrated through experiments in various tasks,
theoretical understanding of SSMs is still limited. 
In particular, most theoretical studies discuss the capabilities of SSM layers without nonlinear layers, and there is a lack of discussion on their combination with nonlinear layers. 
In this paper, we explore the capabilities of SSMs combined with fully connected neural networks, and show that they are comparable to Transformers in extracting the essential tokens depending on the input. 
As concrete examples, we consider two synthetic tasks, which are challenging for a single SSM layer, and demonstrate that SSMs combined with nonlinear layers can efficiently solve these tasks. 
Furthermore, we study the nonparametric regression task, and prove that the ability of SSMs is equivalent to that of Transformers in estimating functions belonging to a certain class.
\end{abstract}

\section{Introduction}
Foundation models based on Transformers have achieved remarkable success in various sequence modeling tasks 
such as natural language processing~\citep{vaswani2017attention}, 
computer vision~\citep{dosovitskiy2020image}, 
and speech recognition~\citep{radford2023robust}.
The superior performance of Transformers is attributed to the self-attention mechanism,
which enables the model to aggregate information from the input sequence.

In contrast to its success, the self-attention mechanism has a potential problem that it requires a large amount of computation and memory.
To deal with this issue, many studies have attempted to develop efficient models that can replace Transformers.
Among them, \emph{State Space Models} (SSMs) have garnered considerable interest recently.
One advantage of SSMs is that the output can be computed with a significantly small time using convolution via the FFT algorithm or recursive computation.
Based on the original SSMs, many improvements have been proposed, such as HiPPO-based intialization~\citep{gu2021efficiently}
and architectures using gated convolutions~\citep{fu2022hungry,poli2023hyena}.

Networks based on SSMs have achieved high performance in various applications such as 
gene analysis~\citep{nguyen2024hyenadna},
audio generation~\citep{goel2022s}
and speech recognition~\citep{saon2023diagonal}.
On the other hand, some of the recent studies pointed out the limitations of SSMs to solve tasks.
For example, \citet{merrill2024illusion} show that SSMs cannot solve sequential problems from the view of computational complexity theory.
Additionally, \citet{jelassi2024repeat} demonstrate that SSMs are inferior to Transformers in solving the task to copy the input sequence.

One of the major differences between SSMs and Transformers lies in how they aggregate information from the input sequence tokens. 
The output of Transformers is computed as a weighted sum of the input tokens, 
where the weights are determined depending on the input. 
This allows the Transformer to \emph{dynamically determine which tokens to extract based on the input}, 
leading to its high performance.
SSMs also compute their output as a weighted sum of the input tokens, 
but the \emph{weights do not depend on the input}. 
Therefore, a single SSM layer cannot perform dynamic token extraction, which limits its capability.

However, in typical architectures,
SSMs are repeatedly applied alternately with fully connected neural networks (FNNs), 
similar to the attention mechanism. 
This raises the following question:
\vspace{-2mm}
\begin{center}
    \textit{Can SSMs combined with FNN layers perform dynamic token selection similar to Transformers?}
\end{center}
\vspace{-2mm}
This paper provides a positive theoretical result for this question. 
Specifically, we demonstrate that SSMs, when combined with FNNs, 
can exhibit dynamic token selection with performance equivalent to Transformers. 

To demonstrate the claim, we consider two synthetic tasks~(input copying and associative recall),
and a non-parametric regression problem.
\emph{Input copying} is the task of generating the same sequence as the given input. 
\citet{jelassi2024repeat} intensively studied this task and showed that a single SSM layer underperforms compared to Transformers in solving this task. 
In this paper, we show that two-layer SSMs combined with FNN layers can achieve performance comparable to Transformers in this task.
In \emph{associative recall}, the models are required to infer the answer from a pair of words provided as input. 
We demonstrate that the performance of SSMs combined with FNN layers is better than that of SSMs without FNNs shown in \citet{massaroli2024laughing}.
As for the \emph{non-parametric regression}, we consider the estimation of 
piecewise $\gamma$-smooth functions, which is defined in \citet{takakura2023approximation}.
\citet{takakura2023approximation} shows that Transformers can efficiently estimate functions in this class. 
We show that SSMs combined with FNN layers can achieve the same convergence rate as the rate for Transformers shown in \citet{takakura2023approximation}.

In solving the three problems above, the models have to determine which tokens to extract based on the input data.
Therefore, these results imply that SSMs possess dynamic token extraction capabilities comparable to Transformers.
We give some examples of associative recall task in \pref{fig:intuition}.
We trained SSMs and Transformers on the task and draw heatmaps to show which tokens the trained models focus on. 
From the figures, we can observe that both models pay attention to similar parts of the input. 
This verifies that SSMs possess dynamic feature extraction capabilities comparable to Transformers.
See \pref{subsec:associative_recall} for more details of the associative recall task.

\begin{figure}[t]
    \centering
    \includegraphics[width=0.8\columnwidth]{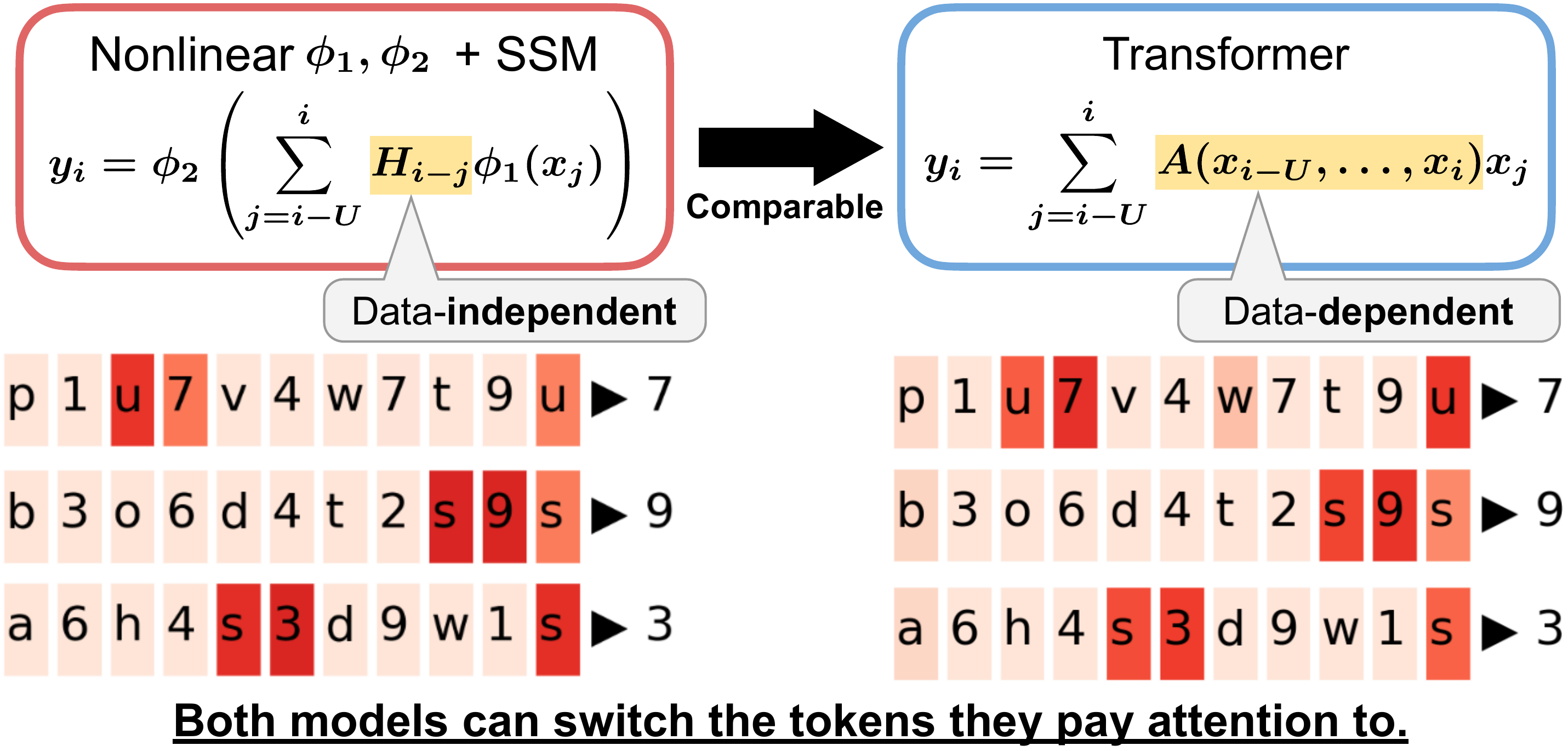}
    \caption{
        Conceptual illustrations of our theory. 
        The abilities of SSMs are said to be limited since their filter is not data-dependent.
        However, when combined with nonlinear layers, SSMs are comparable to Transformers in terms of dynamic token selection.
        Indeed, experiments on associative recall tasks show that SSMs capture the important tokens in the sequence depending on the input, 
        which is similar to the behavior of Transformers.
        The heatmap in the figure represents the importance of the token when the model predicts the output.
        Note that these are not artificial figures, but the actual results of the experiments.
    }\label{fig:intuition}
    \vspace{-2mm}
\end{figure}

The contributions of this paper are summarized as follows:
\vspace{-3mm}
\begin{enumerate}\setlength{\leftskip}{-8mm}
    \item We theoretically study the abilities of SSMs to solve two artificial tasks, 
        input copying and associative recall, 
        and prove that SSMs + FNNs can solve these tasks efficiently~(\pref{subsec:input_copying}, \ref{subsec:associative_recall}). 
    \item To prove the above results, we provide a theoretical result that shows SSMs combined with FNNs
        can mimic the dynamic token selection mechanism of Transformers~(\pref{subsec:key-lemma}).
    \item As a more general example, we also consider the non-parametric regression for the function class defined in \citet{takakura2023approximation}, 
        and demonstrate that SSMs can achieve the same estimation error as Transformers~(\pref{sec:approximation}).
\end{enumerate}

\paragraph{Other related works.}
Some studies have theoretically investigated the abilities of SSMs recently. 
For instance, \citet{wang2024state} show that SSMs are universal approximators for continuous sequence-to-sequence functions.
Moreover, \citet{cirone2024theoretical} studied the abilities of SSMs using rough path theory.
Furthermore, \citet{alonso2024state} analyzed the abilities of SSMs with the use of the control theory.
However, these studies (i)~do not give quantitative evaluations comparing SSMs to Transformers, 
and/or (ii)~only consider SSM layers without nonlinear layers.

There are some previous studies for the estimation error bound for non-parametric regression,
for example, \citet{suzuki2018adaptivity,schmidt2020nonparametric,suzuki2021deep} for FNNs 
and \cite{okumoto2021learnability} for CNNs.
These studies do not consider the case where the positions of essential features change depending on the input.
The function classes with piecewise smoothness are also considered in \citet{petersen2018optimal} and \cite{imaizumi2019deep}.
They do not consider anisotropic smoothness or sequence inputs, whereas we consider such situations.

\paragraph{Notations.}
For $l, r \in \bZ~(l \leq r)$, let $[l]$ be the set $\qty{1, \dots, l}$, and $[l:r]$ be the set $\qty{l, \dots, r}$.
For a set $S\subseteq \bR$ and $d,V\in \bN$, let $S^{d\times[-V:0]}\coloneqq\qty{[s_{-V}, \dots, s_0] \mid s_i \in S^d}$ and
let $S^{d\times \infty}\coloneqq\qty{[\dots, s_{-2}, s_{-1}, s_0] \mid s_i \in S^d}$.
For $F:\Omega \to \bR^{l}$, let $\norm{F}_\infty := \sup_{X\in \Omega} \norm{F(X)}_\infty$.
For a matrix $A$, let $\norm{A}_0 = \abs{\qty{(i, j) \mid A_{ij} \neq 0}}$.
For $X\in\bR^{d\times\infty}$, $X_{i, :}\in\bR^{1\times\infty}$ represents its $i$-th row.

\section{The Definition of Deep Neural Networks with SSMs}\label{sec:definition-ssm}
In this section, we provide the formal definition of deep neural networks with SSMs.
State space models with the input $[u_t]_{t=-L}^0$, the latent vectors $[x_t]_{t=-L}^0$
and the output $[y_t]_{t=-L}^0$ ($u_t\in\bR, x_t\in\bR, y_t\in\bR$), are represented as follows:
\begin{equation*}
    x_{t+1} = \sfA x_t + \sfB u_t, \quad  y_t = \sfC x_t + \sfD u_t \quad(t=-L, \ldots, -1),
\end{equation*}
where $\sfA, \sfB, \sfC, \sfD\in\bR$ are learnable parameters.
Then, the output $y_t$ can be written explicitly as 
$
    y_t=\sum_{n=-L}^{t}\left(\sfC\sfA^{t-n}\sfB+\sfD\delta_{t-n}\right) u_n.
$
By setting $h_t\coloneqq \sfC\sfA^t\sfB+\sfD\delta_t$ and $h=[h_t]_{t=0}^L$, 
we can rewrite the output as
$
    y_t=(h\ast u)_t\coloneqq\sum_{n=-L}^{t} h_{t-n}u_n
$
using convolution operation $\ast$.
If the \emph{filter} $[h_t]_{t=0}^L$ is precomputed, the output can be computed with $O(L\log L)$ time complexity using FFT algorithm, 
which is much faster than the computation cost of Transformers, $O(L^2)$.

In this paper, we consider the architectures consisting of the following three types of layers:
(i)~FNN layers, (ii)~convolution layers, and (iii)~an embedding layer.

\vspace{-2mm}
\paragraph{(i) FNN layer}
An FNN with depth $L$ and width $W$ is defined as
\begin{align*}
    f(x) \coloneqq (A_{L}\eta(\cdot) + b_{L}) \circ \dots \circ (A_1x + b_1),
\end{align*}
where $\eta=\relu$, and $A_i \in \bR^{d_{i + 1} \times d_i}, b_i \in \bR^{d_{i+1}}$ with $\max_i d_i \leq W$.
Then, we define the class of FNN with depth $L$, width $W$, norm bound $B$ and sparsity $S$ by
\begin{equation*}
    \Psi(L, W, S, B) 
    :=  \left\{
        f \relmiddle| \max_i \qty{\norm{A_i}_\infty, \norm{b_i}_\infty} \leq B,~
        \sum_{i=1}^L \norm{A_i}_0 + \norm{b_i}_0 \leq S 
    \right\}.
\end{equation*}

\vspace{-2mm}
\paragraph{(ii) Convolution layer}
Next, we define the convolution layers.
Let $W\in\bR^{D\times D}$ be learnable weights, and $D$ be the embedding dimension.
Then, given the input $X\in\bR^{D\times\infty}$, the output of the convolution layer $g\colon\bR^{D\times\infty}\to\bR^{D\times\infty}$ 
with window size $U$ is computed as
\begin{equation*}
    g(X) \coloneqq H\ast (WX),\quad H_{k,j}\coloneqq c_{1,k}\cos\qty(\frac{2\pi j\cdot a_{1,k}}{U+1})+c_{2,k}\sin\qty(\frac{2\pi j\cdot a_{2,k}}{U+1}),
\end{equation*}
where $H\in\bR^{D\times (U+1)}$ is a filter controlled by learnable parameters $c_1, c_2, a_1, a_2\in\bR^D$.
The operator $\ast\colon\bR^{D\times(U+1)}\times\bR^{D\times\infty}\to\bR^{D\times\infty}$
represents element-wise convolution.
Note that we assume the finite window size $U$, 
i.e., the output of position $-i~(i\in\bN_{\geq0})$ is computed with the tokens at position $-i, \ldots, -i-U$.
Such setting is also considered in the analysis of Transformers~\citep{takakura2023approximation}.
Then, we define the class of convolution layers with window size $U$, embedding dimension $D$ and norm constraint $B$ by
\begin{equation*}
    \scC(U, D, B) 
    \coloneqq\qty{
        g\relmiddle|
        \max\qty{
                \norm{W}_\infty, \norm{a_1}_\infty, \norm{a_2}_\infty, \norm{c_1}_\infty, \norm{c_2}_\infty
        }\leq B
    }.
\end{equation*}

The definition of the convolution filter includes several important settings.
First, our setting includes the case where we use the convolution filter $h_t=\sfC\sfA^t\sfB+\sfD\delta_t$ of ordinary SSMs.
Indeed, by constructing the parameters $\sfA, \sfB, \sfC, \sfD$ in the filter appropriately, 
we can obtain the same architecture as the convolution layer we consider~(see \pref{app:extension_to_ordinary-ssm} for the details).
Moreover, our setting includes the filter used in Hyena~\citep{poli2023hyena}, 
in which the filter is defined by neural networks with $\sin$-activation.

\vspace{-2mm}
\paragraph{(iii) Embedding layer}
Finally, the embedding layer with embedding dimension $D$ is defined as
\begin{equation*}
    \emb(X) = [E_1 X_i + E_2]_{i=-\infty}^\infty,
\end{equation*}
where $E_1 \in \bR^{D \times d}$ and $E_2 \in \bR^D$ are learnable parameters.

We first feed the sequence into $\emb$, and then alternately apply convolution layers and FNN layers. 
Note that the FNN layers are applied in a token-wise manner.
Thus, we define a class of deep neural networks using SSMs, denoted as $\scS$, as follows:
\begin{align*}
    \scS(M, U, D, L, W, S, B)
    \coloneqq\left\{
        f_M\circ g_M\circ\cdots \circ f_1\circ g_1\circ \emb\!
        \relmiddle| 
        \begin{aligned}
            &f_i\in\Psi(L, W, S, B),\\
            &g_i\in\scC(U, D, B), \\
            &\|E_1\|_\infty\leq B, ~\|E_2\|_\infty\leq B.\!
        \end{aligned}
    \right\}.
\end{align*}
We remark that the number of parameters of the model is $O(M(LW^2+D^2))$, 
and does not depend on the window size $U$, 
since the parameters are shared among the tokens.

\vspace{-1mm}
\section{Synthetic Tasks: Input Copying and Associative Recall}\label{sec:copy_and_recall}
\vspace{-2mm}
In this section, we consider the two tasks of (i)~input copying and (ii)~associative recall, 
and show that SSMs combined with FNNs can solve these tasks efficiently.
In both of these tasks, the position of the important token is different for each input.
Therefore, the fact that SSMs can solve these tasks suggests that 
SSMs have \emph{dynamic token extraction capabilities} comparable to those of Transformers.

Throughout this section, we assume that an input is given as a sequence of words in the dictionary $\scW$, 
where $\scW$ is a finite set.
Each word in the sequence is first converted into a $\abs{\scW}$-dimensional one-hot vector.
Then, the tokens are transformed to $D$-dimensional vectors by the embedding layer, 
and fed into subsequent convolution layers and FNN layers. 
Thus, we obtain a sequence of $D$-dimensional tokens as a model's output.
We consider the setting where the model generates sequences by autoregressively outputting the words,
i.e., next-token prediction.
To do this, we introduce an additional decoding layer, which we denote by $\dec$.
This layer has a learnable parameter $W_\dec\in\bR^{D\times|\scW|}$, 
and linearly transforms the final token of the model's output into a vector in $\bR^{|\scW|}$.
Then, the word corresponding to the largest component of this vector is regarded as the model's prediction.

\subsection{Input Copying}\label{subsec:input_copying}
\vspace{-1mm}
The input copying is a task in which the model is required to output exactly the same sequence as the input
via autoregressive inference.
As an example, consider the situation where the model receives the input sequence \seq{\token{BOS} c a d b e \token{COPY}}, 
where \texttt{a}, \texttt{b}, \texttt{c}, \texttt{d}, \texttt{e} are the words in $\scW$, 
and \token{BOS} and \token{COPY} are special tokens, which are also included in $\scW$.
Then, the model first needs to output \seq{c}.
Next, the model receives the input \seq{\token{BOS} c a d b e \token{COPY} c}, 
and the model is required to generate \seq{a}.
This process is repeated until the number of tokens the model generates becomes equal to the number of words in the input sequence.

To evaluate the model for this task, we consider the probability that the model generates the correct sequence for a certain input distribution. 
More precisely, suppose that input sequences are given in the form of ``\token{BOS} $x_1$ $x_2$ $\cdots$ $x_V$ \token{COPY}'', 
where $x_1, x_2, \ldots, x_V$ are independently generated from the uniform distribution on $\scW\setminus\set{\token{BOS}, \token{COPY}}$.
Then, we set $y_1, \ldots, y_V$ as the sequence generated by the model, 
and evaluate the model by the metric $\err_V$ defined as follows, which measures the probability
that the model does not correctly copy the input sequence:
\begin{equation*}
    \err_V \coloneqq \bP\qty[(y_1, \ldots, y_V)\neq(x_1\ldots, x_V)].
\end{equation*}

\citet{jelassi2024repeat} intensively studied the capabilities of SSMs and Transformers to solve this task. 
They theoretically showed that Transformers with $O(\log (V/\epsilon)\log|\scW|)$ parameters can achieve $\err_V\leq \epsilon$.
In their proof, they leverage the dynamic token extraction ability of the attention mechanism.
More specifically, they demonstrate that Transformers can solve the copying task 
by looking up the $n$ sequential tokens that match the last $n$ tokens.
In addition, they discuss the lower bound of the accuracy of SSMs, and showed that 
SSMs need $O(L\log|\scW|)$ memory size to achieve $\err_V\leq 1/2$.
These results suggest that a single SSM layer is inferior to Transformers 
in terms of the dynamic token extraction abilities.

In contrast to these results, we investigate the case where nonlinear layers are placed before and after the SSM layer, 
and obtain the following result.
\begin{theorem}\label{thm:input_copying}
    For any $\epsilon>0$, there exists an SSM $\hat F\in\scS(M, U, D, L, W, S, B)$ with
    \begin{align*}
        &M=2, \quad 
        U=V, \quad 
        D,~ L, ~W, ~S, ~\log B\lesssim \log^{37}\epsilon^{-1}\log^{42} V\log^8\abs{\scW},
    \end{align*}
    and decoding layer $\dec$ with $\norm{W_\dec}_\infty\leq 1$ such that
    $
        \sup_{V'\in[V]}~\err_{V'}\leq\epsilon.
    $
\end{theorem}
This result shows that, two layer SSMs combined with FNNs,
we can achieve the error $\epsilon$ with $\mathrm{poly}\log(\epsilon^{-1}, V, \abs{\scW})$ parameters,  
and avoid the necessity of the memory size increasing linearly with $V$.

In the proof of this theorem, it is essential that there are FNN layers before and after the second SSM layer. 
As we will show later in \pref{lem:key-lemma}, the SSM combined with the preceding and following FNN layers 
has the ability to extract tokens based on the inner product of keys and queries, similar to a Transformer. 
\citet{jelassi2024repeat} demonstrates that a two-layer Transformer can solve the input copying task. 
Thus, by replacing the attention layer with FNN + SSM + FNN, the above theorem can be proved. 
The detailed proof can be found in \pref{app:proof_copy_and_recall}.


\subsection{Associative Recall}\label{subsec:associative_recall}
Next, we investigate the task called associative recall.
In this task, we assume that the set of words $\scW$ is divided 
into two disjoint sets $\scW_\mathrm{key}$ and $\scW_\mathrm{value}$.
The input sequence is given in the form of ``$k_1$ $v_1$ $\cdots$ $k_V$ $v_V$ $q$'', 
where $k_1, \ldots, k_V\in \scW_\mathrm{key}$~($k_i\neq k_j$ if $i\neq j$), $v_1, \ldots, v_V\in \scW_\mathrm{value}$ 
and $q\in \scW_\mathrm{key}$ matches one of the $k_i~(i\in[V])$.
Then, the model is required to output the corresponding $v_i$ for the given $q=k_i$.
For example, if the model receives the input \seq{c 2 a 5 d 1 b 4 a}, 
the model needs to output \seq{5}.

Similarly to the input copying task, this task also requires the model to dynamically extract the important tokens.
Indeed, to solve this task, the model needs to focus on three tokens: 
(i) the last token of the input, (ii) the token with the same word as the last one, 
and (iii) the token that follows (ii). 
Since the locations of (ii) and (iii) change depending on (i), 
the model needs to pay attention to different locations depending on the input. 

We proved the following theorem, which shows that $O(\mathrm{poly}\log(\abs{\scW}))$ parameters are sufficient
to solve the associative recall task when using SSMs combined with FNNs.
\vspace{-1mm}
\begin{theorem}\label{thm:associative_recall}
    There exists an SSM $\hat F\in\scS(M, U, D, L, W, S, B)$ with
    \begin{align*}
        &M=2, \quad U=2V, \quad D,~ L, ~W, ~S, ~\log B\lesssim \log^{13}(\abs{\scW}),
    \end{align*}
    and decoding layer $\dec$ with $\norm{W_\dec}_\infty\leq 1$ such that, 
    for any input sequences of associative recall task, the model generates the correct output.
\end{theorem}
\citet{massaroli2024laughing} showed that Hyena-based SSMs can solve the associative recall task
with $O(\sqrt{\abs{\scW}}\log^2\abs{\scW})$ parameters.
In contrast, we proved that we require only $O(\mathrm{poly}\log\abs{\scW})$ parameters,
thanks to the nonlinear layers. 
This indicates that combining SSMs with nonlinear layers can improve the dynamic token extraction ability of SSMs.

In this theorem as well,
the essence of the proof is the ability of the SSM with preceding and following FNN layers, 
as described in \pref{lem:key-lemma}.
The proof can be found in \pref{app:proof_copy_and_recall}.

\vspace{-1mm}
\subsection{SSMs Mimic Attention Mechanisms to Select Important Tokens}\label{subsec:key-lemma}
\vspace{-1mm}

In this subsection, we discuss why SSMs combined with FNNs can perform dynamic token selection
similar to Transformers.

The dynamic token selection abilities of Transformers stem from the attention mechanisms.
Indeed, attention mechanisms compute the weighted sum of the values~(i.e., projected input tokens), 
where the weights are determined based on the input.
The weights are computed by applying the softmax function to the inner product between keys and queries.
This means that the attention mechanism \emph{prioritizes the tokens 
where the key and query have a high inner product}.
The following statement shows that SSMs combined with FNN layers can mimic this functionality.
\begin{lemma}[Dynamic Token Selection by SSMs]\label{lem:key-lemma}
    Suppose that the input sequence $X$ is given by
    \begin{equation*}
        X = \mqty[
            \ast   & \cdots & \ast   & q \\
            k_{-V} & \cdots & k_{-1} & k_0 \\
            v_{-V} & \cdots & v_{-1} & v_0
        ] \in \bR^{(2d'+d)\times[-V:0]}, 
    \end{equation*}
    where $q, k_j\in\bR^{d'}$ and $v_j\in\bR^d$ 
    with $\norm{q}_\infty\leq 1, \norm{k_j}_\infty\leq 1, \norm{v_j}_\infty\leq 1$~($j=-V, \dots, 0$).
    Let $\mu_j = q^\top k_j~(j=-V, \dots, 0)$ and $j^\ast = \arg\max_{j=-V, \dots, 0} \mu_j$.
    Suppose that, for any $j\in[-V:0]\setminus\set{j^\ast}$, 
    it holds $\mu_j\leq \mu_{j^\ast} - \delta$ for some $\delta>0$.
    Then, for any $\epsilon>0$, 
    there exist FNN layers $f_1, f_2\in\Psi(L, W, S, B)$ and a convolution layer $g\in\scC(U, D, B)$ with
    \begin{gather*}
        U=V, \quad D={d'}^3\delta^{-2}\qty(\log^2\epsilon^{-1}+\log^2 V), \\
        L\lesssim {d'}^8\delta^{-5}\qty(\log^5\epsilon^{-1}+\log^5 V), \quad
        W\lesssim {d'}^3\delta^{-3}\qty(\log^3\epsilon^{-1}+\log^3 V), \\
        S\lesssim {d'}^8\delta^{-5}\qty(\log^5\epsilon^{-1}+\log^5 V), \quad
        \log B\lesssim {d'}^3\delta^{-2}\qty(\log^3\epsilon^{-1}+\log^3 V),
    \end{gather*}
    such that
    $
        \norm{y_0 - v_{j^\ast}}_\infty \leq \epsilon
    $, 
    where $\qty[y_{-V}, \ldots, y_{-1}, y_0] \coloneqq f_2\circ g\circ f_1(X)$.
\end{lemma}
We provide the proof in \pref{app:proof_key-lemma}.
From this theorem, we can see that SSMs preceded and followed by FNN layers can 
dynamically change the positions of tokens to focus on based on the value of the inner product between the key and query, 
similar to the attention mechanism. 
Thus, SSMs demonstrate \emph{dynamic token selection abilities} similar to Transformers.

\vspace{-1mm}
\section{Nonparametric Regression Problem}\label{sec:approximation}
\vspace{-1mm}
In this section, we consider a non-parametric regression problem with sequence inputs, 
and show that SSMs are comparable to Transformers in estimating piecewise $\gamma$-smooth functions.
As we will explain later, in functions belonging to this class, 
the positions of important tokens vary depending on the input. 
Therefore, the dynamic token selection ability is essential for estimating a piecewise \(\gamma\)-smooth function. 
In \pref{app:intuition_piecewise_gamma_smooth}, we provide a visual explanation of the motivation for considering this class of functions.

Due to the technical convenience to analyze the estimation error, 
we consider the setting where the output of the network is bounded.
For this purpose, we assume that the output of the model is fed into the function $\clip_R$ 
defined by $\displaystyle\clip_R(x)\coloneqq\max\qty{-R, \min\qty{R, x}}$. 
Since $\clip_R$ can be implemented by the FNN with depth $1$ and width $2$, 
this assumption is not far from the practical setting.
Furthermore, to predict a real value, 
the final token~(i.e., index 0) in the sequence output by the model is regarded as the predicted value.
We define the class of clipped networks $\scS'$ by
\begin{equation*}
    \scS'(M, U, D, L, W, S, B) = \set{(\clip_R\circ F)_0\mid F\in\scS(M, U, D, L, W, S, B)}.
\end{equation*}

\vspace{-2mm}
\subsection{Problem setting}
\vspace{-2mm}
We consider the situation where the input $X\coloneqq\qty[x_i]_{i=-\infty}^0\in\bR^{d\times\infty}$ is a sequence of $d$-dimensional tokens, 
and they are generated from a probability measure $P_X$ on $([0, 1]^{d\times\infty}, \scB([0, 1]^{d\times\infty}))$.
In the following, we denote $\Omega\coloneqq \supp{P_X}$ and define the norm $\norm{\cdot}_{p, P_X}$ by
$
    \norm{f}_{p, P_X} = \qty(\int_{\Omega} \norm{f(X)}_p^p \dd{P_X})^{1/p}.
$


As in the usual nonparametric regression setting, suppose that
we observe $n$ i.i.d. inputs $X^{(i)}\sim P_X~(i=1, \ldots, n)$ and the corresponding outputs $Y^{(i)}\in\bR$ generated by
$Y^{(i)} = F^\circ(X^{(i)}) + \xi^{(i)}$, where $\xi^{(i)}\in\bR$ is the i.i.d. noise generated from $\scN(0, \sigma^2)~(\sigma>0)$.
We further assume that $\qty{\xi^{(i)}}_{i=1}^n$ is independent of the inputs $\qty{X^{(i)}}_{i=1}^n$.
Given the pairs of inputs and outputs $\qty{(X^{(i)}, Y^{(i)})}_{i=1}^n$, 
we obtain the estimator $\hat F$ of the target function $F$ through empirical risk minimization:
\begin{equation*}
    \hat F \coloneqq \arg\min_{F\in\scS'} \frac{1}{n}\sum_{i=1}^n \qty(Y^{(i)} - F(X^{(i)}))^2,
\end{equation*}
where $\scS'$ is the class of networks that we defined above.
To measure the statistical performance of the estimator $\hat F$, we utilize mean squared error (MSE) defined by
\begin{equation*}
    R(\hat F, F^\circ) = \bE\qty[\norm{\hat F(X) - F^\circ(X)}_{2, P_X}^2],
\end{equation*}
where the expectation is taken for $\qty{X^{(i)}}_{i=1}^n$ and $\qty{\xi^{(i)}}_{i=1}^n$.

\subsection{Piecewise $\gamma$-smooth function class}\label{subsec:def-piecewise_gamma-smooth}
To compare the estimation ability of SSMs with that of Transformers,
we assume that the target function $F^\circ$ belongs to the function class called \emph{piecewise $\gamma$-smooth}.
The function class was introduced in \citet{takakura2023approximation}, and they showed
the estimation error bound of Transformers for the functions in the class.
In this function class, \emph{the importance of the tokens (or coordinates) is characterized by the smoothness of the function}.
We describe the details in the following.

\paragraph{$\gamma$-smooth function class.}
Before introducing the piecewise $\gamma$-smooth function class, we first define the $\gamma$-smooth function class, 
which was first proposed by~\citet{okumoto2021learnability}.
This function class can be seen as an \emph{extension of the well-known function spaces}, 
such as the mixed-Besov space and (anisotropic) Sobolev space.

First, for $r \in \bZ_0^{d\times \infty}$, we define $\psi_{r_{ij}}: [0, 1] \to \bR$ by
$\displaystyle
    \psi_{r_{ij}}(x) \coloneqq \begin{cases}
        \sqrt{2}\cos(2\pi\abs{r_{ij}}x) & (r_{ij} \leq 0),\\
        \sqrt{2}\sin(2\pi\abs{r_{ij}}x) & (r_{ij} > 0),
    \end{cases}
$
and $\psi_r:\InputDomain \to \bR$ by $\psi_r(X) = \prod_{i=1}\prod_{j=1} \psi_{r_{ij}}(X_{ij})$.
Then, $\qty{\psi_r}_{r \in \bZ_0^{d\times \infty}}$ forms a complete orthonormal system of $L^2(\InputDomain)$, 
Therefore, any $f\in L^2(\InputDomain)$ can be expanded as $f = \sum_{r\in \bZ_0^{d\times \infty}} \ev{f, \psi_r}\psi_r$.
For $s \in \bN_0^{d\times \infty}$, we define
\begin{align*}
    \delta_s(f) \coloneqq \sum_{r\in \bZ_0^{d\times \infty}, \floor{2^{{s_{ij}-1}}}\leq r_{ij}<2^{s_{ij}}}\ev{f, \psi_r}\psi_r.
\end{align*}
Then, we define the $\gamma$-smooth function class as follows.
\begin{definition}[$\gamma$-smooth function class]
    For a given $\gamma:\bN_0^{d\times \infty}\to \bR$ which is monotonically non-decreasing with respect to each coordinate and $p\geq 2, \theta \geq 1$,
    we define the $\gamma$-smooth function space as follows:
    \begin{equation*}
        \mathcal{F}_{p, \theta}^\gamma(\InputDomain) \coloneqq \qty{f \in L^2(\InputDomain) \mid \norm{f}_{\mathcal{F}_{p, \theta}^\gamma} < \infty},
    \end{equation*}
    where the norm $\norm{f}_{\mathcal{F}_{p, \theta}^\gamma}$ is defined as
    $
        \norm{f}_{\mathcal{F}_{p, \theta}^\gamma} \coloneqq \qty(\sum_{s\in \bN_0^{d\times \infty}} 2^{\theta \gamma(s)} \norm{\delta_s(f)}_{p, P_X}^\theta)^{1/\theta}
    $.
    We also define the finite dimensional version $\mathcal{F}_{p, \theta}^\gamma([0, 1]^{d\times l})$ for $l \in \bN$ in the same way.
\end{definition}
Note that $\delta_s(f)$ can be seen as the frequency component of $f$ with frequency $\abs{r_{ij}} \sim 2^{s_{ij}}$ for each coordinate $(i, j)$.
Therefore, we can interpret that $\gamma$ controls the amplitude of each frequency component 
through weighting the term $\norm{\delta_s(f)}_{p, P_X}$ in the definition of the norm $\norm{\cdot}_{\mathcal{F}_{p, \theta}^\gamma}$.
In other words, if $\gamma(s)$ is larger, the norm of frequency component $\delta_s(f)$ is smaller.

As a special case of $\gamma$, we consider the following two types of smoothness:
\begin{equation*}
    \gamma(s) = \begin{cases}
        \ev{a, s} & \quad \text{(Mixed smoothness)}, \\
        \max\qty{a_{ij}s_{ij}\mid i \in [d], j \in \bZ} & \quad \text{(Anisotropic smoothness)},
    \end{cases}
\end{equation*}
where $a\in\bR_{>0}^{d\times \infty}$ is the \emph{smoothness parameter}, 
which determines the smoothness of the function for each coordinate.
To provide the intuition of the smoothness, let us consider the extreme case, $a_{ij} \to \infty$ for $(i, j)$ with $s_{ij} \neq 0$. 
Then, it holds $\gamma(s) \to \infty$, both for mixed and anisotropic smoothness, which implies $2^{\gamma(s)} \to \infty$. 
This indicates that a strong ``penalty'' is imposed on the component $\delta_s(f)$ 
and the function $f$ does not have the frequency component $s_{ij}$ along the direction of $(i, j)$. 
Since the norm $\|f\|_{\scF_{p, \theta}^\gamma}$ has to be finite, it holds $\|\delta_s(f)\|_{p, P_X} \to 0$. 
This means that the function $f$ has to be smooth for the input coordinate $(i, j)$.
Therefore, large $a_{ij}$ implies that the function is smooth towards the coordinate $(i, j)$, 
and this indicates that the value of function does not change much for the input coordinate $(i, j)$, 
which means that $X_{ij}$ is \textit{not an important feature}.
In contrast, small $a_{ij}$ implies that the coordinate $X_{ij}$ is \textit{an important feature}.


As we stated above, the function class $\scF_{p, \theta}^\gamma(\InputDomain)$ can be seen as an extension
of some well-known function spaces to the infinite-dimensional setting.
Indeed, if $P_X$ is a uniform distribution on $[0, 1]^{1\times l}$ and $p<\infty$, 
then $\scF_{p, \theta}^\gamma([0, 1]^{1\times l})$ with mixed smoothness is equivalent to the mixed-Besov space.
Moreover, if $P_X$ is a uniform distribution, then 
the anisotropic Sobolev space is included in the unit ball of $\scF_{2, 2}^\gamma$ with anisotropic smoothness.

\vspace{-2mm}
\paragraph{Piecewise $\gamma$-smooth function class.}
Now, we are ready to define the piecewise $\gamma$-smooth function class.
The functions in this class have different smoothness depending on the input unlike $\gamma$-smooth functions.
Therefore, the models have to \emph{choose appropriate tokens to extract depending on the input} when estimating a function in this class.

The rigorous definition of piecewise $\gamma$-smooth function class is given as follows.
In the following, we denote $X_{-i}\coloneqq[\ldots, x_{-i-1}, x_{-i}]$ for $i\in\bN$ and $X=[\ldots, x_{-1}, x_0]$.
\begin{definition}[Piecewise $\gamma$-smooth function class]
    Let $\mu$ be a function that belongs to $\scF_{p, \theta}^\gamma(\InputDomain)$, 
    which we call the \emph{importance function}.
    Additionally, let $\pi_X:[-V:0]\to[-V:0]$ be the map that sorts the indices $i\in[-V:0]$
    in ascending order of the importance $\mu(X_i)$, i.e.,
    \begin{equation*}
        \mu(X_{\pi_X(-V)}) < \cdots < \mu(X_{\pi_X(0)}).
    \end{equation*}
    Then, we define the map $\Pi: \InputDomain \to [0, 1]^{d\times[-V:0]}$ by
    \begin{equation*}
        \Pi(X) \coloneqq [x_{\pi_X(-V)}, \ldots, x_{\pi_X(0)}].
    \end{equation*}
    Then, for $p \geq 2, \theta \geq 1$ and $\gamma:\bN_0^{d\times \infty} \to \bR$, 
    the function class $\mathcal{P}_{p, \theta}^\gamma(\InputDomain)$ with piecewise $\gamma$-smoothness is defined as follows:
    \begin{equation*}
        \mathcal{P}_{p, \theta}^\gamma(\InputDomain) 
        \coloneqq \qty{g = f \circ \Pi \relmiddle| f \in \mathcal{F}_{p, \theta}^\gamma([0, 1]^{d\times[-V:0]}), \norm{g}_{\mathcal{P}_{p, \theta}^\gamma} < \infty },
    \end{equation*}
    where the norm $\norm{g}_{\mathcal{P}_{p, \theta}^\gamma}$ is defined by
    $
        \norm{g}_{\mathcal{P}_{p, \theta}^\gamma} \coloneqq \qty(\sum_{s \in \bN_0^{d\times [-V:0]}} 2^{\theta\gamma(s)}\norm{\delta_s(f) \circ \Pi}_{p, P_X}^\theta)^{1/\theta}.
    $
\end{definition}
In simple terms, when an input $X=[\ldots, x_{-V}, \ldots, x_0]$ is fed into a function $g\in\scP_{p, \theta}^\gamma(\InputDomain)$,
the tokens $x_{-V}, \ldots, x_0$ are first sorted in ascending order of the importance $\mu(X_{-V}), \ldots, \mu(X_0)$, 
and the resulting sequence $[x_{\pi_X(-V)}, \ldots, x_{\pi_X(0)}]$ is fed into the $\gamma$-smooth function $f$.
The importance of each token $x_{-i}$ is determined by the preceding tokens, i.e., $x_{-i}, x_{-i-1}, \ldots$.
Since the order of the sorted tokens changes depending on the input, 
the smoothness of the function $g=f\circ\Pi\in\scG_{p, \theta}^\gamma$ differs for different inputs.

In the definition, tokens with higher importance are placed closer to index 0 after sorting. 
In the latter subsection, we assume that the smoothness of the function $f$ for a coordinate
is smaller for the tokens with indices closer to 0, 
which implies that the tokens with higher importance are more essential to estimate the function $f$.

As in \citet{takakura2023approximation}, we assume that an importance function $\mu$ is \textit{well-separated},
i.e., for some constant $c, \beta > 0$, $\mu$ satisfies $\mu(X_{\pi_{\lambda}(-i)}) \geq \mu(X_{\pi_{\lambda}(-i-1)}) + c i^{-\beta}$
for any $X \in \Omega_\lambda$.
This implies that $X$ satisfies $\mu(X_{-i}) \simeq \mu(X_{-j})~(i\neq j)$ with probability zero.
A similar assumption can be found in the literature of statistics such as \citet{hall2007methodology}.

\subsection{Approximation and Estimation Ability of SSMs}
In this subsection, we show the theoretical results on the ability of SSMs 
to approximate and estimate the function in the piecewise $\gamma$-smooth function class.

To establish the theories, we make the following assumptions, 
all of which are also imposed in \citet{takakura2023approximation}.
\begin{assumption}\label{ass:nonparametric}
    The true function $F^\circ$ belongs to $\scG_{p, \theta}^\gamma$, where $\gamma$ is mixed or anisotropic smoothness.
    Moreover, $F^\circ$ and the importance function $\mu$ satisfy the following conditions:
    \vspace{-2mm}
    \begin{enumerate}\setlength{\leftskip}{-3mm}
        \item[(i)] 
            $\norm{F^\circ}_{\scF_{p, \theta}^\gamma}\leq 1$, $\norm{F^\circ}_{\infty} \leq R_0$, 
            $\norm{\mu}_{\scF_{p, \theta}^\gamma}\leq 1$, $\norm{\mu}_{\infty} \leq 1$
            for some constant $R_0>0$.
        \vspace{-2mm}
        \item[(ii)] 
            For the smoothness parameter $a$, it holds $a_{ij}=\Omega(\log(\abs{j}+1))$ for $\mu\in\scF_{p, \theta}^\gamma$, 
            and $a_{ij}=\Omega(j^\alpha)$ for $F^\circ\in\scG_{p, \theta}^\gamma$, where $\alpha>0$ is a constant.
            Moreover, it holds $\norm{a}_{w l^\alpha}\leq 1$ for both $\mu$ and $F^\circ$, 
            where $\norm{a}_{wl^\alpha} := \sup_j j^\alpha \bar a_j^{-1}$
            and $\bar a_j$ is the $j$-th smallest element of $a$.
        \vspace{-2mm}
        \item[(iii)]
            If $\gamma$ is mixed smoothness, we assume $\bar a_1 < \bar a_2$.
    \end{enumerate}
\end{assumption}

\begin{remark}
    The assumption $\norm{a}_{wl^\alpha} \leq 1$ implies that the $j$-th smallest element of smoothness parameter $a$
    increases polynomially with respect to $j$, which indicates the \emph{sparsity} of the important features.
    This assumption is natural in real-world applications, as we check in \pref{sec:experiment}.
    Moreover, the assumptions $a_{ij}=\Omega(\log(\abs{j}+1))$ and $a_{ij}=\Omega(j^\alpha)$ mean 
    that the token placed far from the final token is less important.
    Note that the condition $a_{ij}=\Omega(j^\alpha)$ for $F^\circ=f\circ\Pi\in\scG_{p, \theta}^\gamma$ is 
    imposed on the function $f\in\scF_{p, \theta}^\gamma$, 
    and the input of the function $f$ is sorted by the importance.
\end{remark}

Then, we have the following theorem on the approximation ability of SSMs for piecewise $\gamma$-smooth functions.
\begin{theorem}\label{thm:approximation_piecewise_gamma_smooth}
    Let $F^\circ$ be a function satisfying \pref{ass:nonparametric}.
    Then, for any $T>0$, there exists a SSM $\hat F\in\scS'(M, U, D, L, W, S, B)$ with 
    \begin{equation}\label{eq:parameter_piecewise}
        \begin{aligned}
            M&\lesssim T^{1/\alpha}, \quad U=V, \quad 
            D\lesssim T^{\pconst}\log^2 V, \quad 
            L\lesssim T^{\pconst}\log^5 V, \\
            W&\lesssim 2^{T/a^\dagger}T^{\pconst}\log^3 V, \quad 
            S\lesssim 2^{T/a^\dagger}T^{\pconst}\log^5 V, \quad 
            \log B\lesssim T^{\pconst}\log^3 V,
        \end{aligned}
    \end{equation}
    such that
    $
        \norm{F^\circ-\hat F}_{2, P_X}\lesssim 2^{-T}.
    $
    Here, $\pconst$ is a constant depending on $\alpha$ and $\beta$ such that $\pconst\leq 5+2/\alpha+5\beta/\alpha$.
\end{theorem}
This result reveals that the number of parameters to achieve the error $\epsilon$ 
is $O((1/\epsilon)^{\frac{1}{a^\dagger}}\mathrm{poly}\log(1/\epsilon, V))$, 
which is the same as that of Transformers shown by \citet{takakura2023approximation}.
Similarly to \pref{thm:input_copying} and \pref{thm:associative_recall}, to prove this theorem, 
we utilize the similar argument as \pref{lem:key-lemma}, which establishes the dynamic token selection ability of SSMs combined with FNNs.
The detailed proof can be found in \pref{app:proof_approximation_piecewise_gamma_smooth}.

Using the approximation theory above, we obtain the following results, 
which state the estimation ability of SSMs for piecewise $\gamma$-smooth functions.
\begin{theorem}\label{thm:estimation_piecewise-gamma-smooth}
    Suppose that the target function $F^\circ$ satisfies \pref{ass:nonparametric}.
    Let $a^\dagger=\bar a_1$ for mixed smoothness, 
    and $a^\dagger=\qty(\sum_{i=1}^\infty \bar a_i^{-1})^{-1}$ for anisotropic smoothness.
    Moreover, let $\hat F$ be an ERM estimator in $\scS(M, U, D, L, W, S, B)$, with $M, U, D, L, W, S, B$ defined as 
    \pref{eq:parameter_piecewise} for $T=\frac{a^\dagger}{2a^\dagger+1}$. 
    Then, it holds
    \begin{equation*}
        R(\hat F, F^\circ)\lesssim n^{-\frac{2a^\dagger}{2a^\dagger+1}}(\log n)^{\pconst'}\log^{13} V,
    \end{equation*}
    where $\pconst'$ is a constant such that $\pconst'\leq 21+10/\alpha+20\beta/\alpha$.
\end{theorem}
The proof can be found in \pref{app:proof_estimation_piecewise-gamma-smooth}.
We can see that the convergence rate with respect to $n$ matches that of Transformers shown in \citet{takakura2023approximation}.
This indicates that SSMs possess the ability to \emph{select important tokens based on the inputs}, similarly to Transformers.
Moreover, since the estimation error bound depends on $V$ with only poly-log factor, 
if $V=\mathrm{poly}(n)$, the estimation error rate does not change up to poly-log factor.
This also aligns with Transformers and shows that, 
even when estimating functions that depend on long ranges of the input sequence, 
SSMs are as efficient as Transformers.
Overall, we can conclude that SSMs can estimate the function in the piecewise $\gamma$-smooth function class
with the same efficiency as Transformers.

\section{Experiments: Sparsity of Important Tokens}\label{sec:experiment}
\begin{wrapfigure}{r}[0mm]{0.4\linewidth}
    \centering
    \vspace{-4mm}
    \includegraphics[width=0.4\columnwidth]{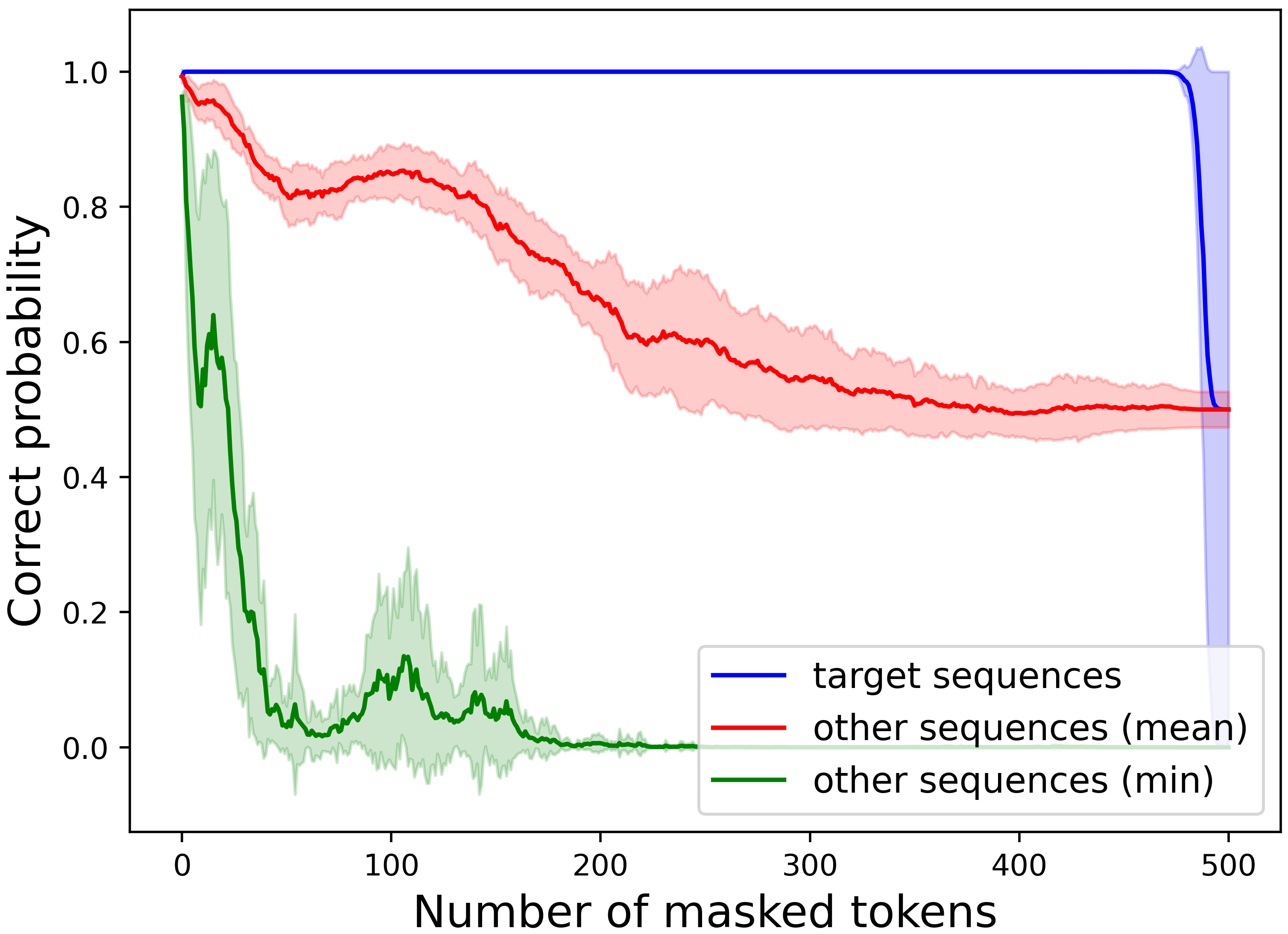}
    \vspace{-5mm}
    \caption{
        The transition of the probability of correct classification when we repeatedly mask the input tokens.
    }
    \vspace{-7mm}
    \label{fig:prob_correct_label}
\end{wrapfigure}
In the tasks we theoretically study, the number of essential tokens in the sequence is small. 
We conducted simple numerical experiments and confirmed that
(i) such sparsity of important tokens also holds in real-world tasks, 
and that (ii) SSMs can indeed extract these important tokens depending on the input.

We use the dataset of DNA base sequences in Genomic Benchmark Dataset~\citep{grevsova2023genomic}. 
The base sequences are treated as sequences where each nucleic acid is considered a single token, 
and we consider a binary classification task based on the role of the base sequences. 
We employ the pre-trained Hyena provided by \citet{nguyen2024hyenadna}, and fine-tune it on the dataset.

To investigate which tokens the model focuses on for classification, 
we selected a correctly classified data sample~(we refer to this as the \emph{target sequence}) and masked unimportant tokens one by one.
More precisely, we repeatedly masked the token that causes the smallest decrease in accuracy when masked.
The change in accuracy is shown by the blue line in \pref{fig:prob_correct_label}.
We can see that, even when most of the tokens are masked, the model is still able to classify correctly. 
This indicates that the important tokens in the input sequence are sparse.

Furthermore, to investigate the ability of SSMs to dynamically extract important tokens,
we examined how the accuracy changes by masking the tokens at the same positions as in the target sequence for other data samples.
The average and minimum changes in accuracy are shown with the red and green lines, respectively. 
The figure shows that although the accuracy was high before masking,
it decreases as more tokens are masked. 
This demonstrates that the positions of important tokens vary across different data samples, 
and that SSMs are able to dynamically extract important tokens based on the input to perform correct classifications.

\section{Conclusion}
In this study, we theoretically investigated the capabilities of SSMs compared to Transformers.
Specifically, we focused on the ability to dynamically extract tokens based on the input, 
which is an essential strength of Transformers, 
and clarified that SSMs combined with FNN layers can emulate such mechanism.
Using this insight, we analyzed three cases: input copying, associative recall, and nonparametric regression, 
and showed that SSMs exhibit performance comparable to Transformers.

\paragraph{Limitations and future work}
We studied approximation and estimation abilities of SSMs to solve the tasks, 
and did not discuss whether SSMs can be optimized efficiently.
Analyzing how the optimization algorithm works for SSMs is a possible direction for future work.
Additionally, we did not investigate the other types of efficient sequence models, 
such as SSMs with data-dependent filters~(like Mamba~\citep{gu2023mamba}) and linear attention~\citep{katharopoulos2020transformers}.
Future research could focus on the comparison of those models and SSMs.
Moreover, we did not consider a specific parameterization known in practical applications of SSMs. 
Specifically, we did not impose constraints such as $\sfA$ being a diagonal matrix in the filter $\sfC\sfA^{t-n}\sfB + \sfD\delta_{t-n}$. 
Instead, as described in Appendix A, we considered cases where $\sfA$ is a block diagonal matrix. 
It remains future work to explore how we can constrain the structure of $\sfA$ to solve the tasks.

\section*{Acknowledgment}
NN was partially supported by JST, ACT-X (JPMJAX24CK).
TS was partially supported by JSPS KAKENHI (24K02905) and JST CREST (JPMJCR2015).

\bibliography{refs}
\bibliographystyle{iclr2025_conference}

\newpage
\appendix
\begin{center}
  {\bf \LARGE ------~Appendix~------}
\end{center}

\section{Extension to Ordinary SSM filter}\label{app:extension_to_ordinary-ssm}
In this section, we describe how to extend our setting to the ordinary SSM filter.
More specifically, our setting with embedding dimension $D$ can be extended to the ordinary SSM filter with embedding dimension $4D$.

For simplicity, we consider the case $D=1$.
We constuct the parameters $\sfA, \sfB, \sfC, \sfD\in\bR^{2\times 2}$ 
to make the filter $h_t\coloneqq\sfC\sfA^t\sfB + \sfD\delta_{t-n}$ same as the filter defined in \pref{sec:definition-ssm}.
Let us set $\sfD=0$, and
\begin{equation*}
    \sfA=\mqty[
        \cos\qty(\frac{2\pi a_{1,1}}{U+1}) & -\sin\qty(\frac{2\pi a_{1,1}}{U+1}) & 0 & 0 \\
        \sin\qty(\frac{2\pi a_{1,1}}{U+1}) & \cos\qty(\frac{2\pi a_{1,1}}{U+1}) & 0 & 0 \\
        0 & 0 & \cos\qty(\frac{2\pi a_{1,2}}{U+1}) & -\sin\qty(\frac{2\pi a_{1,2}}{U+1}) \\
        0 & 0 & \sin\qty(\frac{2\pi a_{1,2}}{U+1}) & \cos\qty(\frac{2\pi a_{1,2}}{U+1})
    ].
\end{equation*}
Then, we have
\begin{equation*}
    \sfA^t=\mqty[
        \cos\qty(\frac{2\pi a_{1,1}t}{U+1}) & -\sin\qty(\frac{2\pi a_{1,1}t}{U+1}) & 0 & 0 \\
        \sin\qty(\frac{2\pi a_{1,1}t}{U+1}) & \cos\qty(\frac{2\pi a_{1,1}t}{U+1}) & 0 & 0 \\
        0 & 0 & \cos\qty(\frac{2\pi a_{1,2}t}{U+1}) & -\sin\qty(\frac{2\pi a_{1,2}t}{U+1}) \\
        0 & 0 & \sin\qty(\frac{2\pi a_{1,2}t}{U+1}) & \cos\qty(\frac{2\pi a_{1,2}t}{U+1})
    ].
\end{equation*}
Therefore, if we set
\begin{equation*}
    \sfB= \mqty[
        1 & 0 & 0 & 0 \\
        0 & 0 & 0 & 0 \\
        0 & 0 & 1 & 0 \\
        0 & 0 & 0 & 0
    ],
    \quad
    \sfC=\mqty[
        c_{1,1} & 0 & 0 & 0 \\
        0       & 0 & 0 & 0 \\
        0       & 0 & 0 & 0 \\
        c_{1,2} & 0 & 0 & 0
    ],
\end{equation*}
then we have
\begin{equation*}
    h_t=\mqty[
        c_{1,1}\cos\qty(\frac{2\pi a_{1,1}t}{U+1}) & 0 & 0 & 0 \\
        0                                        & 0 & 0 & 0 \\
        0                                        & 0 & 0 & 0 \\
        c_{1,2}\sin\qty(\frac{2\pi a_{1,2}t}{U+1}) & 0 & 0 & 0 
    ].
\end{equation*}
Then, if we appropriately set $W^V$ and $W^Q$, this filter can realize the same output with our setting.

While we do not show the estimation ability for the filter above, 
we can easily extend our proof to derive the almost same estimation error bound for it.

\section{Reproduction of the Finite Window Setting\label{app:reproduction_finite_window}}
For mathematical simplicity, we assumed in \pref{sec:definition-ssm} that the convolution of the SSM layers are performed within \emph{finite windows}, 
which can be smaller than the sequence length.
However, in practical applications, the window size is equal to the sequence length. 
In the theorem presented in \pref{sec:copy_and_recall}, 
we consider the case where the window size matches the sequence length, 
and this aligns with realistic problem settings. 
On the other hand, in the nonparametric regression discussed in \pref{sec:approximation}, 
since we consider infinitely long sequences,
it is impossible to set the window size equal to the sequence length.

In the problem setting of \pref{sec:approximation}, 
we can obtain the same output as when using a finite window size by performing some additional calculations with standard SSMs. 
Let $\qty[u_t]_{t\leq 0}$ be the input sequence, 
and $\qty[x_t]_{t\leq 0}, \qty[y_t]_{t\leq 0}$ be the sequence of states and outputs of standard SSMs, respectively.
In other words, for any $t\leq 0$, we have
\begin{align*}
    x_{t+1} &= \sfA x_t + \sfB u_t, \\
    y_t &= \sfC x_t + \sfD u_t,
\end{align*}
and 
\begin{equation*}
    x_t = \sum_{s\leq t} \sfA^{t-s}\sfB u_s.
\end{equation*}
Next, let $\qty[x'_t]_{t\leq 0}, \qty[y'_t]_{t\leq 0}$ be the sequence of states and outputs
when the shifted input sequence $\qty[u_{t-U-1}]_{t\leq 0}$ are fed into the same SSMs.
Then, it holds
\begin{align*}
    x'_{t+1} &= \sfA x'_t + \sfB u_{t-U-1}, \\
    y'_t &= \sfC x'_t + \sfD u_{t-U-1},
\end{align*}
and 
\begin{equation*}
    x'_t = \sum_{s\leq t} \sfA^{t-s}\sfB u_{s-U-1}
    = \sum_{s\leq t-U-1} \sfA^{t-s-U-1}\sfB u_s.
\end{equation*}
Therefore, we have
\begin{equation*}
    x_t - \sfA^{U+1} x'_t = \sum_{s=t-U}^t \sfA^{t-s}\sfB u_s.
\end{equation*}
Let $\qty[y^\circ_t]_{t\leq 0}$ be the output sequence of SSMs with the finite window size of length $U+1$.
Then, we have
\begin{equation*}
    y^\circ_t = \sum_{s=t-U}^t \qty(\sfC\sfA^{t-s}\sfB  + \sfD \delta_{t-s}) u_s
    = \sfC(x_t - \sfA^{U+1} x'_t) + \sfD u_t.
\end{equation*}
Since $\sfA^{U+1}$ can be pre-computed, we can obtain the output of SSMs with the window
by performing recurrent calculations for two SSMs.

\section{Empirical Results on the Synthetic Tasks}\label{app:empirical_results}
In order to empirically demonstrate the dynamic token selection ability of SSMs,
we conducted experiments on the input copying, associative recall, and nonparametric regression.

We consider three types of models:
(i) single-layer SSMs~(SSM + FNN), 
(ii) two-layer SSMs~(SSM + FNN + SSM + FNN), and
(iii) Transformers.
As we proved theoretically in \pref{lem:key-lemma}, 
to exhibit dynamic token selection ability in SSMs,
it is essential that SSMs are preceded and followed by FNN layers.
Therefore, theoretically, (ii) two-layer SSMs are expected to perform similarly to (i) Transformers.
Moreover, since (i) single-layer SSMs do not have the dynamic token selection ability,
they are expected to perform worse than two-layer SSMs and Transformers.

To demonstrate the effectiveness of adding FNN layers to SSMs,
we vary the dimension of the hidden states (i.e., dimension of the states in SSMs).
Then, we observe the changes in the performance.

\begin{figure}[t]
    \centering
    \begin{minipage}[b]{0.49\columnwidth}
        \centering
        \includegraphics[width=0.95\columnwidth]{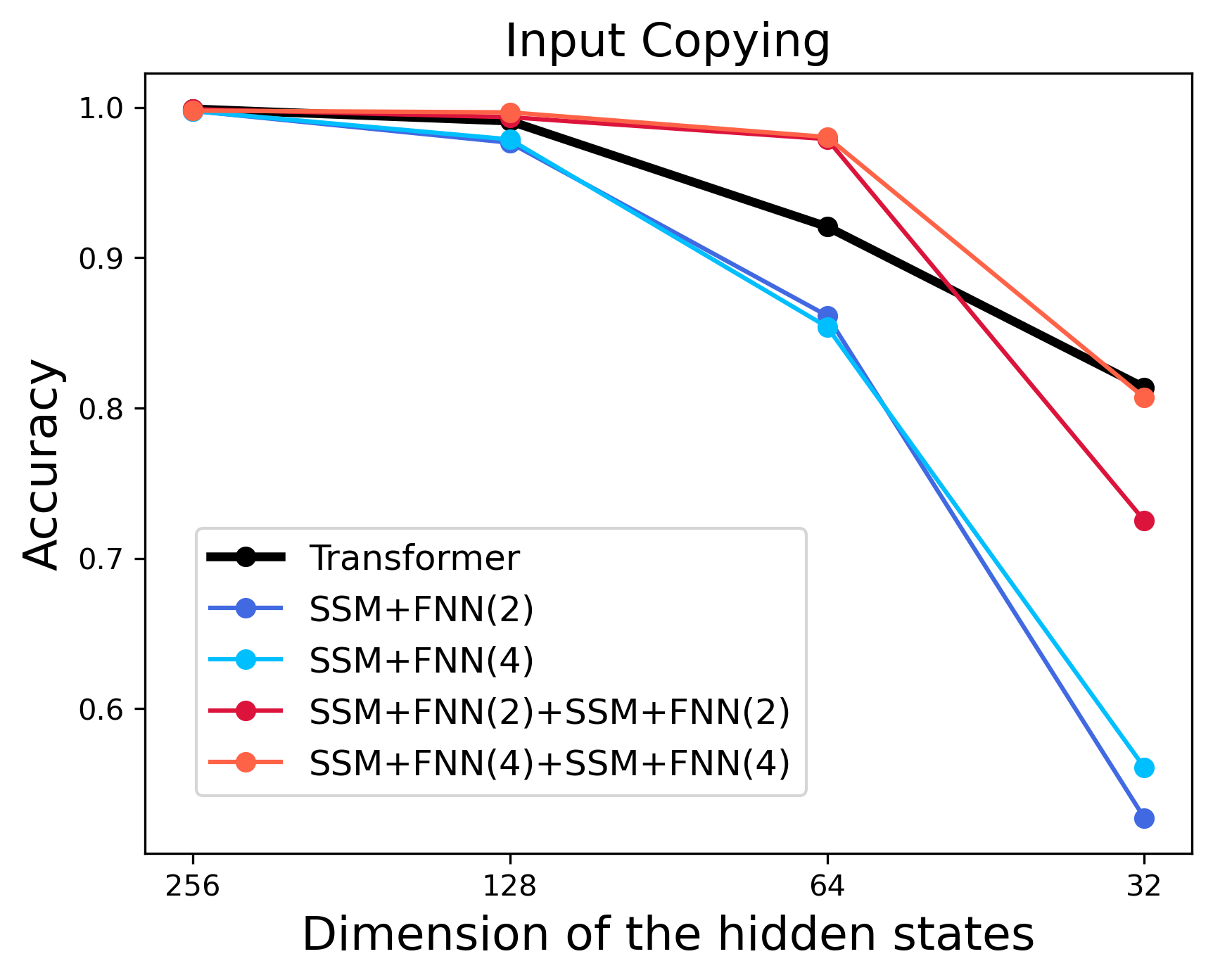}
    \end{minipage}
    \begin{minipage}[b]{0.49\columnwidth}
        \centering
        \includegraphics[width=0.95\columnwidth]{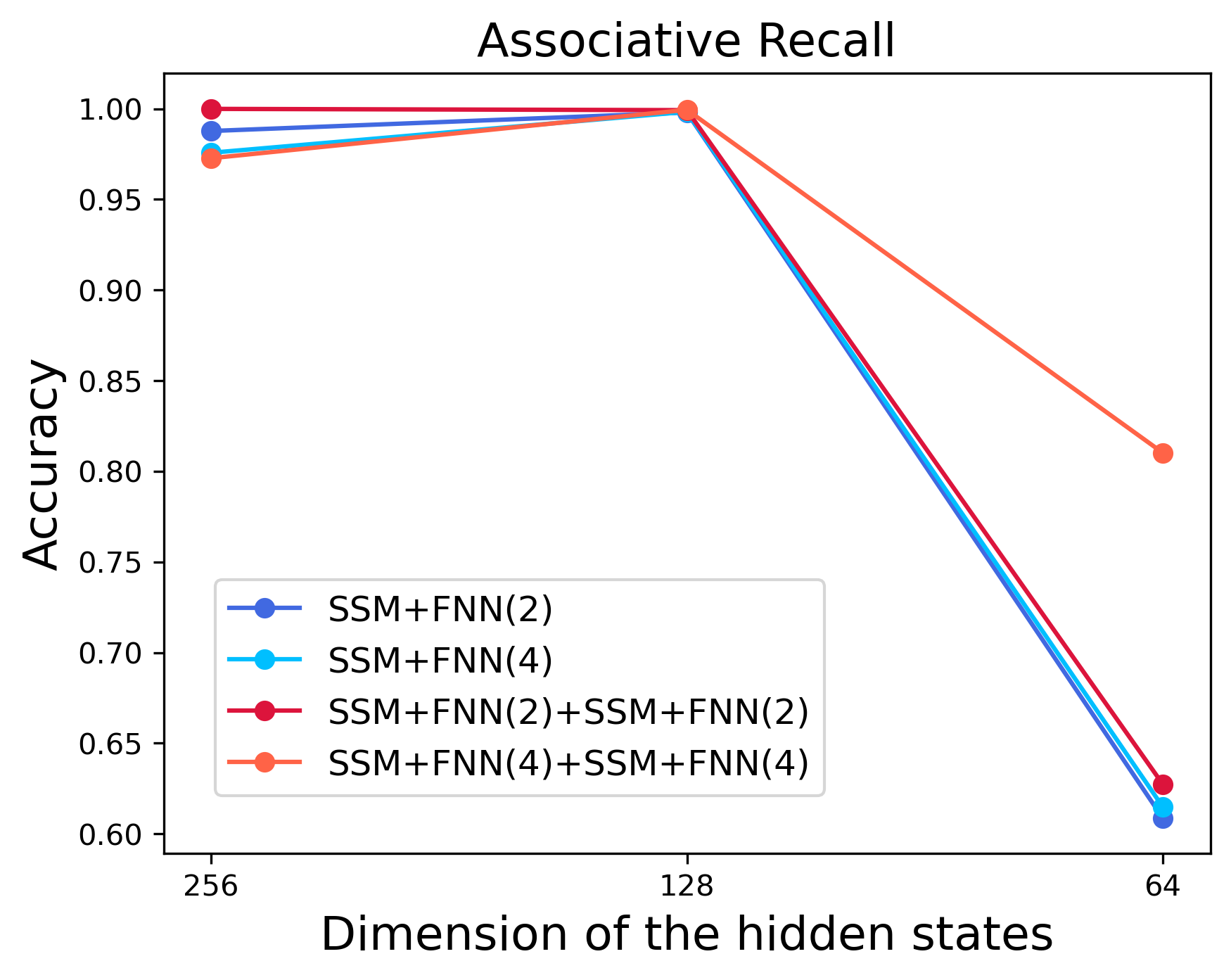}
    \end{minipage}
    \caption{
        Empirical results for input copying task (left) and associative recall task (right).
        We compare the performance of single-layer SSMs~(SSM + FNN), two-layer SSMs~(SSM + FNN + SSM + FNN), and Transformers.
        The number in parentheses following ``FNN'' indicates the depth of the FNN.
        We can see that two-layer SSMs with sufficiently expressive FNN layers exhibit performance comparable to Transformers, 
        and outperform single-layer SSMs.
    }
    \label{fig:empirical_results}
\end{figure}

\paragraph{Input Copying and Associative Recall}
We first conducted experiments on the input copying and associative recall tasks.
Results are shown in \pref{fig:empirical_results}.
We can see that the performance of (ii) two-layer SSMs is better than (i) single-layer SSMs, 
particularly when the dimension of the hidden states is small and the FNN layers have sufficient expressive power.
This means that the alternation of SSM layers and FNN layers is essential to exhibit SSMs' dynamic token selection ability.
Moreover, we observe that the performance of (ii) two-layer SSMs is comparable to (iii) Transformers.
This result empirically supports our theoretical analysis that SSMs combined with FNN layers 
can mimic the dynamic token selection ability of Transformers.

\begin{figure}[t]
    \centering
    \includegraphics[width=0.49\columnwidth]{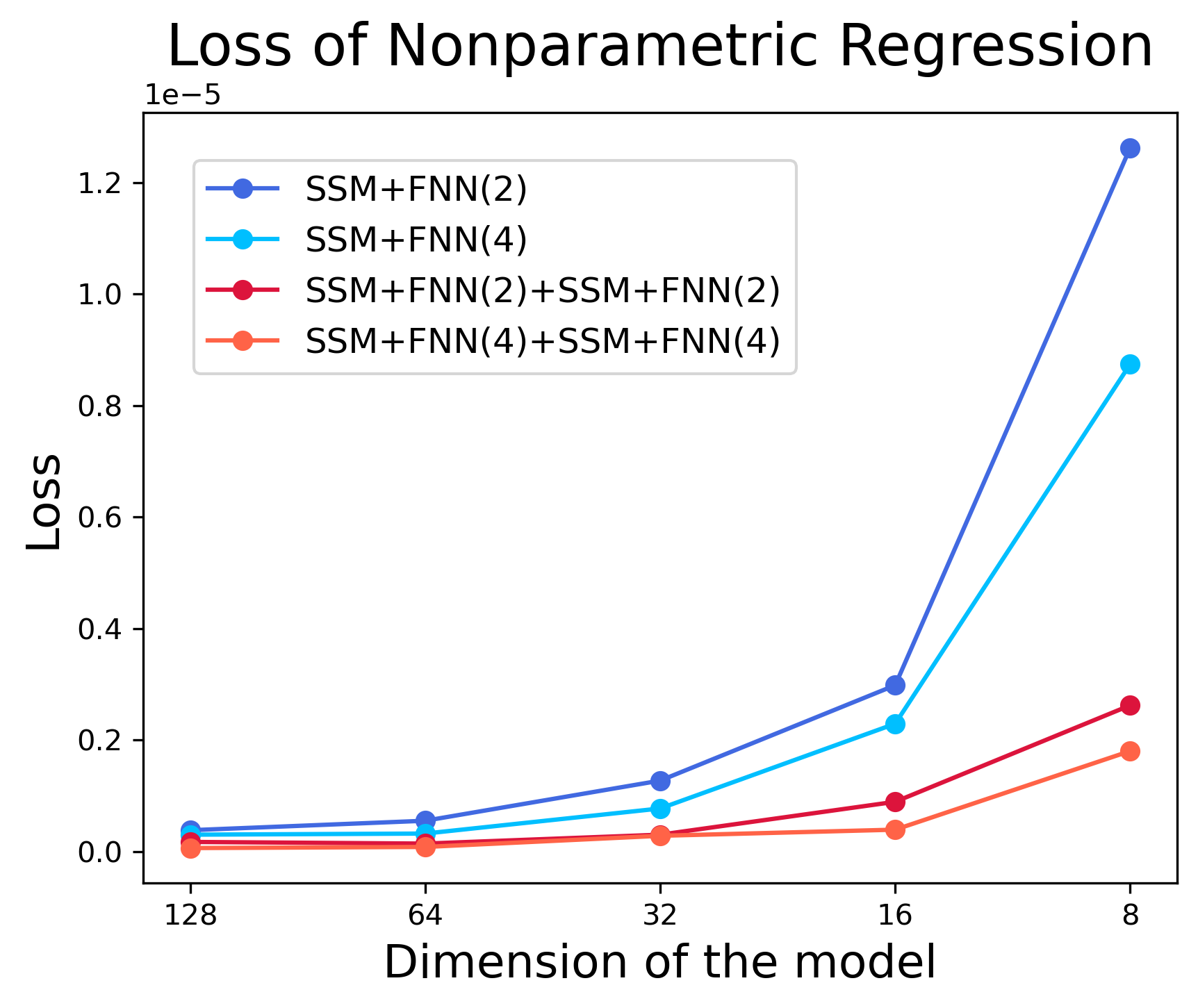}
    \caption{
        Empirical results for nonparametric regression.
        We can see that two-layer SSMs (with FNNs) perform better than one-layer SSMs, 
        similar to input copying and associative recall.
    }
    \label{fig:empirical_results_nonparametric}
\end{figure}

\paragraph{Nonparametric Regression}
We also conducted experiments on the nonparametric regression task.
In this experiment, we consider the following setting: 
let us consider the situation where inputs are given as sequences of words from the set $\scW$. 
For each word $w \in \scW$, we assign a value $r_w \in [0,1]$. 
When the input sequence is $X = [w_1, w_2, \ldots, w_L]$, the output $y$ is given as 
\begin{equation*}
    y = f\qty(\max\set{r_{w_1}, \ldots, r_{w_L}}),
\end{equation*}
where $f$ is a non-linear function. 
The function to be estimated,
\begin{equation*}
    X \mapsto f\qty(\max\set{r_{w_1}, \ldots, r_{w_L}}),
\end{equation*}
can be framed as a piecewise $\gamma$-smooth function.
Indeed, the values assigned to the words can be seen as the importance of each token, 
and the input to the smooth function $f$ is the token with the highest importance.

Results are shown in \pref{fig:empirical_results_nonparametric}.
For a single-layer SSM, the loss increases sharply as the dimension of the hidden state decreases. 
In contrast, when using a two-layer SSM, the rise in loss due to reducing the hidden state dimension is mitigated. 
This suggests that stacking two SSM layers helps improve performance in nonparametric regression.
This observation aligns with our theoretical findings that having FNN layers both before and after the SSM is crucial.

\section{Additional Results on Synthetic Tasks}\label{app:additional_results}
In this section, we provide additional results on the SSMs' ability to solve synthetic tasks.
Specifically, we consider the two tasks: \emph{induction heads} and \emph{selective copying}.

\subsection{Induction Head}

The induction heads~\citep{olsson2022context} is a task to recall the word that appears immediately after a specific keyword. 
For example, if the keyword is $\texttt{x}$ and the input sequence is \seq{a c b d e c}, 
the model have to output the word ``$\texttt{b}$'', which is the word that appears after ``$\texttt{c}$''.

To formalize the task, suppose that input sequences are given by the form ``$x_1$ $x_2$ $\cdots$ $x_V$ $k$'', 
where $x_1, \ldots, x_V, k\in\scW$.
Moreover, we assume that, for each sequence, there exists a unique $j\in[V-1]$ such that $x_j=k$.
Then, the model is required to output $x_{j+1}$.

In this task, the position of the token to extract changes for each input sequence.
Therefore, to solve this task, the model has to change which token to focus based on the input, i.e., 
the dynamic token selection ability is required.

For induction heads, we obtain the following result.
\begin{theorem}\label{thm:induction_heads}
    There exists an SSM $\hat F\in\scS(M, U, D, L, W, S, B)$ with
    \begin{equation*}
        M = 2, \quad U = V, \quad
        D, ~L, ~W, ~S, ~\log B \lesssim \log^5 V \log^8 \abs{\scW}, 
    \end{equation*}
    and decoding layer $\dec$ with $\norm{W_{\dec}}_\infty\leq 1$ such that,
    for any input sequences of induction heads task, the model generates the correct output.  
\end{theorem}
The proof is provided in \pref{app:proof_copy_and_recall}.

\subsection{Selective Copying}

The selective copying~\citep{gu2023mamba} is a variant of the input copying task, where there are some empty tokens between the tokens to copy.
For example, if the input sequence is \seq{\token{BOS} a \token{PAD} \token{PAD} b \token{PAD} c \token{PAD} \token{COPY}}, 
the model have to generate the sequence \seq{a b c} in an auto-regressive manner.

Similarly to the input copying task, since the models have to change the position of the token to copy, 
the dynamic token selection ability is required to solve this task.
Moreover, since the model needs to avoid empty tokens at different positions for each sequence and copy only the necessary tokens. 
Therefore, it requires capturing the context of the sequence, making it a more challenging than input copying task.

To provide a formal definition of the task, suppose that the special tokens 
\token{BOS}, \token{PAD}, and \token{COPY} are included in the vocabulary $\scW$.
Then, let us consider the input sequence of the form 
``\token{BOS} $x_1$ $x_2$ $\cdots$ $x_V$ \token{COPY}'',
where $x_1, \ldots, x_{V}\in\scW\setminus\set{\token{BOS}, \token{COPY}}$.
For each $i \in [V]$, $x_i$ is a random variable that matches \token{PAD} with probability $\alpha~(>0)$.
Otherwise, $x_i$ is generated from the uniform distribution over $\scW\setminus\set{\token{BOS}, \token{COPY}, \token{PAD}}$.
Let $i_1, \ldots, i_K\in[V]$ be the indices such that $x_{i_k}\neq\token{PAD}~(k\in[K])$.
Then, the model is required to output the sequence ``$x_{i_1}$ $\cdots$ $x_{i_K}$''.

For the task described above, we obtain the following result.
\begin{theorem}\label{thm:selective_copying}
    Let $\epsilon>0$.
    Suppose that $\abs{\scW}\gtrsim \log^4 (V/\epsilon)$.
    Then, there exists an SSM $\hat F\in\hat\scS(M, U, D, L, W, S, B)$ with
    \begin{equation*}
        M = 2, \quad U = V, \quad
        D, ~L, ~W, ~S, ~\log B \lesssim \log^{84} V \log^{89} \epsilon^{-1} \log^8\abs{\scW}, 
    \end{equation*}
    and decoding layer $\dec$ with $\norm{W_{\dec}}_\infty\leq 1$ such that,
    the model generates the correct sequence for selective copying task with probability $1-\epsilon$.
\end{theorem}
In this theorem, compared to the case of input copying, 
we additionally assume that $\abs{\scW}\gtrsim \log^4 (V/\epsilon)$.
This is mainly due to the existence of empty tokens in the input sequence, 
and is not due to the problems specific to the SSMs, 
i.e., the same problem would occur in the case of Transformers.
More concretely, in the proof of the theorem, similarly to the proof of \pref{thm:input_copying},
we consider the $n$-gram immediately before the token, 
and construct a network that can find the same $n$-gram (excluding empty tokens) in the input sequence.
Since there are empty tokens in the input sequence in selective copying,
$n$-gram overlapping (excluding empty tokens) can easily occur compared to the case without empty tokens.
In particular, when the vocabulary size $\abs{\scW}$ is small, 
$n$-gram overlapping much more likely to occur, thus it is difficult to copy the sequence correctly.

The proof is provided in \pref{app:proof_copy_and_recall}.

\section{The intuition behind piecewise $\gamma$-smooth functions}\label{app:intuition_piecewise_gamma_smooth}
\begin{figure}
    \centering
    \includegraphics[width=135mm]{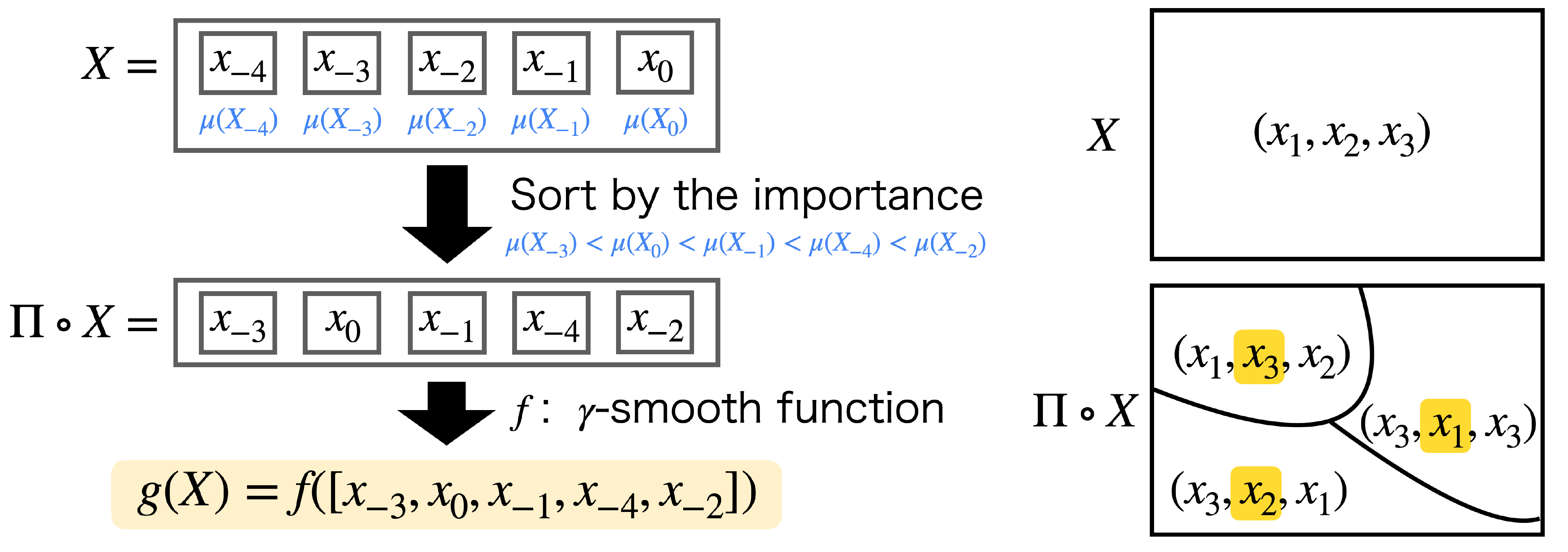}
    \caption{
        Intuitive explanation of piecewise $\gamma$-smooth functions.
        Left: For simplicity, consider a finite-length input sequence $X = \qty[x_{-4}, \ldots, x_{-1}, x_0]$.  
        An importance function $\mu$ takes the sequence as input and determines the importance of the last token.  
        Using the function $\mu$, the importance values of each token, $\mu(X_{-4}), \ldots, \mu(X_0)$, are determined.  
        A permutation map $\Pi$ rearranges the tokens in ascending order of their importance.  
        Finally, the rearranged tokens are fed into a $\gamma$-smooth function $f$.  
        In the sorted sequence, tokens in the right have higher importance, 
        and the function $f$ becomes less smooth for tokens positioned further to the right.  
        Right: An intuitive explanation of how the smoothness of a function changes due to token reordering.  
        As an example, consider a function with a 3-dimensional input vector $X = (x_1, x_2, x_3)$.  
        Assume $f$ is only non-smooth in the direction of the second coordinate, while it is smooth in all other directions.  
        If $X$ is directly fed into $f$, the second coordinate, $x_2$, is always the non-smooth direction.  
        On the other hand, if the coordinates are rearranged by an input-dependent permutation map $\Pi$ 
        before being passed to $f$, the smoothness of the function changes.  
        For example, in the top-left region of the domain, the reordering might cause the second coordinate to correspond to $x_3$, 
        making $x_3$ the non-smooth direction.
    }\label{fig:intuition_piecewise}
\end{figure}

In this chapter, we provide an intuitive explanation of the definition of piecewise $\gamma$-smooth functions 
introduced in \pref{subsec:def-piecewise_gamma-smooth}.  

The goal of our study is to demonstrate that SSMs possess the ability to focus on important tokens 
in the input sequence, similar to Transformers.
To achieve this, we formulate the importance of each token in terms of the smoothness of a function.  
Specifically, let us consider a function $f$ that takes a sequence of tokens 
$X = \qty[\ldots, x_{-2}, x_{-1}, x_{0}]$ as input and outputs $y = f(X)$.  
If the function $f$ is smooth with respect to a token (coordinate) $x_i$, we regard that token as unimportant; 
conversely, if $f$ is not smooth with respect to $x_i$, we consider the token to be important.  
This is because, when $f$ is smooth with respect to $x_i$, 
the value of $f$ does not change significantly with variations in $x_i$, and vice versa.

To quantitatively handle the smoothness of a function, we first consider $\gamma$-smooth functions.  
As described in \pref{subsec:def-piecewise_gamma-smooth}, 
$\gamma$-smooth functions form a class of functions that includes spaces such as mixed-Besov spaces and Sobolev spaces.  
While this class includes a wide variety of functions, once a specific function is fixed, its smoothness is also fixed.  
In other words, the locations of important tokens are independent of the input.  
Thus, even if we demonstrate the capability of SSMs to estimate functions in this class, 
it does not reveal whether SSMs possess the ability to dynamically adjust their focus based on the input, 
i.e., the dynamic token selection ability.

To reflect the dynamic token selection ability of Transformers and SSMs, we introduce piecewise $\gamma$-smooth functions. 
We provide an illustrative explanation in \pref{fig:intuition_piecewise}.  
To make the smoothness of a function dependent on the input, we consider rearranging the input tokens.  
If we rearrange the tokens using a permutation map $\Pi$ based on the input $X$ and apply the function $f$, 
the smoothness changes depending on the input, while the smoothness of the $\gamma$-smooth function $f$ itself is fixed.
We define the composition of the permutation map $\Pi$ and the $\gamma$-smooth function $f$, 
i.e., $f\circ\Pi$, as a piecewise $\gamma$-smooth function.

To define the permutation map $\Pi$, we introduce an importance function $\mu$.  
The function $\mu$ takes a sequence of tokens as input and returns the importance of the last token as a real number.  
Given a sequence $X = \qty[\ldots, x_{-2}, x_{-1}, x_0]$, let $X_{-i} = \qty[\ldots, x_{-i-2}, x_{-i-1}, x_{-i}]$. 
The importance of a token $x_{-i}$ is then computed as $\mu(X_{-i})$, as shown in \pref{fig:intuition_piecewise}. 
The map $\Pi$ rearranges the tokens in ascending order of their importance scores $\mu(X_{-i})$.

As assumed in \pref{ass:nonparametric}, the function $f$ becomes smoother with respect to tokens located farther from position 0.  
Thus, tokens with higher importance as defined by $\mu$ are rearranged to positions closer to position 0 after the permutation.  
Consequently, these tokens are considered more critical for the function $f$.

Thus, the piecewise $\gamma$-smooth function $g =f\circ\Pi$ is defined.  
The following two points are particularly important:  
\begin{itemize}\setlength{\leftskip}{-8mm}
    \item While $\gamma$-smooth functions have fixed smoothness, 
        piecewise $\gamma$-smooth functions have smoothness depending on the input. 
        This is because the tokens are sorted by the order of importance. 
    \item The importance of tokens are determined by the importance function $\mu$.
        If a token has high importance, it is a significant token for the function $f$.
\end{itemize}  

\begin{figure}
    \centering
    \includegraphics[width=135mm]{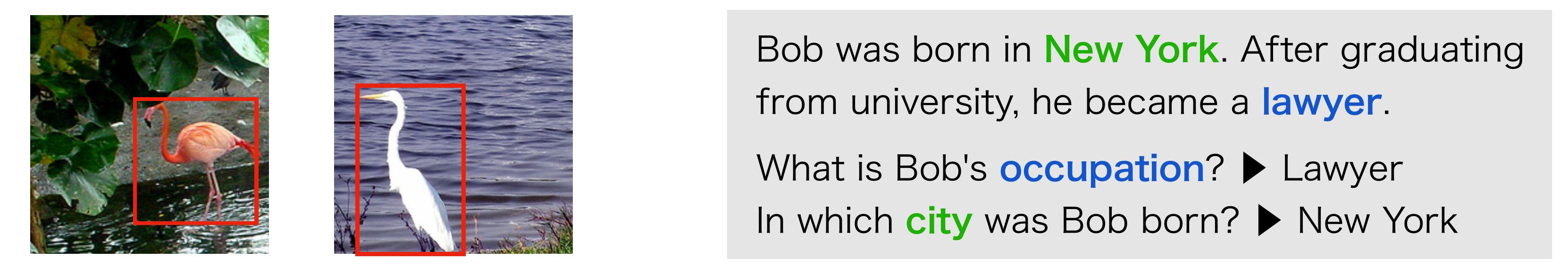}
    \caption{
        Real-world tasks where piecewise $\gamma$-smooth functions can be applied.
        Left: There are a different type of bird in each of the two images.
        The images are taken from ImageNet~\citep{deng2009imagenet}.
        When considering a task of classifying these two types of birds,    
        the important region is only the part containing the bird, highlighted by the red box. 
        By defining an importance function that assigns larger values to this region, 
        the task can be framed within the framework of piecewise $\gamma$-smooth functions.
        Right: A passage and two related questions are given.
        Depending on the question, the important parts of the passage are different. 
        Let us define the importance function that assigns larger values to the relevant parts of the passage based on the given question.
        Then, this problem setting can also be framed within the framework of piecewise $\gamma$-smooth functions.
    }\label{fig:practical_example_piecewise}
\end{figure}

In \pref{fig:practical_example_piecewise}, we present concrete examples of real-world problems where piecewise $\gamma$-smooth functions are applicable.

In the example on the left, there are two images with birds.  
We consider the task of predicting the species of the bird in each image.  
For this task, only the regions containing the bird are relevant, while the other parts of the images are not essential.  
Since the locations of the birds differ between the two images, the important regions vary depending on the input.  
By defining an importance function that assigns higher importance to tokens corresponding to the regions containing the bird, 
this problem can be framed within the framework of piecewise $\gamma$-smooth functions.

In the example on the right, a passage and related questions are provided.  
We consider the task of inferring appropriate answers to the questions based on the passage.  
For the first question, which asks about Bob's profession, the focus should be on the blue-highlighted part of the passage.  
For the second question, which asks about Bob's hometown, the focus shifts to the green-highlighted part.  
We can define an importance function that takes the passage and the question as input 
and assigns higher values to tokens corresponding to the relevant parts of the passage (e.g., the blue part for the first question and the green part for the second question).  
Then, we can interpret the problem within the framework of piecewise $\gamma$-smooth functions.  

\section{Auxiliary Lemmas}\label{app:auxiliary-lemma}
In the following discussion, to simplify the notation, we define the function class $\Psi'(D, B)$ by 
\begin{equation*}
    \Psi'(D, B)
    \coloneqq\qty{t\mapsto \qty[c_{1,k}\cos\qty(2\pi a_{1,k}t)+c_{2,k}\sin\qty(2\pi a_{2,k}t)]_{k=1}^D
        \relmiddle| \norm{c}_{\infty}\leq B, \norm{a}_{\infty}\leq B. 
    }.
\end{equation*}

First, we prove the following lemma, which states the properties of the $\Softmax$ and multi-variate Swish function.
\begin{lemma}[Properties of $\Softmax$ and Multi-variate Swish function]\label{lem:err_softmax_hardmax}
    Fix $\theta\in\bR^d$.
    Assume that there exists an index $i^\ast\in[d]$ and $\delta>0$ such that $\theta_{i^\ast}>\theta_i+\delta$ for all $i\neq i^\ast$.
    Then, the following two statements hold:
    \begin{enumerate}
        \item (Lemma C.1 of \citet{takakura2023approximation}) 
        It holds
        \begin{equation*}
            \sum_{i=1}^d \abs{\Softmax(\theta)_i-\delta_{i, i^\ast}}\leq2d\exp(-\delta).
        \end{equation*}
        \item For any $x\in[0, 1]^d$, it holds
        \begin{equation*}
            \abs{\sum_{i=1}^d \Softmax(\theta)_i\cdot x_i - x_{i^\ast}} \leq 2d^2\exp(-\delta).
        \end{equation*}
    \end{enumerate}
\end{lemma}
\begin{proof}
    We prove the second one. Using the first argument, we have
    \begin{align*}
        &\abs{\sum_{i=1}^d \Softmax(\theta)_i\cdot x_i - x_{i^\ast}}\\
        &\quad\leq \abs{\sum_{i\neq i^\ast} \Softmax(\theta)_i\cdot x_i + \qty(\Softmax(\theta)_{i^\ast}\cdot x_{i^\ast} - x_{i^\ast})}\\
        &\quad=    \abs{\sum_{i\neq i^\ast} \Softmax(\theta)_i\cdot x_i + \qty(\Softmax(\theta)_{i^\ast}\cdot x_{i^\ast} - \delta_{i^\ast, i^\ast}x_{i^\ast})}\\
        &\quad\leq \sum_{i\neq i^\ast} \abs{\Softmax(\theta)_i- \delta_{i, i^\ast}}\cdot x_i + \abs{\Softmax(\theta)_{i^\ast}- \delta_{i^\ast, i^\ast}}\cdot x_{i^\ast}\\
        &\quad\leq \sum_{i=1}^d \abs{\Softmax(\theta)_i-\delta_{i, i^\ast}}\cdot x_i\\
        &\quad\leq 2d^2\exp(-\delta), 
    \end{align*}
    which completes the proof.
\end{proof}

The following is a famous fact that there exists a neural network that realize the clipping function.
\begin{lemma}\label{lem:clip_via_fnn}
    Let $a, b\in\bR$. 
    There exists a neural neural network $f_{\clip}\in\Psi(L, W, S, B)$ with
    \begin{equation*}
        L\lesssim 1,\quad W\lesssim 1,\quad S\lesssim 1,\quad B\lesssim \abs{a} + \abs{b},
    \end{equation*}
    such that, for any $x\in\bR$, it holds
    \begin{equation*}
        f_{\clip}(x) = \begin{cases}
            a & \quad\text{if $x\leq a$}, \\
            x & \quad\text{if $a\leq x\leq b$}, \\
            b & \quad\text{if $b\leq x$}.
        \end{cases}
    \end{equation*}
\end{lemma}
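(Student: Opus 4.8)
The plan is to write $f_{\clip}$ down explicitly as a constant-size ReLU network rather than to invoke any general approximation machinery. I would start from the elementary identity
\begin{equation*}
    f_{\clip}(x) = a + \relu(x-a) - \relu(x-b),
\end{equation*}
which holds for all $x\in\bR$ whenever $a\le b$ (if $a>b$ the map in the statement is constant, so there is nothing to do, and I assume $a\le b$ throughout). The first step is to verify this identity by the usual three-case argument: for $x\le a$ both rectified terms vanish, leaving $a$; for $a\le x\le b$ only $\relu(x-a)=x-a$ is active, giving $x$; and for $x\ge b$ the two terms combine to $(x-a)-(x-b)=b-a$, giving $b$.

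Next I would package the right-hand side as an element of $\Psi(L,W,S,B)$. The first affine layer maps $x\in\bR$ to $[\,x-a,\ x-b\,]^\top\in\bR^2$, i.e. $A_1=[1,\ 1]^\top$, $b_1=[-a,\ -b]^\top$; after the coordinatewise $\relu$, the output affine layer $A_2=[1,\ -1]$, $b_2=a$ produces $a+\relu(x-a)-\relu(x-b)$. Reading off the parameters of this two-layer network: the depth is $2$, the width is $2$, the total number of nonzero entries satisfies $\norm{A_1}_0+\norm{b_1}_0+\norm{A_2}_0+\norm{b_2}_0\le 7$, and every weight has absolute value at most $\max\{1,|a|,|b|\}$. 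Hence $f_{\clip}\in\Psi(L,W,S,B)$ with $L,W,S\lesssim 1$ and $B\lesssim|a|+|b|$, which is exactly the claim.

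The main obstacle --- insofar as there is one --- is purely bookkeeping: checking the three cases of the identity and counting nonzeros and weight magnitudes. The one genuinely delicate point worth a sentence is that the $\pm1$ weights force $B$ to be bounded below by a constant, so the stated bound $B\lesssim|a|+|b|$ is the natural one precisely in the regime $|a|+|b|\gtrsim 1$; this covers the only use of the lemma in the paper, namely $\clip_R$ with $R\ge 1$. Everything else is immediate.
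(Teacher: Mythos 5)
Your construction is correct and is the standard two-layer ReLU realization of the clip map; the paper itself states this lemma as a ``famous fact'' and does not supply a proof, so there is no authorial argument to compare against. Your case check of the identity $a+\relu(x-a)-\relu(x-b)$ is right (assuming $a\le b$, which is the only coherent reading of the statement), your parameter counts $L=2$, $W=2$, $S\le 7$, $B\le\max\{1,|a|,|b|\}$ are right, and your observation about the degenerate regime $|a|+|b|\ll 1$ is a genuine (if minor) refinement: as literally written the bound $B\lesssim|a|+|b|$ cannot hold when $|a|+|b|$ is tiny, since the $\pm1$ output weights force $B\ge 1$, so the lemma should be read with an implicit $|a|+|b|\gtrsim 1$ (which is all the paper ever uses, via $\clip_R$ with fixed $R$).
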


The following lemma shows the approximation ability of FNN for some elementary functions.
\begin{lemma}[Lemma F.6, Lemma F.7, Lemma F.12 of \citet{oko2023diffusion}, Corollary 4.2 of \citet{perekrestenko2018universal}]\label{lem:approx_elementary}
    The following statements hold:
    \begin{enumerate}\setlength{\leftskip}{-3mm}
        \item[(mult)] Let $d\geq 2, C\geq 1, \epsilon_{\mathrm{error}}\in(0, 1]$.
        For any $\epsilon>0$, there exists a neural network $f_{\mult}\in\Psi(L, W, S, B)$ with
        \begin{equation*}
            L\lesssim(\log\epsilon^{-1}+d\log C)\cdot \log d,\quad 
            W\lesssim d,\quad 
            S\lesssim d\log\epsilon^{-1} + d\log C,\quad 
            \log B\lesssim d\log C,
        \end{equation*}
        such that, for any $x\in[0, C]^d$ and $x\in\bR^d$ with $\norm{x-x'}_\infty\leq \epsilon_{\mathrm{error}}$, it holds
        \begin{equation*}
            \abs{f_{\mult}(x')-\prod_{i=1}^d x_i}\leq\epsilon+d\cdot C^d\cdot \epsilon_{\mathrm{error}}.
        \end{equation*}
        \item[(rec)] For any $\epsilon\in(0, 1)$, there exists $f_{\rec}\in\Psi(L, W, S, B)$ with
        \begin{equation*}
            L\lesssim\log^2\epsilon^{-1},\quad W\lesssim\log^3\epsilon^{-1},\quad 
            S\lesssim\log^4\epsilon^{-1},\quad \log B\lesssim \log\epsilon^{-1},
        \end{equation*}
        such that, for any $x\in[\epsilon, \epsilon^{-1}]$ and $x'\in\bR$, it holds
        \begin{equation*}
            \abs{f_{\rec}(x')-\frac1x}\leq\epsilon+\frac{\abs{x'-x}}{\epsilon^2}.
        \end{equation*}
        \item[(exp)] For any $\epsilon>0$, there exists $f_{\exp}\in\Psi(L, W, S, B)$ with
        \begin{equation*}
            L\lesssim\log^2\epsilon^{-1},\quad W\lesssim\log\epsilon^{-1},\quad 
            S\lesssim\log^2\epsilon^{-1},\quad \log B\lesssim\log^2\epsilon^{-1},
        \end{equation*}
        such that, for any $x, x'\geq 0$, it holds
        \begin{equation*}
            \abs{f_{\exp}(x')-\exp(x)}\leq\epsilon+\abs{x'-x}.
        \end{equation*}
        \item[(cos)] For any $\epsilon>0, a>0, b\in\bR, C\geq 1$, there exists $f_{\cos}\in\Psi(L, W, S, B)$ with
        \begin{align*}
            & L\lesssim\log^2\epsilon^{-1} + \log(aD+b),\quad W\lesssim 1,\\
            & S\lesssim\log^2\epsilon^{-1} + \log(aD+b),\quad \log B\lesssim \max\qty{1, \log \abs{b/a}},
        \end{align*}
        such that, for any $x\in[-D, D]$, it holds
        \begin{equation*}
            \abs{f_{\cos}(x)-\cos(ax+b)}\leq\epsilon.
        \end{equation*}
    \end{enumerate}
\end{lemma}

We also use the following lemma, which gives the approximation error of $(x, y)\mapsto y/x$.
\begin{lemma}\label{lem:approx_division}
    For any $\epsilon\in(0, 1]$, there exists a neural network $\phi\in\Psi(L, W, S, B)$ with
    \begin{equation*}
        L\lesssim\log^2\epsilon^{-1},\quad W\lesssim\log^3\epsilon^{-1},\quad 
        S\lesssim\log^4\epsilon^{-1},\quad \log B\lesssim \log\epsilon^{-1},
    \end{equation*}
    such that, for any $x, y, x', y'\in\bR$ with 
    $x\in[\epsilon, \epsilon^{-1}], y\in[0, \epsilon^{-1}]$, it holds
    \begin{equation*}
        \abs{\phi(x', y')-\frac{y}{x}}\leq\epsilon + \frac{\abs{x-x'}}{\epsilon^8} + \frac{\abs{y-y'}}{\epsilon^2}.
    \end{equation*}
\end{lemma}
\begin{proof}
    From (rec) of \pref{lem:approx_elementary}, there exists a neural network $\phi_1\in\Psi(L, W, S, B)$ with
    \begin{equation*}
        L\lesssim\log^2\epsilon^{-1},\quad W\lesssim\log^3\epsilon^{-1},\quad 
        S\lesssim\log^4\epsilon^{-1},\quad \log B\lesssim \log\epsilon^{-1},
    \end{equation*}
    such that, for any $x\in[\epsilon, \epsilon^{-1}]\subseteq[\epsilon^3, \epsilon^{-3}]$ and $x'\in\bR$, it holds
    \begin{equation*}
        \abs{\phi_1(x')-\frac1x}\leq\epsilon^3 + \frac{\abs{x-x'}}{\epsilon^6}.
    \end{equation*}
    Next, using (mult) of \pref{lem:approx_elementary}, there exists a neural network $\phi_2\in\Psi(L, W, S, B)$ with
    \begin{equation*}
        L\lesssim \log\epsilon^{-1},\quad 
        W\lesssim 1,\quad 
        S\lesssim \log\epsilon^{-1},\quad 
        \log B\lesssim  \log\epsilon^{-1},
    \end{equation*}
    such that, for any $y, z\in[0, \epsilon^{-1}]$ and $y', z'\in\bR$, it holds
    \begin{equation*}
        \abs{\phi_2\qty(z', y') - zy}\lesssim \epsilon + \frac{\abs{z-z'}+\abs{y-y'}}{\epsilon^2}.
    \end{equation*}
    Therefore, we have
    \begin{align*}
        \abs{\phi_2(\phi_1(x'), y') - \frac{y}{x}}
        &\lesssim \epsilon + \frac{1}{\epsilon^2}\qty(\abs{\phi_1(x')-\frac{1}{x}} + \abs{y - y'})\\
        &\lesssim \epsilon + \frac{\abs{x-x'}}{\epsilon^8} + \frac{\abs{y-y'}}{\epsilon^2}.
    \end{align*}
\end{proof}

Lastly, we state the following lemma, which shows that 
the Gaussian kernel can be approximated expressed as the sum of the product of neural networks.
\begin{lemma}\label{lem:approx_gaussian_kernel}
    There exists $N\in\bN$ and FNNs $\phi_n, \phi'_n, \phi''_n\in\Psi_{1, 1}(L, W, S, B), ~\psi_n, \psi'_n\in\Psi'(1, B)~(n=1, \ldots, N)$ with
    \begin{align*}
        & N\lesssim \log^2\epsilon^{-1}, \\
        & L\lesssim \log^4\epsilon^{-1}\log^2 \kappa, \quad 
        W\lesssim 1, \quad 
        S\lesssim \log^4\epsilon^{-1}\log^2 \kappa, \quad 
        \log B\lesssim\log^2\epsilon^{-1}\log\kappa,\\
        & L'=1, \quad W'\lesssim \log^2\epsilon^{-1}, \quad S'\lesssim \log^2\epsilon^{-1}, 
    \end{align*}
    such that, 
    \begin{itemize}
        \item for any $t, x\in[-1, 1]$, it holds
                \begin{equation*}
                    \abs{\exp(-\kappa\cdot\sin^2\qty(\frac\pi2(t-x))) - \sum_{n=1}^N\psi_n(t)\phi_n(x)}\lesssim \epsilon, 
                \end{equation*}
        \item for any $x, y\in[-1, 1]$, it holds
                \begin{equation*}
                    \abs{\exp(-\kappa\cdot\sin^2\qty(\frac\pi2(x-y))) - \sum_{n=1}^N\phi'_n(x)\phi''_n(y)}\lesssim \epsilon, 
                \end{equation*}
        \item for any $t\in[-1, 1]$, it holds
                \begin{equation*}
                    \abs{\exp(-\kappa\cdot\sin^2\qty(\frac{\pi t}2)) - \sum_{n=1}^N\psi'_n(t)}\lesssim \epsilon.
                \end{equation*}
    \end{itemize}
\end{lemma}
\begin{proof}
    The first part of the proof is inspired by Lemma F.12 of \citet{oko2023diffusion}.
    Let us set $A=\log 3\epsilon^{-1}$. The Taylor expansion of $\exp$ shows that, for any $x\in[0, A]$, it holds
    \begin{equation*}
        \abs{\exp(-x) - \sum_{n=0}^{N-1}\frac{(-1)^n}{n!}x^n}\leq \frac{A^N}{N!}.
    \end{equation*}
    Additionally, we can evaluate the right-hand side as $A^k/k!\leq \qty(\rme A/k)^k$.
    Therefore, if we set $N=\max\qty{2\rme A, \lceil\log_2 3\epsilon^{-1}\rceil}$, 
    the error can be bounded by $\epsilon/3$.
    Moreover, for $x>A$, we have
    \begin{align*}
        \abs{\exp(-x) - \sum_{n=0}^{N-1}\frac{(-1)^n}{n!}x^n}
        &\leq \abs{\exp(-x) - \exp(-A)} + \abs{\exp(-A) - \sum_{n=0}^{N-1}\frac{(-1)^n}{n!}x^n}\\
        &\leq \frac{\epsilon}{3}+\frac{2\epsilon}{3}= \epsilon.
    \end{align*}
    Next, let us approximate $\displaystyle\sum_{n=0}^{N-1}\frac{(-\kappa)^n}{n!}\sin^{2n}\qty(\frac\pi2(t-x))$.
    We use the fact that
    \begin{align*}
        \sin^{2n}(x)
        &= \qty(\frac{\rme^{ix}-\rme^{-ix}}{2})^{2n}
        = \frac1{2^{2n}}\sum_{k=0}^{2n}\binom{2n}{k}(-1)^k\rme^{i(2k-2n)x}\\
        &= \frac{(-1)^n}{2^{2n}}\binom{2n}{n} + \sum_{k\geq n+1}\frac{(-1)^k}{2^{2n-1}}\binom{2n}{k}\cos\qty((2k-2n)x), 
    \end{align*}
    where $c_n=1$ if $n$ is even and $c_n=0$ if $n$ is odd.
    Thus, we have
    \begin{align*}
        &\sum_{n=0}^{N-1}\frac{(-\kappa)^n}{n!}\sin^{2n}\qty(\frac\pi2(t-x))\\
        &\quad= \sum_{n=0}^{N-1}\frac{\kappa^n}{n!2^{2n}}\binom{2n}{n} + \sum_{n=0}^{N-1}\sum_{k\geq n+1}\frac{(-\kappa)^n}{n!}\frac{1}{2^{2n-1}}\binom{2n}{k}\cos\qty(\pi(k-n)(t-x))\\
        &\quad= \sum_{n=0}^{N-1}\frac{\kappa^n}{n!2^{2n}}\binom{2n}{n} 
        + \sum_{n=0}^{N-1}\sum_{k\geq n+1}\frac{(-\kappa)^n}{n!}\frac{1}{2^{2n-1}}\binom{2n}{k}\Biggl(
            \cos\qty(\pi(k-n)t)\cos\qty(\pi(k-n)x)\\
        &\hspace{80mm}+\sin\qty(\pi(k-n)t)\sin\qty(\pi(k-n)x)\biggr), 
    \end{align*}
    which is decomposed into the sum of products of functions of $t$ and $x$.
    Since
    \begin{equation*}
        \abs{\frac{(-\kappa)^n}{n!}\frac{1}{2^{2n-1}}\binom{2n}{k}}
        \leq \frac{\kappa^n}{n!2^n}\frac{(2n)!}{k!(2n-k)!} 
        \leq \frac{\kappa^n}{n!2^n}\frac{2^n (n!)^2}{(\max(k, 2n-k))!}
        =  \frac{\kappa^n}{n!2^n}\frac{2^n (n!)^2}{n!}
        \leq \kappa^N,
    \end{equation*}
    we can see that, there exists $C_0, C_{n, k}~(n=0, \ldots, N-1; k=0, \ldots, N-1)$ with
    \begin{equation*}
        C_0\leq \kappa^N, \quad C_{n, k}\leq \kappa^N,
    \end{equation*}
    such that
    \begin{align*}
        \sum_{n=0}^{N-1}\frac{(-\kappa)^n}{n!}\sin^{2n}\qty(\frac\pi2(t-x))
        &= C_0+\sum_{n=0}^{N-1}\sum_{k=0}^{N-1} C_{n, k}\bigl(\cos\qty(\pi(k-n)t)\cos\qty(\pi(k-n)x)\\
        &\hspace{40mm}+\sin\qty(\pi(k-n)t)\sin\qty(\pi(k-n)x)\bigr).
    \end{align*}

    The second item to be proved is already obtained setting $x=0$.

    To prove the first item, we approximate each term using neural networks.
    \pref{lem:approx_elementary} implies that, for any $n, k$ and $\epsilon>0$, 
    there exists a neural network $\phi_{1, n, k}, \phi_{2, n, k}\in\Psi_{1, 1}(L, W, S, B)$ with
    \begin{equation*}
        L\lesssim N^2\log^2\kappa+\log^2\epsilon^{-1}, \quad 
        W\lesssim 1, \quad 
        S\lesssim N^2\log^2\kappa+\log^2\epsilon^{-1}, \quad 
        \log B\lesssim 1,
    \end{equation*}
    such that
    \begin{equation*}
        \abs{\cos(\pi (k-n) x) - \phi_{1, n, k}(x)}\leq \epsilon/(N^2\kappa^N), \quad 
        \abs{\sin(\pi (k-n) x) - \phi_{2, n, k}(x)}\leq \epsilon/(N^2\kappa^N).
    \end{equation*}
    
    Then, if we approximate $\exp(-\kappa\cdot\cos(2\pi(t-x)))$ by 
    \begin{equation*}
        C_0+\sum_{n=0}^{N-1}\sum_{k=0}^{N-1}C_{n, k}\qty(
            \cos\qty(\pi(k-n)t)\phi_{1, n, k}(x)
            +\sin\qty(\pi(k-n)t)\phi_{2, n, k}(x)
        ), 
    \end{equation*}
    the error can be bounded by
    \begin{equation*}
        \epsilon+\sum_{n=0}^{N-1}\sum_{k=0}^{N-1}C_{n,k}\cdot\frac{2\epsilon}{N^2\kappa^N}
        \leq \epsilon+N^2\kappa^N\cdot\frac{2\epsilon}{N^2\kappa^N}\leq 3\epsilon,
    \end{equation*}
    which gives the desired result.

    For the third item, if we utilize $\phi_{1, n, k}$ and $\phi_{2, n, k}$ to approximate
    $\cos(\pi (k-n) y)$ and $\sin(\pi (k-n) y)$, respectively, we can obtain the desired result.
\end{proof}

The following lemma is the multi-dimensional version of \pref{lem:approx_gaussian_kernel}.
\begin{lemma}\label{lem:approx_gaussian_multi_dim}
    There exists $N\in\bN$ and FNNs $\phi_{n,i}, \phi'_{n,i}\in\Psi_{d, 1}(L, W, S, B)$ with
    \begin{align*}
        & N\lesssim \log^2\epsilon^{-1} + \log^2 d, \\
        & L\lesssim d^4\log^4\epsilon^{-1}\log^2\kappa, \quad 
        W\lesssim d, \quad 
        S\lesssim d^4\log^4\epsilon^{-1}\log^2\kappa, \quad 
        \log B\lesssim d\log^2\epsilon^{-1}\log\kappa,
    \end{align*}
    such that, for any $x, y\in[-1, 1]^d$, it holds
    \begin{equation*}
        \abs{\exp(-\kappa\cdot\sum_{i=1}^n\sin^2\qty(\frac\pi2(x_i-y_i))) - \sum_{n=1}^N\phi_n(x)\phi'_n(y)}\lesssim \epsilon.
    \end{equation*}
\end{lemma}
\begin{proof}
    The proof of \pref{lem:approx_gaussian_kernel} shows that, 
    for $N\sim \log^2\epsilon^{-1} + \log^2 d$, 
    there exists $\phi^\ast_1, \ldots, \phi^\ast_N$ and $C_1, \cdots, C_N$ with $\abs{C_n}\leq \kappa^N$ such that
    \begin{equation*}
        \abs{\exp(-\kappa\cdot\sin^2\qty(\frac\pi2(x_i-y_i)))
        - \sum_{n=1}^{N} C_n \phi^\ast_n(x_i)\phi^\ast_n(y_i)}\lesssim \frac{\epsilon}{d}, 
    \end{equation*}
    where $\phi^\ast_n$ is a function represented as $\sin(a_n x + b_n)$ with some $a_n, b_n\in\bR$.
    Therefore, 
    \begin{equation*}
        \prod_{i=1}^d\qty(\sum_{n=1}^{N} C_n \phi^\ast_n(x_i)\phi^\ast_n(y_i))
        = \sum_{n_1, \ldots, n_d} C_{n_1}\cdots C_{n_d}\phi^\ast_{n_1}(x_1)\cdots
        \phi^\ast_{n_d}(x_d)\phi^\ast_{n_1}(y_1)\cdots\phi^\ast_{n_d}(y_d), 
    \end{equation*}
    which have $N^d$ terms, is an approximation of $\displaystyle
        \exp(-\kappa\cdot\sum_{i=1}^n\sin^2\qty(\frac\pi2(x_i-y_i)))
    $, and the error is bounded as
    \begin{equation*}
        \abs{\exp(-\kappa\cdot\sum_{i=1}^n\sin^2\qty(\frac\pi2(x_i-y_i)))
         - \prod_{i=1}^d\qty(\sum_{n=1}^{N} C_n \phi^\ast_n(x_i)\phi^\ast_n(y_i))}
        \leq d\cdot\frac{\epsilon}{d}=\epsilon.
    \end{equation*}

    Using (cos) of \pref{lem:approx_elementary}, we can see that, for any $n\in[N]$ and $\epsilon>0$,
    there exists a neural network $\psi_{1,n}\in\Psi_{d, 1}(L, W, S, B)$ with
    \begin{equation*}
        L\lesssim d^4 \log^4\epsilon^{-1}\log\kappa, \quad 
        W\lesssim 1, \quad 
        S\lesssim d^4 \log^4\epsilon^{-1}\log\kappa, \quad 
        \log B\lesssim \log^2\epsilon^{-1}\log\kappa,
    \end{equation*}
    such that
    \begin{equation*}
        \abs{\phi^\ast_n(x) - \psi_{1,n}(x)}\leq \frac{\epsilon}{d\kappa^{dN}N^d}.
    \end{equation*}
    Moreover, using (mult) of \pref{lem:approx_elementary}, we can see that
    there exists a neural network $\psi_{2, n_1, \ldots, n_d}, \psi_{3}\in\Psi(L, W, S, B)$ with
    \begin{equation*}
        L\lesssim d^2 \log^2\epsilon^{-1}\log^2\kappa, \quad 
        W\lesssim d, \quad 
        S\lesssim d^3 \log^2\epsilon^{-1}\log^2\kappa, \quad 
        \log B\lesssim d,
    \end{equation*}
    such that
    \begin{align*}
        \abs{\psi_2(x_1, \ldots, x_d) - C_{n_1}\cdots C_{n_d}x_1 x_2 \cdots x_d}
            &\lesssim \frac{\epsilon}{\kappa^{dN}} \\
        \abs{\psi_3(x_1, \ldots, x_d) - x_1 x_2 \cdots x_d} &\lesssim \frac{\epsilon}{\kappa^{dN}}.
    \end{align*} 
    for any $x\in\bR$ with $\abs{x}\leq 1$.
    for any $x_1, \ldots, x_d\in\bR$ with $\abs{x_i}\leq 1$.
    Then, we have
    \begin{align*}
        &\Biggl|
            \sum_{n_1, \ldots, n_d}\psi_{2, n_1, \ldots, n_d}(\psi_{1, n_1}(x_1), \ldots, \psi_{1, n_d}(x_d)) 
            \psi_3(\psi_{1, n_1}(y_1), \ldots, \psi_{1, n_d}(y_d))\\
        &\hspace{80mm}
            - \exp(-\kappa\cdot\sum_{i=1}^n\sin^2\qty(\frac\pi2(x_i-y_i)))
        \Biggr|\\
        &\leq \frac{\epsilon}{\kappa^{dN}}\cdot\kappa^{dN} + \Biggl|
            \sum_{n_1, \ldots, n_d}C_{n_1}\cdots C_{n_d}
            \cdot \psi_{1, n_1}(x_1) \cdots \psi_{1, n_d}(x_d) 
            \cdot \psi_{1, n_1}(y_1) \cdots \psi_{1, n_d}(y_d)\\
        &\hspace{80mm}
            - \exp(-\kappa\cdot\sum_{i=1}^n\sin^2\qty(\frac\pi2(x_i-y_i)))
        \Biggr|\\
        &\leq \frac{\epsilon}{\kappa^{dN}}\cdot\kappa^{dN} 
        + d\kappa^{dN}N^d\cdot\frac{\epsilon}{d\kappa^{dN}N^d} \\
        &\hspace{20mm} + \Biggl|
            \sum_{n_1, \ldots, n_d} C_{n_1}\cdots C_{n_d}\cdot
            \phi_1^\ast(x_1) \cdots \psi_d^\ast(x_d) \cdot \phi_1^\ast(y_1) \cdots \psi_d^\ast(y_d)\\
        &\hspace{80mm}
            - \exp(-\kappa\cdot\sum_{i=1}^n\sin^2\qty(\frac\pi2(x_i-y_i)))
        \Biggr|\\
        &\lesssim \epsilon, 
    \end{align*}
    which completes the proof.
\end{proof}

\section{Proof of \pref{lem:key-lemma}}\label{app:proof_key-lemma}
First, for any $j\neq j^\ast$, it holds
\begin{equation*}
    \qty(\frac12\norm{q}^2 + \frac12\norm{k_j}^2 - \frac12\norm{q-k_j}^2)
    - \qty(\frac12\norm{q}^2 + \frac12\norm{k_{j^\ast}}^2 - \frac12\norm{q-k_{j^\ast}}^2)
    = q^\top k_j - q^\top k_{j^\ast} \leq -\delta.
\end{equation*}
Now, since it is hold that 
\begin{equation*}
    u^2 - \frac{u^4}{3} \leq \sin^2(u) = \frac{1-\cos 2u}{2}\leq u^2,
\end{equation*}
for $u\in[0, \pi/2]$, for $A>0, j\in[-V: 0], i\in[d']$, we have
\begin{align*}
    &\abs{\qty(A\sin\qty(\frac\pi{2A}\qty(k_{ji} - q_j)))^2 
        - \qty(\frac\pi{2}\qty(k_{ji} - q_j))^2}\\
    &\hspace{40mm}=\abs{A^2\sin^2\qty(\frac\pi{2A}\qty(k_{ji} - q_j)) - A^2\qty(\frac\pi{2A}\qty(k_{ji} - q_i))^2}\\
    &\hspace{40mm}\leq A^2\cdot \qty(\frac\pi{2A}\qty(k_{ji} - q_i))^4 \\
    &\hspace{40mm}\leq \frac{\pi^4}{A^2}.
\end{align*}
Therefore, if we set $A=\sqrt{\frac{16\pi^2 d'}{\delta}}$, it holds
\begin{align*}
    &\qty(\frac12\norm{q}^2 + \frac12\norm{k_j}^2 - \frac12\cdot\frac{4}{\pi^2}\sum_{i=1}^{d'}\qty(A\sin\qty(\frac\pi{2A}\qty(k_{ji} - q_i)))^2)\\
    &\hspace{30mm} - \qty(\frac12\norm{q}^2 + \frac12\norm{k_{j^\ast}}^2 - \frac12\cdot\frac{4}{\pi^2}\sum_{i=1}^{d'}\qty(A\sin\qty(\frac\pi{2A}\qty(k_{j^\ast} - q_i)))^2 )\\
    &\leq \qty(\frac12\norm{q}^2 + \frac12\norm{k_j}^2 - \frac12\norm{q-k_j}^2)
        - \qty(\frac12\norm{q}^2 + \frac12\norm{k_{j^\ast}}^2 - \frac12\norm{q-k_{j^\ast}}^2) \\
    &\hspace{20mm} + \frac{4d'}{\pi^2}\abs{\qty(A\sin\qty(\frac\pi{2A}\qty(k_{ji} - q_i)))^2  - \qty(\frac\pi{2}\qty(k_{ji} - q_i))^2}\\
    &\hspace{40mm} + \frac{4d'}{\pi^2}\abs{\qty(A\sin\qty(\frac\pi{2A}\qty(k_{j^\ast i} - q_i)))^2  - \qty(\frac\pi{2}\qty(k_{j^\ast i} - q_i))^2}\\
    &\leq -\delta + \frac{4d'\pi^2}{A^2} + \frac{4d'\pi^2}{A^2} 
        \leq -\delta + \frac{\delta}{4} + \frac{\delta}{4} = -\frac{\delta}{2}.
\end{align*}
In the following, we denote
\begin{equation*}
    \mu'_j\coloneqq\frac12\norm{q}^2 + \frac12\norm{k_j}^2 - \frac12\cdot\frac{4}{\pi^2}\sum_{i=1}^{d'}\qty(A\sin\qty(\frac\pi{2A}\qty(k_{ji} - q_i)))^2.
\end{equation*}
Using \pref{lem:err_softmax_hardmax}, we have
\begin{equation*}
    \norm{\frac{\sum_{j=-V}^0 \exp(\kappa\mu'_j)\cdot v_j}{\sum_{j=-V}^0 \exp(\kappa\mu'_j)} - v_{j^\ast}}_\infty \leq 2(V+1)^2\exp(-\frac{\delta\kappa}{2}).
\end{equation*}
for any $\kappa>0$.
Therefore, if we set $\kappa = \Omega\qty(\frac{\log\epsilon^{-1}+\log V}{\delta})$, 
the right-hand side is less than $\epsilon$.

Next, let us consider approximating 
\begin{align*}
    \exp(\kappa \mu'_j)\cdot v_j
    &= v_j\cdot \exp\qty(\kappa\cdot\qty(\frac12\norm{q}^2 + \frac12\norm{k_j}^2 - \frac12\cdot\frac{4}{\pi^2}\sum_{i=1}^{d'}\qty(A\sin\qty(\frac\pi{2A}\qty(k_{ji} - q_i)))^2))\\
    &= v_j\cdot \exp\qty(\frac\kappa2\norm{q}^2)\exp\qty(\frac\kappa2\norm{k_j}^2)
        \exp\qty(-\frac{32\kappa d'}{\delta}\sum_{i=1}^{d'}\sin^2\qty(\frac\pi2\frac{k_{ji} - q_i}{A})).
\end{align*}
Combining (mult) and (exp) in \pref{lem:approx_elementary}, 
we can see that there exists a neural network $\phi_1\in\Psi(L, W, S, B)$ with
\begin{align*}
    L&\lesssim \frac{{d'}^2}{\delta^2}\qty(\log^2\epsilon^{-1}+\log^2 V), \quad
    W\lesssim \frac{{d'}}{\delta}\qty(\log\epsilon^{-1}+\log V), \quad\\
    S&\lesssim \frac{{d'}^2}{\delta^2}\qty(\log^2\epsilon^{-1}+\log^2 V), \quad
    \log B\lesssim \frac{{d'}^2}{\delta^2}\qty(\log^2\epsilon^{-1}+\log^2 V),
\end{align*}
such that
\begin{equation*}
    \abs{\phi_1(x) - \exp\qty(\frac\kappa2\norm{x}^2)}\lesssim \frac{1}{(V+1)^2}\epsilon^9\exp(-9\kappa d'-4\delta)
\end{equation*}
for any $x\in\bR$ with $\norm{x}_\infty\leq 1$.
Moreover, using \pref{lem:approx_gaussian_multi_dim}, we can see that 
there exists neural networks $\phi_{2,n}, \phi_{3,n}\in\Psi(L, W, S, B)~(n=1, \ldots, N)$ with
\begin{align*}
    &N\lesssim \frac{{d'}^2}{\delta^2}\qty(\log^2\epsilon^{-1}+\log^2 V), \quad
    L\lesssim \frac{{d'}^8}{\delta^5}\qty(\log^5\epsilon^{-1}+\log^5 V), \quad 
    W\lesssim d', \\
    &S\lesssim \frac{{d'}^8}{\delta^5}\qty(\log^5\epsilon^{-1}+\log^5 V), \quad 
    \log B\lesssim \frac{{d'}^3}{\delta^2}\qty(\log^3\epsilon^{-1}+\log^3 V),
\end{align*}
such that, for any $x, y\in[-1, 1]^d$,
\begin{equation*}
    \abs{\sum_{n=1}^N \phi_{2,n}(x)\phi_{3,n}(y) - \exp\qty(-\frac{32\kappa d'}{\delta}\sum_{i=1}^{d'}\sin^2\qty(\frac\pi2\qty(x-y)))} \\
    \lesssim \frac{1}{(V+1)^2}\epsilon^9\exp(-9\kappa d'-4\delta).
\end{equation*}
Note that, by clipping the output appropriately, we can ensure that
\begin{equation*}
    \norm{\phi_1}_\infty\leq\exp\qty(\frac{\kappa d'}{2}), \quad
    \norm{\phi_{2,n}}_\infty\leq \qty(\frac{32\kappa d'}{\delta})^{d'N}, \quad
    \norm{\phi_{3,n}}_\infty\leq \qty(\frac{32\kappa d'}{\delta})^{d'N}, 
\end{equation*}
without changing the approximation error.
Therefore, we have
\begin{align*}
    &\norm{v_j\cdot\phi_1(q)\phi_1(k_j)\sum_{n=1}^N \phi_{2,n}\qty(\frac{q}{A})\phi_{3,n}\qty(\frac{k_j}{A})
        - \exp(\kappa\mu'_j)\cdot v_j}_\infty\\
    &=\Biggl\|
        v_j\cdot\phi_1(q)\phi_1(k_j)\sum_{n=1}^N \phi_{2,n}\qty(\frac{q}{A})\phi_{3,n}\qty(\frac{k_j}{A})
    \\
    &\hspace{20mm}- v_j\cdot \exp\qty(\frac\kappa2\norm{q}^2)\exp\qty(\frac\kappa2\norm{k_j}^2)
        \exp\qty(-\frac{32\kappa d'}{\delta}\sum_{i=1}^{d'}\sin^2\qty(\frac\pi2\frac{k_{ji} - q_i}{A}))\Biggr\|_\infty\\
    &\lesssim \exp(\kappa d')\cdot \frac{1}{(V+1)^2}\epsilon^9\exp(-9\kappa d'-4\delta)
    =\frac{1}{(V+1)^2}\epsilon^9\exp(-8(\kappa d'+\delta/2)).
\end{align*}
Using (mult) of \pref{lem:approx_elementary}, we can see that there exists a neural network $\phi_4\in\Psi(L, W, S, B)$ with
\begin{equation*}
    L\lesssim \frac{{d'}^3}{\delta^3}\qty(\log^3\epsilon^{-1}+\log^3 V), \quad
    W\lesssim 1, \quad
    S\lesssim \frac{{d'}^3}{\delta^3}\qty(\log^3\epsilon^{-1}+\log^3 V), \quad
    \log B\lesssim 1,
\end{equation*}
such that 
$\abs{\phi_4(x, y) - xy}\lesssim
\epsilon^9\cdot \frac{1}{N(V+1)^2}\qty(\frac{\delta}{32\kappa d'})^{d'N}\exp(-(17\kappa d'+\delta)/2)$ 
for any $x, y\in\bR$ with $\abs{x}\leq\exp(\frac{\kappa d'}{2}), \abs{y}\leq \qty(\frac{32\kappa d'}{\delta})^{d'N}$.
Additionally, there exists a neural network $\phi_5\in\Psi(L, W, S, B)$ with
\begin{equation*}
    L\lesssim \frac{{d'}^3}{\delta^3}\qty(\log^3\epsilon^{-1}+\log^3 V), \quad
    W\lesssim 1, \quad
    S\lesssim \frac{{d'}^3}{\delta^3}\qty(\log^3\epsilon^{-1}+\log^3 V), \quad
    \log B\lesssim 1,
\end{equation*}
such that $\abs{\phi_5(v, x, y) - v\cdot xy}\lesssim 
\epsilon^9\cdot \frac{1}{N(V+1)^2}\qty(\frac{\delta}{32\kappa d'})^{d'N}\exp(-(17\kappa d'+\delta)/2)$ 
for any $v\in\bR^{d'} x, y\in\bR$ with $\norm{v}_\infty\leq 1$ 
and $\abs{x}\leq\exp(\frac{\kappa d'}{2}), \abs{y}\leq \qty(\frac{32\kappa d'}{\delta})^{d'N}$.
Using these networks, we have the following approximation:
\begin{align*}
    &\abs{\phi_4\qty(\phi_1(q), \phi_{2,n}\qty(\frac{q}{A})) - \phi_1(q)\phi_{2,n}\qty(\frac{q}{A})} \\
    &\hspace{30mm} \lesssim \epsilon^9\cdot \frac{1}{N(V+1)^2}\qty(\frac{\delta}{32\kappa d'})^{d'N}\exp(-(17\kappa d'+\delta)/2),\\
    &\norm{\phi_5\qty(v_j, \phi_1(k_j), \phi_{3,n}\qty(\frac{k_{j}}{A})) - v_j\phi_1(k_j)\phi_{3,n}\qty(\frac{k_{j}}{A})}_\infty \\
    &\hspace{30mm} \lesssim \epsilon^9\cdot \frac{1}{N(V+1)^2}\qty(\frac{\delta}{32\kappa d'})^{d'N}\exp(-(17\kappa d'+\delta)/2).
\end{align*}
This implies that there exist neural networks $\Phi_{1}, \psi_{1}\in\Psi(L, W, S, B)$ with
\begin{align*}
    L&\lesssim \frac{{d'}^8}{\delta^5}\qty(\log^5\epsilon^{-1}+\log^5 V), \quad
    W\lesssim \frac{{d'}}{\delta}\qty(\log\epsilon^{-1}+\log V), \quad\\
    S&\lesssim \frac{{d'}^8}{\delta^5}\qty(\log^5\epsilon^{-1}+\log^5 V), \quad
    \log B\lesssim \frac{{d'}^3}{\delta^2}\qty(\log^3\epsilon^{-1}+\log^3 V),
\end{align*}
such that
\begin{align*}
    &\norm{\Phi_1(k_j, v_j)\psi(q) - \exp(\kappa\mu'_j)\cdot v_j}_\infty \\
    &=\norm{\sum_{n=1}^N\phi_4\qty(\phi_1(q), \phi_{2,n}\qty(\frac{q}{A}))
            \phi_5\qty(v_j, \phi_1(k_j), \phi_{3,n}\qty(\frac{k_{j}}{A}))
        - v_j\cdot\exp(\kappa\mu'_j)}_\infty\\
    &\leq \Biggl\|
        \sum_{n=1}^N\phi_4\qty(\phi_1(q), \phi_{2,n}\qty(\frac{q}{A}))
        \phi_5\qty(v_j, \phi_1(k_j), \phi_{3,n}\qty(\frac{k_{j}}{A}))\\
    &\hspace{40mm} 
        - v_j\cdot\phi_1(q)\phi_1(k_j)\sum_{n=1}^N \phi_{2,n}\qty(\frac{q}{A})\phi_{3,n}\qty(\frac{k_j}{A})
    \Biggr\|_\infty\\
    &\hspace{10mm}
        + \norm{v_j\cdot\phi_1(q)\phi_1(k_j)\sum_{n=1}^N \phi_{2,n}\qty(\frac{q}{A})\phi_{3,n}\qty(\frac{k_j}{A})
            - v_j\cdot \exp(\kappa\mu'_j)
        }\\
    &\lesssim \frac{1}{(V+1)^2}\epsilon^9\exp(-8(\kappa d'+\delta/2)),
\end{align*}
where $\Phi_1(k_j, v_j)\in\bR^{d' \times N}$ and $\psi(q)\in\bR^{N}$.
Summing up the error for $j=-V, \ldots, 0$, we have
\begin{equation*}
    \norm{\qty(\sum_{j=-V}^0\Phi_1(k_j, v_j))\psi_1(q) 
        - \sum_{j=-V}^0\exp(\kappa\mu'_j)\cdot v_j}_\infty 
    \lesssim \frac{1}{V+1}\epsilon^9\exp(-8(\kappa d'+\delta/2)).
\end{equation*}

Similarly, there exist neural networks $\psi_2, \psi_3\in\Psi(L, W, S, B)$ with
\begin{align*}
    L&\lesssim \frac{{d'}^8}{\delta^5}\qty(\log^5\epsilon^{-1}+\log^5 V), \quad
    W\lesssim \frac{{d'}}{\delta}\qty(\log\epsilon^{-1}+\log V), \quad\\
    S&\lesssim \frac{{d'}^8}{\delta^5}\qty(\log^5\epsilon^{-1}+\log^5 V), \quad
    \log B\lesssim \frac{{d'}^3}{\delta^2}\qty(\log^3\epsilon^{-1}+\log^3 V),
\end{align*}
such that
\begin{equation*}
    \abs{\psi_3(q)^\top \sum_{j=-V}^0\psi_2(k_j)
        - \sum_{j=-V}^0\exp(\kappa\mu'_j)\cdot v_j} 
    \lesssim \frac{1}{V+1}\epsilon^9\exp(-8(\kappa d'+\delta/2)),
\end{equation*}
where $\psi_2(k_j), \psi_3(q)\in\bR^{N}$.
Note that $\Phi_1(k_j, v_j)\in\bR^{d' \times N}$ and $\psi_2(k_j)\in\bR^{N}$ are 
the output of the convolution layer. 

From (mult) in \pref{lem:approx_elementary}, 
we can see that there exists a neural network $\phi_6\in\Psi(L, W, S, B)$ with
\begin{align*}
    L&\lesssim \frac{{d'}}{\delta}\qty(\log\epsilon^{-1}+\log V), \quad
    W\lesssim 1, \quad\\
    S&\lesssim \frac{{d'}}{\delta}\qty(\log\epsilon^{-1}+\log V), \quad
    \log B\lesssim 1,
\end{align*}
such that $\abs{\phi_6(x, y) - X y}\lesssim \frac{1}{V+1}\epsilon^9\exp(-8(\kappa d'+\delta/2))$ 
for any $X\in\bR^{d'\times N}, y\in\bR^{N}$ with $\norm{X}_\infty, \norm{y}_\infty\leq (V+1)\qty(\frac{32\kappa d'}{\delta})^{d'N}\exp(\frac{\kappa d'}{2})$.
Similarly, we have the network $\phi_7$ that approximates $x^\top y$ for any $x, y\in\bR^N$
with $\norm{x}_\infty, \norm{y}_\infty\leq (V+1)\qty(\frac{32\kappa d'}{\delta})^{d'N}\exp(\frac{\kappa d'}{2})$.
Then, we have
\begin{align*}
    \norm{\phi_6\qty(\sum_{j=-V}^0\Phi_1(k_j, v_j), \psi_1(q)) - \qty(\sum_{j=-V}^0\Phi_1(k_j, v_j))\psi_1(q)}_\infty
    &\lesssim \frac{1}{V+1}\epsilon^9\exp(-8(\kappa d'+\delta/2)),\\
    \abs{\phi_7\qty(\psi_3(q), \sum_{j=-V}^0\psi_2(k_j)) - \psi_3(q)^\top \sum_{j=-V}^0\psi_2(k_j)}
    &\lesssim \frac{1}{V+1}\epsilon^9\exp(-8(\kappa d'+\delta/2)).
\end{align*}

Now, from \pref{lem:approx_division}, there exists a neural network $\phi_8\in\Psi(L, W, S, B)$ with
\begin{align*}
    L&\lesssim \frac{{d'}^2}{\delta^2}\qty(\log^2\epsilon^{-1}+\log^2 V), \quad
    W\lesssim \frac{{d'}^3}{\delta^3}\qty(\log^3\epsilon^{-1}+\log^3 V), \quad\\
    S&\lesssim \frac{{d'}^4}{\delta^4}\qty(\log^4\epsilon^{-1}+\log^4 V), \quad
    \log B\lesssim \frac{{d'}}{\delta}\qty(\log\epsilon^{-1}+\log V),
\end{align*}
such that 
\begin{equation*}
    \abs{\phi_8(x', y') - \frac{y'}{x'}}\lesssim \epsilon 
    + \frac{\abs{x-x'}}{\tau^8}
    + \frac{\abs{y-y'}}{\tau^2}
\end{equation*}
for any $x, y, x', y'\in\bR$ with $x\in[\tau, \tau^{-1}]$ and $y\in[0, \tau^{-1}]$, 
where
\begin{equation*}
    \tau\coloneqq\min\qty{\epsilon, \exp(-(\kappa d'+\delta/2))/(V+1)}
\end{equation*}
Since it holds
\begin{equation*}
    \sum_{j=-V}^0\exp(\kappa\mu_j)\cdot v_{ji}
    \leq \sum_{j=-V}^0\exp(\kappa\mu'_j)
    \leq (V+1)\exp(\kappa d' + \delta/2),
\end{equation*}
for any $i\in[d']$, and
\begin{equation*}
    \sum_{j=-V}^0\exp(\kappa\mu_j)
    \geq \exp(\kappa\mu_{j^\ast})
    \geq \exp(-\kappa d'-\delta/2),
\end{equation*}
we have
\begin{align*}
    &\norm{\phi_8\qty(\phi_6\qty(\sum_{j=-V}^0\Phi_1(k_j, v_j), \psi_1(q)), 
                    \phi_7\qty(\psi_3(q), \sum_{j=-V}^0\psi_2(k_j)))
    - \frac{\sum_{j=-V}^0\exp(\kappa\mu'_j)\cdot v_j}{\sum_{j=-V}^0\exp(\kappa\mu'_j)}}_\infty \\
    &\quad\lesssim \epsilon 
        + \frac{\epsilon^9\exp(-8(\kappa d'+\delta/2))/(V+1)}{\tau^8}
        + \frac{\epsilon^9\exp(-8(\kappa d'+\delta/2))/(V+1)}{\tau^2}
    \lesssim \epsilon.
\end{align*}
which completes the proof.

\section{Proof of \pref{thm:input_copying}, \ref{thm:associative_recall}, \ref{thm:induction_heads} and \ref{thm:selective_copying}}\label{app:proof_copy_and_recall}
\subsection{Constructing the Embeddings}

To construct the embeddings, we use the following lemma.
\begin{lemma}\label{lem:JL_embedding}
    Let $S$ be an arbitrary set of $n$ points in $\bR^d$, and $m\geq1$ be a integer.
    Suppose that, for any $x\in S$, it holds $\norm{x}_2\leq 1$.
    Then, there exists a matrix $R\in\bR^{k\times d}$ with $k\leq 512m^4\log n+1$ satisfying the following:
    \begin{itemize}
        \item Any elements of $R$ is $+1/\sqrt{k}$ or $-1/\sqrt{k}$.
        \item For any $x, y\in S$, it holds $\abs{(Rx)^\top (Ry) - x^\top y} \leq \frac1{8m^2}$.
    \end{itemize}
\end{lemma}
To prove \pref{lem:JL_embedding}, we use the following proposition.
\begin{proposition}[Theorem 1.1 in \citet{achlioptas2003database}]\label{prop:JL_prev_work}
    Let $P$ be an arbitrary set of $n$ points in $\bR^d$.
    Given $\epsilon, \beta>0$, let $k$ be an integer such that $k\geq \frac{4+2\beta}{\epsilon^2-\epsilon^3}\log n$.
    Let $R$ be a $k \times d$ matrix whose entries are independent random variables drawn
    from the uniform distribution on $\set{1/\sqrt{k}, -1/\sqrt{k}}$.
    Then, with probability at least $1-n^{-\beta}$, for any $x, y\in P$,
    \begin{equation*}
        (1-\epsilon)\norm{x-y}_2^2 \leq \norm{Rx-Ry}_2^2 \leq (1+\epsilon)\norm{x-y}_2^2.
    \end{equation*}
\end{proposition}

\begin{proof}[Proof of \pref{lem:JL_embedding}]
    We apply \pref{prop:JL_prev_work} with 
    \begin{equation*}
        P=S \cup \set{-s\mid s\in S},\quad \epsilon=\frac1{8m^2}, \quad \beta=1.
    \end{equation*}
    Let $k$ be an integer such that
    \begin{equation*}
        k\geq 512m^4\log n\qty(\geq\frac{4+2\beta}{\epsilon^2-\epsilon^3}\log n),
    \end{equation*}
    and $R$ be the random matrix defined in \pref{prop:JL_prev_work}.
    With probability at least $1-\frac{1}{2n}$, for any $x, y\in P$,
    \begin{align}\label{eq:lem_JL_embedding}
        \abs{\norm{Rx-Ry}_2^2 - \norm{x-y}_2^2} &\leq \frac{1}{8m^2}\norm{x-y}_2^2, \\
        \abs{\norm{Rx+Ry}_2^2 - \norm{x+y}_2^2} &\leq \frac{1}{8m^2}\norm{x+y}_2^2.
    \end{align}
    Since $1-\frac{1}{2n}\geq \frac{1}{2^{kd}}$ for any positive integer $n, k, d$,
    we can choose $R$ such that \eqref{eq:lem_JL_embedding} holds.
    For such $R$, it holds
    \begin{align*}
        (Rx)^\top (Ry) - x^\top y
        &= \frac14\qty(\norm{Rx + Ry}_2^2 -\norm{Rx - Ry}_2^2 - \norm{x + y}_2^2 + \norm{x + y}_2^2) \\
        &\leq \frac14\qty(\abs{\norm{Rx + Ry}_2^2 - \norm{x + y}_2^2} + \abs{\norm{Rx - Ry}_2^2 - \norm{x - y}_2^2}) \\
        &\leq \frac14\qty(\frac1{4m^2}\norm{x+y}_2^2 + \frac1{4m^2}\norm{x-y}_2^2) \\
        &= \frac1{16m^2}\norm{x}_2^2 + \frac1{16m^2}\norm{y}_2^2 \\
        &\leq \frac 1{8m^2},
    \end{align*}
    which completes the proof.
\end{proof}

\subsection{Proof of the theorems}

The following is the essential lemma to prove the theorems.
\begin{lemma}\label{lem:n-gram-extraction}
    Let $m\in\bN_{>0}$ and $Z=\qty[z_{-V}, \ldots, z_0]\in[-1, 1]^{\abs{\scW}\times [-V:0]}$ be 
    a sequence of one-hot vector representing the alphabets in set $S$.
    Suppose that there uniquely exists $j^\ast\in[-V:-1]$ such that
    \begin{equation*}
        \qty[z_{j^\ast-m}, z_{j^\ast-m+1}, \ldots, z_{j^\ast-1}] 
        = \qty[z_{-m+1}, z_{-m+2}, \ldots, z_0], 
    \end{equation*}
    where $z_j=0$ for $j\notin[-m+1:0]$.
    Then, there exists $F\in\scS(M, U, D, L, W, S, B)$ and $W\in\bR^{\abs{\scW}\times D}$ with
    \begin{align*}
        &M=2, \quad 
        U=V, \quad 
        D=m^{14}\log^2 V\log^3\abs{\scW}, \quad
        L\lesssim m^{37}\log^5 V\log^8\abs{\scW}, \\
        &W\lesssim m^{15}\log^3 V\log^3\abs{\scW}, \quad
        S\lesssim m^{37}\log^5 V\log^8\abs{\scW}, \quad
        \log B\lesssim m^{10}\log^2 V\log^2\abs{\scW},
    \end{align*}
    such that $j^\ast = \arg\max_{j=-V, \dots, 0} (W\cdot F(Z)_0)_j$.
\end{lemma}
\begin{proof}
    Set $\kappa\sim V^2(\log m + \log V + \log\log\abs{\scW})$.
    Additionally, let us set embedding $E_1=R\in\bR^{D\times\abs{\scW}}$ as in \pref{lem:JL_embedding},
    and set $E_2=0$. 
    We define $x_j = Rz_j$ for $j=-V, \dots, 0$.
    The third item of \pref{lem:approx_gaussian_kernel}, we can see that
    there exists $\psi_n\in\Psi'(D, B)$ with
    \begin{equation*}
        N\lesssim \log^2 m + \log^2 V + \log^2 \log\abs{\scW}, \quad 
        D=1, \quad 
        \log B\lesssim \log^2 m + \log^2 V + \log^2 \log\abs{\scW},
    \end{equation*}
    such that 
    \begin{equation*}
        \norm{
            \exp\qty(-\kappa\cdot\sin^2\qty(\frac{\pi\cdot(j-k)}{2(V+1)}))
            - \sum_{n=1}^N \psi_n\qty(\frac{j}{V+1})
        }_\infty \leq \frac{1}{8m^2d(V+1)}.
    \end{equation*}
    for any $j=-V, \dots, 0$.
    Since 
    \begin{equation*}
        \abs{\exp\qty(-\kappa\cdot\sin^2\qty(\frac{\pi\cdot(k-j)}{2(V+1)})) - \delta_{j,k}}
        \leq\frac{1}{8m^2d(V+1)},
    \end{equation*}
    it holds
    \begin{equation*}
        \norm{\sum_{j=-V}^0 \sum_{n=1}^N \psi_n\qty(\frac{j}{V+1})\cdot x_j - x_k}_\infty \leq\frac{1}{4m^2d}.
    \end{equation*}
    Therefore, there exists $g_1\in\scC(U, D, B)$ with
    \begin{equation*}
        U=V, \quad D=m, \quad \log B\lesssim m^2 \log V,
    \end{equation*}
    such that
    \begin{equation*}
        g(X) = \mqty[
            q'_{-V} & q'_{-V+1} & \cdots & q'_0 \\
            k'_{-V} & k'_{-V+1} & \cdots & k'_0 \\
            v'_{-V} & v'_{-V+1} & \cdots & v'_0
        ],
    \end{equation*}
    \begin{equation*}
        \norm{\mqty[
            q'_{-V} & q'_{-V+1} & \cdots & q'_0 \\
            k'_{-V} & k'_{-V+1} & \cdots & k'_0 \\
            v'_{-V} & v'_{-V+1} & \cdots & v'_0
        ] - \mqty[
            q_{-V} & q_{-V+1} & \cdots & q_0 \\
            k_{-V} & k_{-V+1} & \cdots & k_0 \\
            v_{-V} & v_{-V+1} & \cdots & v_0
        ]}_\infty \leq \frac{1}{4m^2d}, 
    \end{equation*}
    where
    \begin{align*}
        q_j &= \qty[x_{j-m+1}^\top, x_{j-m+2}^\top, \ldots, x_{j}^\top]^\top, \\
        k_j &= \qty[x_{j-m+1}^\top, x_{j-m+2}^\top, \ldots, x_{j}^\top]^\top, \\
        v_j &= x_j, 
    \end{align*}
    and $x_{j}=0$ for $j\notin[-m+1:0]$.
    Then, it holds,
    \begin{equation*}
        {q'_0}^\top k'_{j^\ast} 
        \geq q_0^\top k_{j^\ast} - md \cdot \frac{1}{4m^2d}
        \geq 1 - \frac{1}{8m^2}\cdot m - \frac{1}{4m}
        = 1 - \frac{3}{8m},
    \end{equation*}
    and, for any $j\in[-V:0]\setminus\set{j^\ast}$,
    \begin{equation*}
        {q'_0}^\top k'_{j} 
        \leq q_0^\top k_j + md \cdot \frac{1}{4m^2d} 
        \leq \frac{m-1}{m} + \frac{1}{8m^2}\cdot m + \frac{1}{4m}
        = 1 - \frac{5}{8m}.
    \end{equation*}
    Therefore, due to \pref{lem:key-lemma}, there exists $f_1, f_2\in\Psi(L, W, S, B)$ and
    $g_2\in\scC(U, D, B)$ with
    \begin{align*}
        &U=V, \quad 
        D=m^{14}\log^2 V\log^3\abs{\scW}, \quad
        L\lesssim m^{37}\log^5 V\log^8\abs{\scW}, \\
        &W\lesssim m^{15}\log^3 V\log^3\abs{\scW}, \quad
        S\lesssim m^{37}\log^5 V\log^8\abs{\scW}, \quad
        \log B\lesssim m^{14}\log^2 V\log^2\abs{\scW},
    \end{align*}
    such that
    \begin{equation*}
        \norm{f_2\circ g_2\circ f_1\circ g_1(X) - x_{j^\ast}}_\infty \leq \frac14.
    \end{equation*}
    Therefore, $\norm{R^\top (F(Z)_0) - z_{j^\ast}}_\infty \leq \frac14 + \frac{1}{8m^2}$, 
    which completes the proof.
\end{proof}

Now, we prove \pref{thm:input_copying}, \pref{thm:associative_recall} and \pref{thm:induction_heads}.
\begin{proof}[Proof of \pref{thm:input_copying}]
    Due to Lemma 2.4 of \citet{jelassi2024repeat}, 
    if we set $m\lesssim \log(V/\epsilon)/\log\abs{\scW}$ in \pref{lem:n-gram-extraction},
    we can achieve $\err_V\leq\epsilon$.
    Therefore, the result follows.
\end{proof}
\begin{proof}[Proof of \pref{thm:associative_recall}]
    Applying $m=1$ and $V\leq\abs{\scW}$ directly gives the result.
\end{proof}
\begin{proof}[Proof of \pref{thm:induction_heads}]
    The proof is completely the same as the proof of \pref{thm:associative_recall}. 
    Note that associative recall is the special case of induction heads
    where the set keys and the set queries are completely split.
\end{proof}

Finally, we prove \pref{thm:selective_copying}.

\begin{proof}[Proof of \pref{thm:selective_copying}]
    The probability of having $M$ or more consecutive $\token{PAD}$ tokens in the input sequence
    is at most $V \cdot \alpha^M$. 
    Therefore, if $M \sim \log V + \log\epsilon^{-1} $, this probability becomes less than $\epsilon / 2$. 
    Hence, in the following discussion, we consider situations where \token{PAD} does not appear consecutively $M$ times or more.

    Let ``$s_0, s_1, \ldots, s_V$'' be the input sequence, and take an arbitrary index $i\in[V]$.
    Fix a positive integer $K$.
    Due to the definition of $M$, if we set $m = KM$, there are at least $K$ tokens that are not \token{PAD}
    in the sequence $\qty[s_{i-m+1}, s_{i-m+2}, \ldots, s_i]$.

    Now, let us upper bound the probability that there exists $i\neq j~(i, j\in[V])$ such that
    two sequences $\qty[s_{i-m+1}, s_{i-m+2}, \ldots, s_i]$ and $\qty[s_{j-m+1}, s_{j-m+2}, \ldots, s_j]$ have 
    common $K$ elements (including duplicates) that are not \token{PAD}.
    First, we have $\abs{\scW}^{K}$ choices for the common $K$ elements. 
    Then, we have $(m!/(m-k)!)^2$ choices for the positions of the common $K$ elements in the sequence of length $m$.
    Each choice of the positions occurs with probability at most $1/\abs{\scW}^{2K}$.
    Moreover, there are at most $V^2$ choices for the indices $i$ and $j$. 
    Therefore, the probability can be upper bounded by
    \begin{align*}
        V^2\cdot \abs{\scW}^{K}\cdot \frac{m!^2}{(m-K)!^2}\cdot \frac{1}{\abs{\scW}^{2K}}
        &\leq \frac{V^2}{\abs{\scW}^{K}}\cdot\frac{m!^2}{(m-K)!^2} \\
        &\leq \frac{V^2}{\abs{\scW}^{K}}\cdot\frac{\rme^2(m+1/\rme)^{2(m+1)}}{\rme^2((m-K)/\rme)^{2(m-K)}} \\
        &\leq \frac{V^2}{\abs{\scW}^{K}}\cdot\frac{(m+1)^{2(m+1)}}{\rme^{2(m+1)}}\cdot\frac{\rme^{2(m-K)}}{(m-K)^{2(m-K)}} \\
        &\leq \frac{V^2}{\rme^2}\cdot\qty(\frac{(MK-K)^2}{\rme^2\abs{\scW}})^{K}\cdot\frac{(m+1)^{2(m+1)}}{(m-K)^{2m}} \\
        &\leq \frac{V^2(KM+1)^2}{\rme^2}\cdot\qty(\frac{K^2(M-1)^2}{\rme^2\abs{\scW}})^{K}\cdot\qty(\frac{KM+1}{KM-K})^{2KM} \\
        &\leq \frac{V^2(KM+1)^2}{\rme^2}\cdot\qty(\frac{K^2(M-1)^2}{\rme^2\abs{\scW}}\cdot\qty(\frac{M+1}{M-1})^M)^{K}\\
        &\leq \frac{V^2(KM+1)^2}{\rme^2}\cdot\qty(\frac{C\cdot K^2(M-1)^2}{\abs{\scW}})^{K}, 
    \end{align*}
    where $C>0$ is a universal constant.
    Therefore, if $\abs{\scW}\geq 2C\cdot K^2(M-1)^2$ and $K\sim\log\epsilon^{-1}+\log V$, 
    the probability is less than $\epsilon/2$.

    Let us consider the situation where the input ``$s_0, s_1, \ldots, s_L$'' is fed into the model.
    Let us set embedding $E_1=R\in\bR^{D\times\abs{\scW}}$ as in \pref{lem:JL_embedding}, 
    and set $E_2=0$.
    Additionally, let $x_j$ be the embedding of $s_j$ for $j=0, \ldots, L$.
    Let us consider the model that finds $s_I$ with the index $I\in[L]$ such that
    the partial sequence $\qty[s_{I-m+1}, s_{I-m+2}, \ldots, s_I]$ and $\qty[s_{L-m+1}, s_{L-m+2}, \ldots, s_L]$
    have the same $K$ elements that are not \token{PAD}.
    
    The similar discussion as in the proof of \pref{lem:n-gram-extraction} reveals that
    there exists $\psi_n\in\Psi'(D, B)$ with
    \begin{equation*}
        N\lesssim \log^2m + \log^2V, \quad 
        D=1, \quad 
        \log B\lesssim \log^2m + \log^2V,
    \end{equation*}
    such that 
    \begin{equation*}
        \norm{\sum_{j=0}^L \sum_{n=1}^N \psi_n\qty(\frac{j}{V+1})\cdot x_j - \frac1m\sum_{j=L-m+1}^{L} x_j}_\infty
        \leq  \frac{1}{4m^2}.
    \end{equation*}
    Therefore, there exists $g_1\in\scC(U, D, B)$ with
    \begin{equation*}
        U=L, \quad 
        D\lesssim \log^2m + \log^2V, \quad 
        \log B\lesssim \log^2m + \log^2V,
    \end{equation*}
    such that
    \begin{gather*}
        g_1(X) = \mqty[
            q'_0 & q'_1 & \cdots & q'_L \\
            k'_0 & k'_1 & \cdots & k'_L \\
            v'_0 & v'_1 & \cdots & v'_L 
        ], \\
        \norm{\mqty[
            q'_0 & q'_1 & \cdots & q'_L \\
            k'_0 & k'_1 & \cdots & k'_L \\
            v'_0 & v'_1 & \cdots & v'_L 
        ] - \mqty[
            q_0 & q_1 & \cdots & q_L \\
            k_0 & k_1 & \cdots & k_L \\
            v_0 & v_1 & \cdots & v_L 
        ]}_\infty \leq \frac{1}{4m^2},
    \end{gather*}
    where
    \begin{equation*}
        q_t = \frac1m\sum_{j=t-m+1}^t x_j, \quad
        k_t = \frac1m\sum_{j=t-m+1}^t x_j, \quad
        v_t = x_t,
    \end{equation*}
    Then, if $\qty[s_{j-m+1}, s_{j-m+2}, \ldots, s_j]$ and $\qty[s_{L-m+1}, s_{L-m+2}, \ldots, s_L]$
    have the common $K$ elements, it holds
    \begin{equation*}
        {q'_L}^\top k'_j \geq \frac{K}{m^2} - \frac{1}{4m^2}.
    \end{equation*}
    Moreover, if the number of common elements in $\qty[s_{j-m+1}, s_{j-m+2}, \ldots, s_j]$ and $\qty[s_{L-m+1}, s_{L-m+2}, \ldots, s_L]$
    is less than $K$, it holds
    \begin{equation*}
        {q'_L}^\top k'_j \leq \frac{K-1}{m^2} + \frac{1}{4m^2} = \frac{K}{m^2} - \frac{3}{4m^2}.
    \end{equation*}
    Therefore, due to \pref{lem:key-lemma}, there exists $f_1, f_2\in\Psi(L, W, S, B)$ and
    $g_2\in\scC(U, D, B)$ with
    \begin{align*}
        &U=V, \quad 
        D=m^{16}\log^2 V\log^3\abs{\scW}, \quad
        L\lesssim m^{42}\log^5 V\log^8\abs{\scW}, \\
        &W\lesssim m^{18}\log^3 V\log^3\abs{\scW}, \quad
        S\lesssim m^{42}\log^5 V\log^8\abs{\scW}, \quad
        \log B\lesssim m^{16}\log^2 V\log^2\abs{\scW},
    \end{align*}
    such that
    \begin{equation*}
        \norm{f_2\circ g_2\circ f_1\circ g_1(X) - x_{I}}_\infty \leq \frac14.
    \end{equation*}
    Therefore, $\norm{R^\top (F(Z)_0) - x_{I}}_\infty \leq \frac14 + \frac{1}{8m^2}$,
    which completes the proof.
\end{proof}

\section{Proof of \pref{thm:approximation_piecewise_gamma_smooth}}\label{app:proof_approximation_piecewise_gamma_smooth}
\subsection{Preparation: Approximation of $\gamma$-smooth functions}
Before proving \pref{thm:approximation_piecewise_gamma_smooth}, 
we prove the following theorem \pref{thm:approximation-gamma_smooth}
under \pref{ass:anisotropic}
about the approximation of $\gamma$-smooth functions.
\begin{assumption}\label{ass:anisotropic}
    The true function $F^\circ$ satisfies $F^\circ_0 \in \mathcal{F}_{p, \theta}^\gamma$,
    where $\gamma$ is mixed or anisotropic smoothness.
    Suppose that it holds $\norm{F}_{\scF_{p, \theta}^\gamma}\leq 1$ and $\norm{F^\circ_0}_{\infty} \leq R$, where $R > 0$ is a constant.
    Additionally, we assume the smoothness parameter $a$ satisfies $\norm{a}_{wl^\alpha} \leq 1$ for some $0 < \alpha < \infty$ and $a_{ij} = \Omega(\log(\abs{j}+1))$.
    Moreover, if $\gamma$ is mixed smoothness, we assume $\bar a_1 < \bar a_2$.
\end{assumption}
\begin{theorem}\label{thm:approximation-gamma_smooth}
    Suppose that target function $F^\circ$ satisfies \pref{ass:anisotropic}.
    Then, for any $T > 0$, there exists an SSM $F\in\scS(M, U, D, L, W, S, B)$ with 
    \begin{equation}\label{eq:parameter_gamma-smooth}
        \begin{aligned}
            & M=1, \quad \log U \sim T, \quad D \sim T^{1/\alpha}, \quad 
            L \sim T, \quad W \sim T^{1/\alpha},\\
            & W' \sim T^{1/\alpha}2^{T / a^\dagger},\quad
            S \sim T^{2/\alpha}\max\qty{T^{2/\alpha}, T^2}2^{T / a^\dagger}, \quad
            \log B \sim T^{1/\alpha},
        \end{aligned}
	\end{equation}
    such that
    $
        \norm{F - F^\circ}_{2, P_X} \lesssim 2^{-T}.
    $
\end{theorem}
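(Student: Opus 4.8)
The plan is to reduce the approximation of the sequence-to-sequence map $F^\circ$ to a finite-dimensional problem and then to realize the relevant finite-dimensional approximant by composing a single gated-convolution layer (to gather and linearly recombine the input tokens into the frequency-feature coordinates) with a single deep sparse ReLU network (to synthesize the trigonometric basis functions $\psi_r$ and their weighted sum). The starting point is shift-equivariance: it suffices to approximate $F^\circ_0\colon[0,1]^{d\times\infty}\to\bR$, and because $F^\circ_0\in\scF^\gamma_{p,\theta}$ with $\norm{a}_{wl^\alpha}\le1$ and $a_{ij}=\Omega(\log(|j|+1))$, the norm constraint forces the high-frequency and far-from-current-token components of the basis expansion $F^\circ_0=\sum_r\ev{F^\circ_0,\psi_r}\psi_r$ to be negligible. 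Concretely, I would first truncate: keep only indices $r=(r_{ij})$ with $s_{ij}:=\lceil\log_2(|r_{ij}|+1)\rceil$ obeying $\gamma(s)\le T$ (equivalently $a_{ij}s_{ij}\lesssim T$, so only $O(T^{1/\alpha})$ coordinates $(i,j)$ are ``active,'' and each active coordinate contributes frequencies up to $2^{T/a_{ij}}\le 2^{T/a^\dagger}$). The tail bound on $\norm{f-\text{(truncation)}}_{2,P_X}$ follows exactly as in Okumoto--Suzuki / Takakura et al.: the weak-$\ell^\alpha$ assumption on $a$ controls the number of surviving multi-indices and $\norm{f}_{\scF^\gamma_{p,\theta}}\le1$ bounds their coefficients, giving an error $\lesssim 2^{-T}$ for the stated budget. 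This truncated function depends on only $V\sim\exp(\Theta(T))$? — no: on $O(T^{1/\alpha})$ coordinates, but these coordinates may sit at positions $j$ as far back as $\sim 2^{T}$, which is why $\log U\sim T$.

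Next I would build the network. The embedding layer and the gated-convolution layer together should (a) select the $O(T^{1/\alpha})$ active token-coordinates $X_{ij}$ out of the window of size $U$, using the cosine/sine filter $H_{k,j}$ with appropriately chosen frequencies $a_{1,k},a_{2,k}$ to act as an (approximate) delta-selector at the required offsets, and (b) pass them, via $W^Q$ as an (approximate) all-ones gate, into the output channels. This realizes a linear map $X\mapsto(X_{i_1j_1},\dots,X_{i_Kj_K})$ with $K=O(T^{1/\alpha})$, within the embedding dimension $D\sim T^{1/\alpha}$ and window $\log U\sim T$; the norm bound $\log B\sim T^{1/\alpha}$ is what is needed to make the filter both localize sharply and not blow up the magnitudes. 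Then the FNN stack of depth $L\sim T$ and width $W\sim T^{1/\alpha}$ must approximate the map $(X_{i_1j_1},\dots,X_{i_Kj_K})\mapsto\sum_r\ev{F^\circ_0,\psi_r}\prod\psi_{r_{ij}}(X_{ij})$. Here I would use the standard deep-ReLU constructions: approximate $\sin$ and $\cos$ at frequency up to $2^{T/a^\dagger}$ to error $2^{-T}$ (cost: depth $O(T)$, and the frequency $2^{T/a^\dagger}$ is what drives the extra width factor $W'\sim T^{1/\alpha}2^{T/a^\dagger}$ and the sparsity $S\sim T^{2/\alpha}\max\{T^{2/\alpha},T^2\}2^{T/a^\dagger}$); approximate products of $K$ such factors by a balanced binary tree of two-input multiplication gadgets (depth $O(\log K)$, absorbed into $L\sim T$); and sum the $2^{\Theta(T)}$-many resulting monomials — but note the number of surviving $r$'s is only polynomial-ish in $2^{T/a^\dagger}$ per coordinate and, by the weak-$\ell^\alpha$ sparsity, the \emph{total} count of relevant $r$ with $\gamma(s)\le T$ is $\lesssim T^{O(1)}2^{T/a^\dagger}$, matching the $S$ budget. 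Finally apply $\clip_R$, which is harmless since $\norm{F^\circ_0}_\infty\le R$ and the approximation error is tiny, so clipping only decreases the error.

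The main obstacle, and the step I would spend the most care on, is (a) the gated-convolution \emph{selection} step: the filter $H$ is a fixed, smooth trigonometric polynomial of the form $c_{1,k}\cos(2\pi j a_{1,k}/U)+c_{2,k}\sin(2\pi j a_{2,k}/U)$, so it cannot be an exact delta — one must show that a suitable choice of the $O(D)$ frequencies produces, after convolution and the $W^Q$-gate, a map whose output coordinates equal the desired $X_{ij}$ up to error $2^{-T}$, with all parameters bounded as claimed. This is essentially an approximate-identity / trigonometric-interpolation argument over the length-$U$ window, and it is where the interplay between $U$, $D$, and $B$ is delicate; I would handle it by a Dirichlet-kernel-type construction (or by quoting the analogous lemma used for the attention layer in Takakura et al., adapted to the convolutional filter as promised in \pref{app:extension_to_ordinary-ssm}). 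The second, more routine, obstacle is bookkeeping the sparsity $S$: one must check that assembling the $\psi_r$-synthesizers, the product trees, and the weighted sum keeps the total nonzero-parameter count within $T^{2/\alpha}\max\{T^{2/\alpha},T^2\}2^{T/a^\dagger}$; this is a careful but mechanical accounting of the standard deep-ReLU gadgets, with the $T^2$ versus $T^{2/\alpha}$ in the $\max$ coming from whether the dominant cost is the depth-$T$ sinusoid approximation or the $T^{1/\alpha}$-fold product.
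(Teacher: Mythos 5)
Your overall architecture matches the paper's: truncate $F^\circ_0$ to the $O(T^{1/\alpha})$ coordinates with $\gamma(s)\leq T$ (which, by $a_{ij}=\Omega(\log(|j|+1))$, all lie within a window of size $U$ with $\log U\sim T$), have the gated-convolution layer act as a per-coordinate token selector, feed the selected coordinates into a deep sparse ReLU network, and clip. The paper also uses shift-equivariance to reduce to the $t=0$ output, and quotes Theorem D.3 of \citet{takakura2023approximation} (their \pref{lem:approximation-fnn}) rather than re-deriving the deep-ReLU synthesis of the truncated basis expansion, whereas you re-derive it; that difference is cosmetic.

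The genuine gap is in your handling of the selection step, which you correctly flag as the hard part. A ``Dirichlet-kernel-type construction'' is the wrong tool here: a Dirichlet (or Fej\'er) kernel built from $N$ frequencies only localizes at scale $\sim 1/N$ with polynomial tail decay, so to drive the off-target contribution below $2^{-T}$ over a window of size $U\sim 2^T$ you would need $N$ exponential in $T$, blowing up the embedding dimension, the filter complexity, and the norm bound. The paper instead builds the filter as an approximation of $t\mapsto\exp\bigl(-\kappa\sin^2\bigl(\tfrac{\pi}{2}(t-j_m/U)\bigr)\bigr)$ with $\kappa=U^2\log\epsilon^{-1}$ (\pref{lem:approx_gaussian_kernel}). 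Two facts make this work: first, for $\kappa$ that large this function is already within $\epsilon$ of a Kronecker delta over the lattice $\{j/U\}_{j=0}^U$; second, by Taylor-expanding $\exp$ to order $N\lesssim\log\epsilon^{-1}$ and then expanding $\sin^{2n}$ into cosines, one gets a trigonometric polynomial with only $O((\log\epsilon^{-1})^2)=O(T^2)$ nonzero Fourier modes and coefficients of size $\kappa^N$, i.e.\ $\log B\lesssim T^2$, not exponential. In other words, the paper pays for the exponentially sharp localization with coefficient magnitude (logarithm polynomial in $T$) rather than with Fourier-mode count, and it is exactly this trade-off that lets the parameter budget stay at $\mathrm{poly}(T)\cdot 2^{T/a^\dagger}$. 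Your hedge of ``adapt the softmax lemma from Takakura et al.'' points in the right direction, since the paper's kernel is the Fourier-realizable analogue of a softmax concentration, but your primary proposal (Dirichlet) would not close at the stated rates.

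Two smaller bookkeeping remarks. First, the width $W$ in the FNN block must carry the factor $2^{T/a^\dagger}$ (this is where the frequency content of the truncated target lives, per \pref{lem:approximation-fnn}); the $W\sim T^{1/\alpha}$ versus $W'\sim T^{1/\alpha}2^{T/a^\dagger}$ split in the theorem statement reflects this, and your accounting of where the exponential factor sits should track the paper's. Second, after the filter approximation, the paper bounds the propagated error through the FNN using its $(BW)^L$ Lipschitz constant and then chooses $\epsilon=2^{-T}/U$; this Lipschitz-propagation step is needed and should appear explicitly.
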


Given a smoothness function $\gamma\colon\bN_0^{d\times\infty}\to\bR$, we define
\begin{align*}
    I(T, \gamma)&\coloneqq \{(i, j)\mid \exists s\in\bN_0^{d\times\infty}\text{ such that }s_{ij}\neq0, \gamma(s)<T\},\\
    d_{\max}&\coloneqq |I(T, \gamma)|.
\end{align*}
The \emph{feature extraction map} $\Gamma\colon\bR^{d\times\infty}\to\bR^{d_{\max}}$ is defined as
\begin{equation*}
    \Gamma(X) = [X_{i_1, j_1}, \ldots, X_{i_{d_{\max}}, j_{d_{\max}}}].
\end{equation*}

The following lemma shows that, if FNN receives finite number of "important" features, 
it can approximate $\gamma$-smooth functions and piecewise $\gamma$-smooth functions.
This is mainly due to the condition $\|a\|_{wl^\alpha}\leq 1$, which induces sparsity of important features.
\begin{lemma}[Theorem D.3 in \citet{takakura2023approximation}]\label{lem:approximation-fnn}
	Suppose that the target functions $f \in \scF_{p, \theta}^\gamma$ and $g \in \scP_{p, \theta}^\gamma$ satisfy $\norm{f}_\infty \leq R$ and $\norm{g}_\infty \leq R$,
	where $R > 0$ and $\gamma$ is the mixed or anisotropic smoothness
	and the smoothness parameter $a$ satisfies $\norm{a}_{wl^\alpha} \leq 1$.
	For any $T > 0$, there exist FNNs $\hat f_T, \hat g_T \in \Psi(L, W, S, B)$ such that
    \begin{align*}
		\norm{\hat f_T \circ \Gamma - f}_{2, P_X}           & \lesssim 2^{-T}, \\
            \norm{\hat g_T \circ \Gamma \circ \Pi - g}_{2, P_X} & \lesssim 2^{-T},
	\end{align*}
	where
    \begin{align*}
		L & \sim \max\qty{T^{2/\alpha}, T^2}, W \sim T^{1/\alpha}2^{T / a^\dagger},                  \\
		S & \sim T^{2/\alpha}\max\qty{T^{2/\alpha}, T^2}2^{T / a^\dagger}, \log B \sim T^{1/\alpha}.
	\end{align*}
\end{lemma}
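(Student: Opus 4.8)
The plan is to follow the strategy of \citet{takakura2023approximation}: truncate the Fourier-type expansion of the target, observe that the truncation depends on only finitely many input coordinates, and realize it by a sparse FNN assembled from trigonometric and multiplication sub-networks. \textbf{Truncation.} For $f\in\scF_{p,\theta}^\gamma$ write $f=\sum_{s\in\bN_0^{d\times\infty}}\delta_s(f)$ and set $f_T\coloneqq\sum_{s:\gamma(s)<T}\delta_s(f)$. Since $P_X$ is a probability measure and $p\geq2$ we have $\norm{\cdot}_{2,P_X}\leq\norm{\cdot}_{p,P_X}$, so with $\theta'$ the conjugate exponent of $\theta$,
\begin{equation*}
    \norm{f-f_T}_{2,P_X}\leq\sum_{s:\gamma(s)\geq T}\norm{\delta_s(f)}_{p,P_X}
    \leq\qty(\sum_{s:\gamma(s)\geq T}2^{-\gamma(s)\theta'})^{1/\theta'}\norm{f}_{\scF_{p,\theta}^\gamma}.
\end{equation*}
Since for both mixed and anisotropic $\gamma$ the number of $s$ with $\gamma(s)\in[m,m+1)$ is polynomial in $m$, the right-hand side is $\lesssim 2^{-T}\mathrm{poly}(T)$, and enlarging $T$ by an additive $O(\log T)$ absorbs the extra factor. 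The piecewise case is the same after pushing $P_X$ forward along $\Pi$.

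\textbf{Finite dependence and counting.} Every $s$ with $\gamma(s)<T$ is supported on $I(T,\gamma)$, and $\norm{a}_{wl^\alpha}\leq1$ forces $\bar a_k\geq k^\alpha$, so only coordinates of smoothness below $T$ can occur and $d_{\max}=|I(T,\gamma)|\lesssim T^{1/\alpha}$. Hence $f_T=h\circ\Gamma$ for some $h\colon[0,1]^{d_{\max}}\to\bR$, and in the piecewise case $g_T=h\circ\Gamma\circ\Pi$ with the \emph{same} $h$ on every $\Omega_\lambda$. The function $h$ is a trigonometric polynomial whose frequency vectors $r$ range over $\{r:\exists s,\ \floor{2^{s_{ij}-1}}\leq|r_{ij}|<2^{s_{ij}},\ \gamma(s)<T\}$; for mixed smoothness this is a hyperbolic cross, for anisotropic smoothness an anisotropic ball, and in either case its cardinality is $N\lesssim 2^{T/a^\dagger}\mathrm{poly}(T)$. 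This counting step is exactly where $a^\dagger$ (the minimum of the $\bar a_i$ in the mixed case, their harmonic-type sum in the anisotropic case) enters.

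\textbf{Network construction and parameter count.} Each one-dimensional factor $\psi_{r_{ij}}$ equals $\sqrt2\cos$ or $\sqrt2\sin$ of $2\pi|r_{ij}|X_{ij}$ with $|r_{ij}|\leq2^{O(T)}$, so by \pref{lem:approx_elementary}(4) it is approximated to accuracy $\epsilon_0$ by an FNN of depth $\lesssim\log^2\epsilon_0^{-1}+\log|r_{ij}|$; the $\leq d_{\max}$ factors are then multiplied with $f_{\mult}$ from \pref{lem:approx_elementary}(1), scaled by the coefficient $\ev{f,\psi_r}$ (bounded by $2^{O(d_{\max})}$, giving $\log B\sim T^{1/\alpha}$), and the resulting $N$ sub-networks are placed in parallel and summed. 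Choosing $\epsilon_0\sim 2^{-T}/N$ keeps the total error $\lesssim 2^{-T}$ while $\log\epsilon_0^{-1}\sim T$. Parallel composition of $N\lesssim 2^{T/a^\dagger}\mathrm{poly}(T)$ sub-networks of width $\lesssim d_{\max}\sim T^{1/\alpha}$ yields $W\sim T^{1/\alpha}2^{T/a^\dagger}$; the depth is dominated by $\log^2\epsilon_0^{-1}\sim T^2$ together with the $\sim\log d_{\max}$ multiplication layers, so $L\sim\max\{T^{2/\alpha},T^2\}$; and the sparsity is (per-term sparsity)$\times N\sim T^{2/\alpha}\max\{T^{2/\alpha},T^2\}2^{T/a^\dagger}$, matching the stated rates.

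\textbf{Main obstacle.} The delicate parts are the cardinality estimate of the hyperbolic-cross / anisotropic index set, which must hold uniformly over all $s$ with $\gamma(s)<T$ and is the origin of the $2^{T/a^\dagger}$ factor, and the control of the accumulated error over the $N\approx 2^{T/a^\dagger}$ sub-networks, which forces $\epsilon_0$ to be exponentially small and hence produces the $T^2$ term in the depth. For the piecewise class one must additionally check that the single extracted tuple $\Gamma\circ\Pi$ and the single function $h$ serve simultaneously on every piece $\Omega_\lambda$, and that $\Pi$ is compatible with the norm $\norm{\delta_s(f)\circ\Pi}_{p,P_X}$; this is immediate from the definition of $\scP_{p,\theta}^\gamma$ and leaves the rates unchanged.
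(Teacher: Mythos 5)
You should first note that the paper does not prove this lemma at all: it is imported verbatim as Theorem D.3 of \citet{takakura2023approximation}, so there is no internal proof to compare against. Your sketch does follow the general strategy of that line of work (truncate the trigonometric expansion at $\gamma(s)<T$, observe that only $d_{\max}\lesssim T^{1/\alpha}$ coordinates are active, count the surviving frequencies to get the $2^{T/a^\dagger}$ factor, and realize the truncated expansion by parallel $\cos/\sin$ and multiplication sub-networks), and the bookkeeping of $W$, $S$, $\log B$ is of the right order.

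There is, however, a genuine gap in the truncation step. Your tail bound rests on the claim that the number of $s$ with $\gamma(s)\in[m,m+1)$ is polynomial in $m$; that is true in fixed dimension but false here, where the number of active coordinates itself grows like $m^{1/\alpha}$. For mixed smoothness with $\bar a_k\asymp k^\alpha$ the shell count is of order $2^{\Theta(m^{1/(1+\alpha)})}$ (a partition-type count), and for anisotropic smoothness it is of order $2^{\Theta(m/a^\dagger)}$, i.e.\ genuinely exponential in $m$. Consequently $\bigl(\sum_{\gamma(s)\geq T}2^{-\theta'\gamma(s)}\bigr)^{1/\theta'}$ is not $2^{-T}\mathrm{poly}(T)$, and ``enlarging $T$ by an additive $O(\log T)$'' does not repair it: the required enlargement is at least of order $T^{1/(1+\alpha)}$ (mixed) or a constant fraction of $T$ (anisotropic), which inflates $2^{T/a^\dagger}$, hence $W$ and $S$, beyond the stated rates and would degrade the downstream estimation rate in Theorems 5.2 and 5.4. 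Making this step work at the stated parameter sizes is exactly where the cited proof does real work — e.g.\ using the product representation $\sum_s 2^{-\theta'\langle a,s\rangle}=\prod_{ij}(1-2^{-\theta' a_{ij}})^{-1}$ for mixed smoothness, and, rather than plain linear truncation, a more careful (best-$N$-term type) selection of retained blocks for the anisotropic case, where the naive triangle-inequality tail need not even converge unless $\sum_i\bar a_i^{-1}=1/a^\dagger$ is exploited. A secondary gloss: the expansion $\langle f,\psi_r\rangle$ is with respect to Lebesgue measure while all norms in the class are taken with respect to a general $P_X$, so the coefficient bound behind $\log B\sim T^{1/\alpha}$ also needs an argument rather than the passing remark you give.
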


From this lemma, we can see that, if the the convolution layer can approximate $\Gamma$, 
the SSM can give important features to the FNN, and the FNN can approximate the target function.

Now, we prove \pref{thm:approximation-gamma_smooth}.
\begin{proof}[Proof of \pref{thm:approximation-gamma_smooth}]
    Firstly, we construct the embedding layer $\emb\colon\bR^{d\times\infty}\to\bR^{D\times\infty}$.
    Set the embedding dimension $D$ as $\max\{d, d_{\max}\}+1$.
    We set $E_1\in\bR^{D\times d}$ to satisfy
    \begin{equation*}
        E_1 x = [x_1, ~\ldots, ~x_d, ~0, \underbrace{0, ~\ldots, ~0}_{D-d-1\text{ elements}}]^\top.
    \end{equation*}
    for $x=[x_1, \ldots, x_d]\in\bR^d$.
    Additionally, we set $E_2\in\bR^D$ to satisfy
    \begin{equation*}
        E_2 = [\underbrace{0, ~\ldots, ~0}_{d\text{ elements}}, ~1, \underbrace{0, ~\ldots, ~0}_{D-d-1\text{ elements}}]^\top.
    \end{equation*}
    Note that $\norm{E_1}_\infty=\norm{E_2}_\infty=1$.
    Then, the constructed embedding layer $\emb$ is represented as follows:
    \begin{equation*}
        \emb(X) = \begin{bmatrix}
            \cdots & x_t      & \cdots\\
            \cdots & 1        & \cdots\\
            \cdots & 0        & \cdots\\
            \vdots & \vdots   & \vdots\\
            \cdots & 0        & \cdots\\
        \end{bmatrix}
        \in\bR^{D\times\infty}. 
    \end{equation*}
    
    Secondly, we construct the convolution layer.
    The role of this layer is to approximate the feature extractor $\Gamma$.
    The weight matrix $W^V\in\bR^{D\times |X|}$ is set to extract the important ``dimensions''~($i_1, \ldots, i_{d_{\max}}$).
    More precisely, we set $W_V$ to satisfy
    \begin{equation*}
        W^V y=[y_{i_1}, \ldots, y_{i_{d_{\max}}}, \underbrace{0, ~\ldots, ~0}_{D-d_{\max}\text{ elements}}]\in\bR^{D}
    \end{equation*}
    for $y=[y_1, \ldots, y_D]\in\bR^D$.
    Then, the resulted projection is represented as follows:
    \begin{equation*}
        W^V(\emb(X))=\begin{bmatrix}
            \cdots & X_{t,i_1}            & \cdots\\
            \vdots & \vdots               & \vdots\\
            \cdots & X_{t,i_{d_{\max}}}   & \cdots\\
            \cdots & 0                    & \cdots\\
            \vdots & \vdots               & \cdots\\
            \cdots & 0                    & \cdots\\
        \end{bmatrix}
        \in\bR^{D\times\infty}.
    \end{equation*}
    Next, we construct the convolution filter.
    From the assumption $a_{ij}=\Omega(\log(|j|+1))$, we can choose the window size $U\in\bN$ such that
    \begin{equation*}
        \log U\sim T \qq{and} a_{ij}\leq T\Longrightarrow j\leq U.
    \end{equation*}

    \pref{lem:approx_gaussian_kernel} shows that, for each $j_m~(m=1, \ldots, d_{\max})$, 
    for any $\epsilon>0, \kappa>0$, there exists $k_m\in\Psi'(W', B)$ with
    \begin{equation*}
        W'\lesssim\log^2\epsilon^{-1}, \quad B\lesssim \log\epsilon^{-1}\log\kappa
    \end{equation*}
    such that
    \begin{equation*}
        \max_{j=0, \ldots, U}\abs{k_m\qty(\frac jU) - \exp(-\kappa\cdot\sin^2\qty(\frac\pi2\qty(\frac jU - \frac{j_m}U)))}\lesssim\epsilon.
    \end{equation*}
    Now, if $\abs{j-j_m}\geq 1$, it holds
    \begin{equation*}
        \exp(-\kappa\cdot\sin^2\qty(\frac\pi2\qty(\frac jU - \frac{j_m}U)))
        \leq \exp(-\kappa\cdot\qty(\frac2\pi\cdot\frac\pi2\cdot\frac1U)^2)
        = \exp(-\frac\kappa{U^2}), 
    \end{equation*}
    and, if $j=j_m$, it holds
    \begin{equation*}
        \exp(-\kappa\cdot\sin^2\qty(\frac\pi2\qty(\frac jU - \frac{j_m}U))) = 1.
    \end{equation*}
    Therefore, if we set $\kappa=U^2\log\epsilon^{-1}$, we have
    \begin{equation*}
        \max_{j=0, \ldots, U}\abs{k_m\qty(\frac jU) - \delta_{j_m}(j)}\lesssim 2\epsilon,
    \end{equation*}
    where $\delta_{j'}$ is the function defined by
    \begin{equation*}
        \delta_{j'}(j)=\begin{cases}
            1 & \text{if }j=j',\\
            0 & \text{otherwise}.
        \end{cases}
    \end{equation*}
    This inequality show that the filter $k$ can approximately extract the important tokens.

    Finally, we set the weight matrix $W^Q$ by 
    \begin{equation*}
        W^Q_{i, j}=\begin{cases}
            1 & \text{if $j=d+1$}\\
            0 & \text{otherwise},
        \end{cases}
    \end{equation*}
    which results in $W^Q(\emb(X))=[1, \ldots, 1]^\top$ and 
    \begin{align*}
        g_1\circ\emb(X)
        &= W^Q(\emb(X))\odot(\beta^{(1)}\ast W^{(0)}(\emb(X)))\\
        &=\beta^{(1)}\ast W^{(0)}(\emb(X))\\
        &=[z_t]_{t=-\infty}^0 \in\bR^{D\times\infty},\\
        z_t
        &=\sum_{s=0}^{U-1}k(s)\sum_{i=1}^D W^{(0)}_{i, t-s}X_{i, t-s}\\
        &=\begin{bmatrix}
            \sum_{s=0}^{U-1} (k(s))_1 X_{i_1,t-s}\\
            \vdots\\
            \sum_{s=0}^{U-1} (k(s))_{d_{\max}} X_{i_{d_{\max}},t-s}\\
            0\\
            \vdots\\
            0
        \end{bmatrix}.
    \end{align*} 

    Thirdly, we construct the FNN layer.
    From \pref{lem:approximation-fnn}, there exists an FNN $\hat f\in\Psi(L, W, S, B)$ such that
    \begin{equation}\label{eq:proof-gamma-smooth-approx-error-hat_f}
        \norm{\hat f\circ\Gamma - F^\circ}_{2, P_X}\lesssim 2^{-T}, 
    \end{equation}
    where
    \begin{align}
        \begin{aligned}\label{eq:proof-gamma-smooth-approx-fnn-class}
            L & \sim \max\qty{T^{2/\alpha}, T^2}, W \sim T^{1/\alpha}2^{T / a^\dagger},                  \\
            S & \sim T^{2/\alpha}\max\qty{T^{2/\alpha}, T^2}2^{T / a^\dagger}, \log B \sim T^{1/\alpha}.
        \end{aligned}
    \end{align}
    Let $C\colon\bR^D\to\bR^d$ be a linear map such that
    \begin{equation*}
        Cy = [y_1, \ldots, y_{d_{\max}}]^\top
    \end{equation*}
    for $y=[y_1, \ldots, y_D]^\top\in\bR^D$,
    and we set $f_1\coloneqq \hat f\circ C$.
    Note that $f_1\in\Psi(L, W, S, B)$ for $L, W, S, B$ defined in \eqref{eq:proof-gamma-smooth-approx-fnn-class}.
    The constructed SSM $\hat F$ is represented as follows:
    \begin{equation*}
        \hat F(X) = f_1(z_0) = \hat f \circ C(z_0)=\hat f\circ\hat\Gamma(X)_0,
    \end{equation*}
    where
    \begin{equation*}
        \hat\Gamma(X)=\left[\sum_{s=0}^{U-1} (k(s))_m X_{i_m,-s}\right]_{m=1}^{d_{\max}}\in\bR^{d_{\max}}.
    \end{equation*}

    Now, we evaluate the error between the target function $F^\circ$ and the constructed model $\hat F$.
    We evaluate the error by separating into two terms:
    \begin{equation*}
        \norm{\hat F-F^\circ}_{2, P_X}
        \leq \norm{\hat F-\hat f\circ\Gamma}_{2, P_X}+\norm{\hat f\circ\Gamma-F^\circ}_{2, P_X}.
    \end{equation*}
    The second term can be bounded by \eqref{eq:proof-gamma-smooth-approx-error-hat_f},
    so we evaluate the first term. 
    Since $\hat f\in\Psi(L, W, S, B)$ is $(BW)^L$-lipschitz continuous, 
    for any $X\in\InputDomain$, we have
    \begin{equation*}
        \left|\hat F(X)-\hat f\circ\Gamma(X)\right|
        =\left|\hat f(\hat\Gamma(X)) - \hat f(\Gamma(X))\right|
        \leq (BW)^L \norm{\hat\Gamma(X) - \Gamma(X)}_\infty.
    \end{equation*}
    Since $X\in\InputDomain$, it holds
    \begin{align*}
        \norm{\hat\Gamma(X) - \Gamma(X)}_\infty
        &=\max_{m=1, \ldots, d_{\max}}\left|\sum_{s=0}^{U-1}(k(s))_m X_{i_m, -s} - \delta_{j_m}(s)X_{i_m, -s}\right|\\
        &\leq \max_{m=1, \ldots, d_{\max}}\sum_{s=0}^{U-1}\left|(k(s))_m  - \delta_{j_m}(s)\right| \\
        &\leq U\epsilon.
    \end{align*}
    By setting $\epsilon=2^{-T}/U$, we have
    \begin{equation*}
        \left|\hat F(X)-\hat f\circ\Gamma(X)\right|\leq \norm{\hat\Gamma(X) - \Gamma(X)}_\infty \leq 2^{-T}
    \end{equation*}
    for any $X\in\InputDomain$.
    Therefore, it holds
    \begin{align*}
        \norm{\hat F-F^\circ}_{2, P_X}
        &\leq \norm{\hat F-\hat f\circ\Gamma}_{2, P_X}
        +\norm{\hat f\circ\Gamma-F^\circ}_{2, P_X}\\
        &\leq \sup_{X\in\InputDomain}\left|\hat F(X)-\hat f\circ\Gamma(X)\right|
        +\norm{\hat f\circ\Gamma-F^\circ}_{2, P_X}\\
        &\lesssim 2^{-T}.
    \end{align*}

    Finally, we evaluate the parameters $L, W, S, B$ which controls the class of $k\in\Psi'(W', B)$.
    Since $\|a\|_{wl^\alpha}=\sup_j j^\alpha\bar a_j^{-1}\leq 1$, it holds
    \begin{equation*}
        \dmax\coloneqq\abs{\qty{(i,j)\relmiddle|\exists s\in\bN_0^{d\times\infty}, s_{ij}\neq0, \gamma(s)<T}}\leq T^{1/\alpha}.
    \end{equation*}
    Therefore, we have
    \begin{align*}
        W' &= \dmax\cdot \log^2\epsilon^{-1} \lesssim T^{2+1/\alpha}, \\
        \log B &\sim \log\epsilon^{-1}\log(U^2\log\epsilon^{-1})\lesssim T^2.
    \end{align*}
    This completes the proof.
\end{proof}

\subsection{Proof of \pref{thm:approximation_piecewise_gamma_smooth}}
\begin{proof}[Proof of \pref{thm:approximation_piecewise_gamma_smooth}]
    For $T>0$, we define
    \begin{align*}
        I_j(T, \gamma) &\coloneqq \qty{i\mid(i,j)\in I(T, \gamma)}=\qty{i_1^{(j)}, \ldots, i_{\abs{I_j}}^{(j)}},\\
        r_{\max}(T, \gamma) &\coloneqq \max\qty{j\in[J]\mid I_j(T, \gamma)\neq\emptyset},
    \end{align*}
    Note that $r_{\max}(T, \gamma)\sim T^{1/\alpha}$ since $a_{ij}=\Omega(j^\alpha)$.

    \pref{thm:approximation-gamma_smooth} implies that 
    there exist an embedding layer $\emb$, an FNN $f_1\in\Psi(L, W, S, B)$ and a convolution layer $g_1\in\scC(U, D, L', W', S, B)$ with
    \begin{align*}
        & M=1, \quad \log U \sim T, \quad D \sim T^{1/\alpha},\\
		& L \sim T, \quad W_1 \sim T^{1/\alpha},\\
        & L' \sim \max\qty{T^{2/\alpha}, T^2}, \quad W' \sim T^{1/\alpha}2^{T / a^\dagger},                  \\
		& S \sim T^{2/\alpha}\max\qty{T^{2/\alpha}, T^2}2^{T / a^\dagger}, \quad \log B \sim T^{1/\alpha},
	\end{align*}
    such that
    \begin{equation*}
        f_1\circ g_1\circ\emb(X)_i=[
            x_i^\top, \widehat\mu_i(X), \underbrace{0, \ldots, 0}_{\text{$d_{\max}$ elements}}, \underbrace{-1, \ldots, -1}_{\text{$r_{\max}$ elements}}
        ]^\top,
    \end{equation*}
    for all $i\in\bZ$, where $\widehat\mu_i(X)$ satisfies
    \begin{equation*}
        \abs{\widehat\mu_i(X)_{-t}-(\mu_i(X)-1)}\lesssim 2^{-T}.
    \end{equation*}
    Intuitively, the $i$-th elements for $i=3, \ldots, 2+d_{\max}$ are used to store the feature $X_{t-i, j}$ for $j\in[d]$, 
    and the $i$-th elements for $i=3+d_{\max}, \ldots, 2+d_{\max}+r_{\max}$ are buffers to store which elements are already selected.
    Note that, for any $i\leq r_{\max}$, it holds
    \begin{equation*}
        \widehat\mu(X)_{\pi_\lambda(i)} - \widehat\mu(X)_{\pi_\lambda(i+1)}
        \gtrsim (\mu(X)_{\pi_\lambda(i)} - 2^{-T}) - (\mu(X)_{\pi_\lambda(i+1)} + 2^{-T})
        \gtrsim T^{-\beta/\alpha}, 
    \end{equation*}
    and $\widehat\mu(X)_t\in[-1, 0]$ for all $t\in[0:V]$.
    
    In the following, we set
    \begin{equation*}
        U = V.
    \end{equation*}
    Let us set $\chi_T\sim\displaystyle\frac{T\log 2+2\log V}{T^{-\beta/\alpha}}$.
    Using \pref{lem:approx_elementary}, 
    we see that, 
    there exists a neural network $\phi_{\exp}\in\Psi(L, W, S, B)$ with
    \begin{equation*}
        L\lesssim T^{2(1+\beta/\alpha)}\log^2 V,\quad W\lesssim T^{1+\beta/\alpha}\log V ,\quad 
        S\lesssim T^{2(1+\beta/\alpha)}\log^2 V,\quad \log B\lesssim T^{2(1+\beta/\alpha)}\log^2 V,
    \end{equation*}
    such that, for any $x\leq 0$, it holds
    \begin{equation*}
        \abs{\phi_{\exp}(\chi_T x)-\exp(\chi_T x)}\leq 2^{-2T^{1+\beta/\alpha}}/V^3.
    \end{equation*}
    Moreover, using \pref{lem:approx_elementary} again,
    we see that there exists a neural network $\phi_\times$ with
    \begin{equation*}
        L\lesssim T^{2(1+\beta/\alpha)}\log^2 V, \quad
        W\lesssim 1, \quad 
        S\lesssim T^{2(1+\beta/\alpha)}\log^2 V, \quad 
        \log B\lesssim T^{1+\beta/\alpha}\log V, 
    \end{equation*}
    such that, for any $0\leq x \lesssim V^2\exp(T^{1+\beta/\alpha}),~0\leq y \lesssim 1$, it holds
    \begin{equation*}
        \abs{\phi_\times(x, y)-xy}\leq 2^{-2T^{1+\beta/\alpha}}/V^3.
    \end{equation*}
    Then, for any $x\leq0$ and $y\in[0, 1]$, it holds
    \begin{align*}
        \abs{\phi_\times(\phi_{\exp}(\chi_T x), y) - \exp(\chi_T x)y}
        &\leq \abs{\phi_\times(\phi_{\exp}(\chi_T x), y) - \phi_{\exp}(\chi_T x)y} 
            + \abs{\phi_{\exp}(\chi_T x)y - \exp(\chi_T x)y}\\
        &\leq 2^{-2T^{1+\beta/\alpha}}/V^3 + 2^{-2T^{1+\beta/\alpha}}/V^3\\
        &\lesssim 2^{-2T^{1+\beta/\alpha}}/V^3.
    \end{align*}

    Then, let us define $f'_1$ be an FNN layer such that it holds
    \begin{equation*}
        f'_1\circ f_1\circ g_1(X) \circ\emb(X)_i=[
            \phi_\times(\phi_{\exp}(\widehat\mu_i(X)), x_i), \underbrace{0, \ldots, 0}_{\text{$d_{\max}$ elements}}, \underbrace{-1, \ldots, -1}_{\text{$r_{\max}$ elements}}
        ]^\top.
    \end{equation*}
    Additionally, we define
    \begin{equation*}
        Z_m\coloneqq (f'_m\circ g_m\circ f_m) \circ\cdots\circ (f'_1\circ g_1 \circ f_1)\circ\emb(X),
    \end{equation*}
    for $m\in[1: r_{\max}]$.
    We construct remaining layers $f_2, g_2, f'_2, \ldots, f_{r_{\max}+1}, g_{r_{\max}+1}, f'_{r_{\max}+1}$ to make them satisfying
    \begin{align*}
        &Z_m = [
            \phi_\times(\phi_{\exp}(\widehat\mu_i(X)), x_i), 
            \widehat X_{i_1^{(1)}, j_1}, \ldots, \widehat X_{i_{\abs{I_1}}^{(1)}, j_1}, \ldots, 
            \widehat X_{i_1^{(m)}, j_m}, \ldots, \widehat X_{i_{\abs{I_m}}^{(m)}, j_m}, 
            \underbrace{0, \ldots, 0}_{d_{\max} - \sum_{j=1}^m\abs{I_j}\text{ elements}}, \\
        &\hspace{90mm}
            \widehat j_1/V, \ldots, \widehat j_m/V, \underbrace{-1, \ldots, -1}_{r_{\max} - m\text{ elements}}
        ]^\top,
    \end{align*}
    where $\widehat X_{i_k^{(j_m)}, j_m}, \widehat j_m$ are 
    the approximation of $\widehat X_{i_k^{(j_m)}, j_m}, \widehat j_m~(m=1, \ldots, M; k=1, \ldots, \abs{I_{j_m}})$ respectively such that
    \begin{equation*}
        \abs{\widehat X_{i_k^{(j_m)}, j_m} - X_{i_k^{(j_m)}, j_m}}\lesssim 2^{-T}, \quad 
        \abs{\widehat j_m/V - j_m/V}\lesssim 2^{-3T^{1+\beta/\alpha}}/V^5.
    \end{equation*}
    Then, we see that
    \begin{equation*}
        Z_M = [
            x_i^\top, \widehat\mu_i(X), 
            \widehat X_{i_1^{(1)}, j_1}, \ldots, \widehat X_{i_{\abs{I_1}}^{(1)}, j_1}, \ldots, 
            \widehat X_{i_1^{(r_{\max})}, j_{r_{\max}}}, \ldots, \widehat X_{i_{\abs{I_{r_{\max}}}}^{(r_{\max})}, j_{r_{\max}}}, 
            \widehat j_1/V, \ldots, \widehat j_M/V
        ]^\top.
    \end{equation*}
    Hence, \pref{lem:approximation-fnn} shows that there exists a FNN $f'_M\in\Psi(L, W, S, B)$ with
    \begin{align*}
        &L\lesssim \max\qty{T^{2/\alpha}, T^2}, \quad W\lesssim T^{1/\alpha}2^{T/\alpha^\dagger},\\
        &S\lesssim T^{2/\alpha}\max\qty{T^{2/\alpha, T^2}}2^{T/\alpha^\dagger}, \quad \log B\lesssim T^{1/\alpha}, 
    \end{align*}
    such that 
    \begin{equation*}
        \norm{f'_M(Z_M) - f}_2\lesssim 2^{-T}.
    \end{equation*}
    The same discussion as \pref{thm:approximation-gamma_smooth} gives the desired result.

    In the following, we construct an FNN $f_m$ and a convolution layer $g_m$ for $m\in[1: r_{\max}]$.
    The proof mainly divided into two parts: (i) obtaining $\widehat X_{i^{(m)}_k, j_m}$, i.e., the approximation of important features $X_{i^{(m)}_k, j_m}~(k=1, \ldots, \abs{I_m})$ 
    and (ii) getting $\widehat j_m$, i.e., recording which token $j_m$ was selected.

    \paragraph{Picking up the important features $X_{i^{(m)}_k, j_m}~(k=1, \ldots, \abs{I_m})$}
    Due to \pref{lem:err_softmax_hardmax} and the fact that
    $j_m\in[0:V]$ is an index such that $\mu_{t-j}$ ($\widehat\mu_{t-j}$) is 
    the largest in $\mu_{t-j}$ ($\widehat\mu_{t-j})~(j\neq j_1, \ldots, j_{m-1})$,
    for any $t\in[0:V]$ with $t\neq t_0$, it holds
    \begin{equation*}
        \abs{
            \frac{\sum_{j=0}^V X_{i, j}\exp(\chi_T\cdot \widehat\mu_{t-j})\cdot(1-\bI_{S}(j))}
            {\sum_{j=0}^V \exp(\chi_T\cdot \widehat\mu_{t-j})\cdot(1-\bI_{S}(j))} 
            - X_{i,j_m}}
        \leq 2V^2\exp(-\chi_T\cdot T^{-\beta/\alpha}) \lesssim 2^{-T}, 
    \end{equation*}
    where $S=\qty{j_1, \ldots, j_{m-1}}$.
    Now, let us approximate 
    \begin{equation*}
        \frac{\sum_{j=0}^V X_{i, j}\exp(\chi_T\cdot \widehat\mu_{t_i})\cdot(1-\bI_{S}(j))}
        {\sum_{j=0}^V \exp(\chi_T\cdot \widehat\mu_{t-i})\cdot(1-\bI_{S}(j))}
        = \frac{\frac1V\sum_{j=0}^V X_{i, j}\exp(\chi_T\cdot \widehat\mu_{t_i})\cdot(1-\bI_{S}(j))}
        {\frac1V\sum_{j=0}^V \exp(\chi_T\cdot \widehat\mu_{t-i})\cdot(1-\bI_{S}(j))}
    \end{equation*}
    using neural networks.
    Using \pref{lem:approx_division}, we can see that there exists a neural network $\phi_\ast\in\Psi(L, W, S, B)$ with
    \begin{equation*}
        L\lesssim T^{2(1+\beta/\alpha)}\log^2 V,\quad W\lesssim T^{3(1+\beta/\alpha)}\log^3V,\quad 
        S\lesssim T^{4(1+\beta/\alpha)}\log^4 V,\quad \log B\lesssim T^{1+\beta/\alpha}\log V,
    \end{equation*}
    such that, for any $x\in[\exp(-\chi_T), \exp(\chi_T)], y\in[0, V], x'>0, y'>0$, it holds
    \begin{equation*}
        \abs{\phi_\ast(x', y') - \frac yx}
        \lesssim 2^{-T} + V^2 2^{T^{1+\beta/\alpha}}\qty(\abs{x-x'} + \abs{y-y'}).
    \end{equation*}

    Next, \pref{lem:approx_gaussian_kernel} implies that there exists a neural networks $\phi'_n\in\Psi'(1, B)$ and $\phi_n\in\Psi(L, W, S, B)~(n=1, \ldots, N)$ with
    \begin{align*}
        N&\lesssim T^{1+\beta/\alpha}\log T\log V, \\
        L&\lesssim T^{5(1+\beta/\alpha)}\log T\log^5 V, \quad 
        W\lesssim 1, \\
        S&\lesssim T^{5(1+\beta/\alpha)}\log T\log^5 V, \quad 
        \log B\lesssim T^{3(1+\beta/\alpha)}\log T\log^3 V,
    \end{align*}
    such that, for any $t, x, \widehat x\in[0, 1]$, it holds
    \begin{align*}
        &\abs{\sum_{n=0}^N \phi'_n(t)\phi_n(\widehat x) - \exp(-\frac{V^2(\frac1\alpha\log T + 2T^{1+\beta/\alpha}+2\log V)\cdot\sin^2\qty(\frac\pi2(t-x))}2)} \\
        &\hspace{40mm}\lesssim T^{-1/\alpha}2^{-2T^{1+\beta/\alpha}}/V^2 +T^{1+\beta/\alpha}V^3\abs{x-\hat x}.
    \end{align*}
    Since 
    \begin{align*}
        &\exp(-\frac{V^2(\frac1\alpha\log T + 2T^{1+\beta/\alpha}+2\log V)\sin^2\qty(\frac\pi2(t-x))}2)\\
        &\hspace{70mm}\begin{cases}
            \leq T^{-1/\alpha}2^{-2T^{1+\beta/\alpha}}/V^2 &\quad(\abs{t-x}\geq 1/V), \\
            = 1 & \quad(t=x),
        \end{cases}
    \end{align*}
    we have
    \begin{equation*}
        \abs{\exp(-\frac{V^2(\frac1\alpha\log T + 2T^{1+\beta/\alpha}+2\log V)\sin^2\qty(\frac\pi2(t-x))}2) - \bI_{\qty{x}}(t)}\lesssim 2T^{-1/\alpha}2^{-2T^{1+\beta/\alpha}}/V^2.
    \end{equation*}
    Therefore, we have
    \begin{equation*}
        \abs{\sum_{n=1}^N \phi'_n(t)\phi_n(\widehat x) - \bI_{\qty{x}}(t)} 
        \lesssim T^{-1/\alpha}2^{-(1+\beta/\alpha)T}/V^2 + T^{1+\beta/\alpha}V^3\abs{x-\hat x}.
    \end{equation*}
    Summing up over $x=j_1/V, \ldots, j_{m-1}/V$, we have
    \begin{align*}
        &\abs{\sum_{m'=1}^{m-1}\sum_{i=1}^I \phi_{0}^{(j_{m'}, i)}(t)\phi_1^{(j_{m'}, i)}(\hat j_{m'}/V) - \bI_S(t)} \\
        &\qquad\lesssim r_{\max}\qty(T^{-1/\alpha}2^{-2T^{1+\beta/\alpha}}/V^2 + T^{1+\beta/\alpha}V^3\abs{\hat j_{m'}/V - j_{m'}/V})\\
        &\qquad\lesssim 2^{-2T^{1+\beta/\alpha}}/V^2.
    \end{align*}

    Combining the results above, we have
    \begin{align*}
        &\Biggl|\frac1V\sum_{j=0}^V
            \phi_\times(\phi_{\exp}(X_{i, j}, \chi_T \widehat\cdot\mu_{t-j}))\cdot
            \qty(1-\sum_{m'=1}^{m-1}\sum_{i=1}^I \phi_{0}^{(j_{m'}, i)}(t)\phi_1^{(j_{m'}, i)}(\hat j_{m'}/V)) \\
        &\hspace{75mm}- \frac1V\sum_{j=0}^V X_{i, j}\exp(\chi_T\cdot \widehat\mu_{t-j})\cdot(1-\bI_{S}(j))\Biggr|\\
        &\quad\lesssim \frac1V\sum_{j=0}^V\Biggl(
            \abs{\phi_\times(\phi_{\exp}(X_{i, j}, \chi_T \cdot\widehat\mu_{t-j}))\cdot \qty(
                \sum_{m'=1}^{m-1}\sum_{i=1}^I \phi_{0}^{(j_{m'}, i)}(t)\phi_1^{(j_{m'}, i)}(\hat j_{m'}/V) - \bI_S(j))}\\
        &\hspace{50mm}+\abs{\qty(\phi_\times(\phi_{\exp}(X_{i, j}, \chi_T \widehat\cdot\mu_{t-j})) - X_{i, j}\exp(\chi_T\cdot \widehat\mu_{t-j}))\bI_{S}(j)}
        \Biggr)\\
        &\quad\lesssim 2^{-2T^{1+\beta/\alpha}}/V^2\\
    \end{align*}
    Similarly, we have
    \begin{align*}
        &\Biggl|\frac1V\sum_{j=0}^V
            \phi_{\exp}(\chi_T \widehat\cdot\mu_{t-j})\cdot
            \qty(1-\sum_{m'=1}^{m-1}\sum_{i=1}^I \phi_{0}^{(j_{m'}, i)}(t)\phi_1^{(j_{m'}, i)}(\hat j_{m'}/V)) \\
        &\hspace{75mm}- \frac1V\sum_{j=0}^V \exp(\chi_T\cdot \widehat\mu_{t-j})\cdot(1-\bI_{S}(j))\Biggr|\\
        &\lesssim 2^{-2T^{1+\beta/\alpha}}/V^2\\
    \end{align*}
    Using the facts that
    \begin{equation*}
        \begin{gathered}
            \exp(-\chi_T) \leq \frac1V\sum_{t=0}^V \exp(\chi_T\cdot \widehat\mu[t]) \leq 1, \\
            \frac1V\sum_{t=0}^V u[t] \exp(\chi_T\cdot \widehat\mu[t]) \leq \frac1V\sum_{t=0}^V \exp(\chi_T\cdot \widehat\mu[t]), 
        \end{gathered}
    \end{equation*}
    we have
    \begin{align*}
        &\Biggl|
            \phi_\ast\Biggl(
                \frac1V\sum_{j=0}^V
                \phi_\times(X_{i, j}, \phi_{\exp}(\widehat\chi_T\cdot\mu_{t-j}))\cdot
                \qty(1-\sum_{m'=1}^{m-1}\sum_{i=1}^I \phi_{0}^{(j_{m'}, i)}(t)\phi_1^{(j_{m'}, i)}(\hat j_{m'}/V)),  \\
        &\hspace{30mm}
                \frac1V\sum_{j=0}^V
                \phi_{\exp}(\widehat\chi_T\cdot\mu_{t-j})\cdot
                \qty(1-\sum_{m'=1}^{m-1}\sum_{i=1}^I \phi_{0}^{(j_{m'}, i)}(t)\phi_1^{(j_{m'}, i)}(\hat j_{m'}/V))
            \Biggr) \\
        &\hspace{80mm}
            - \frac{\sum_{t=0}^V u[t]\exp(\chi_T\cdot \widehat\mu[t])\cdot(1-\bI_{S}(j))}
                {\sum_{t=0}^V \exp(\chi_T\cdot \widehat\mu[t])\cdot(1-\bI_{S}(j))} 
        \Biggr|\\
        &\quad\lesssim 2^{-T} + V^2 2^{T^{1+\beta/\alpha}} \cdot 2^{-2T^{1+\beta/\alpha}}/V^2\lesssim 2^{-T}.
    \end{align*}
    Overall, we can see that, there exist neural networks $\phi_O\in\Psi'(L, W, S, B)$ and $\phi_A, \phi_B, \phi_C\in\Psi(L, W, S, B)$ with
    \begin{align*}
        &L\lesssim T^{5+1/\alpha+5\beta/\alpha}\log T\log^5 V,\quad 
        W\lesssim T^{3+1/\alpha+3\beta/\alpha}\log T\log^3 V,\\
        &S\lesssim T^{5+1/\alpha+5\beta/\alpha}\log T\log^5 V,\quad 
        \log B\lesssim T^{3+1/\alpha+3\beta/\alpha}\log T\log^3 V,
    \end{align*}
    such that 
    \begin{equation*}
        \max_{i\in\qty{i_1^{(m)}, \ldots, i_{\abs{I_m}}^{(m)}}}\Biggl|
            \underbrace{\phi_C\qty(\sum_{j=0}^V\phi_O(j/V)\phi_A(Z_{m-1})\phi_B(Z_{m-1}[-j]))}_{\eqqcolon\widehat X_{i, j_m}}
            - X_{j_m, i}
        \Biggr|\lesssim 2^{-T}.
    \end{equation*}

    \paragraph{Recording which token was picked up}
    Similar discussion as above shows that there exist neural networks $\phi'_O\in\Psi'(L, W, S, B)$ and $\phi'_A, \phi'_B, \phi'_C\in\Psi(L, W, S, B)$ with
    \begin{align*}
        &L\lesssim T^{5+1/\alpha+5\beta/\alpha}\log T\log^5 V,\quad 
        W\lesssim T^{3+1/\alpha+3\beta/\alpha}\log T\log^3 V,\\
        &S\lesssim T^{5+1/\alpha+5\beta/\alpha}\log T\log^5 V,\quad 
        \log B\lesssim T^{3+1/\alpha+3\beta/\alpha}\log T\log^3 V,
    \end{align*}
    such that 
    \begin{equation*}
        \abs{
            \phi'_C\qty(\sum_{j=0}^V\phi'_O(j/V)\phi'_A(Z_{m-1})\phi'_B(Z_{m-1}[-j]))
            - \sin(\frac\pi4\frac{j_m}{V})
        }\lesssim 2^{-T}.
    \end{equation*}
    \pref{lem:approx_elementary} shows that there exists a neural network $\phi_{\arcsin}\in\Psi(L, W, S, B)$ with
    \begin{align*}
        &L\lesssim T^{2(1+\beta/\alpha)}\log^2 V,\quad W\lesssim 1,\quad 
        S\lesssim T^{2(1+\beta/\alpha)}\log^2 V,\quad \log B\lesssim T^{1+\beta/\alpha}\log V,
    \end{align*}
    for any $x\in[0, \pi/4]$, it holds
    \begin{equation*}
        \abs{\phi_{\arcsin}(x) - \arcsin(x)}\lesssim 2^{-3T^{1+\beta/\alpha}}/V^5.
    \end{equation*}
    Using this network, we can obtain $\widehat j_m/V$ such that $\abs{\widehat j_m/V - j_m/V}\lesssim 2^{-3T^{1+\beta/\alpha}}/V^5$.

    \paragraph{Finishing the proof}
    We can easily see that, constructing the weight matrix in the convolution layers appropriately, 
    we can obtain $Z_m$ from $Z_{m-1}$ using the neural networks constructed above.
    This completes the proof.
\end{proof}

\section{Proof of \pref{thm:estimation_piecewise-gamma-smooth}}\label{app:proof_estimation_piecewise-gamma-smooth}
\subsection{Preparation}

In this subsection, we prove the following theorem.

\begin{theorem}\label{thm:estimation_general}
    Let $\hat F\in\scS(M, U, D, L, W, S, B)$ be an ERM estimator which minimizes the emprical cost.
    Then, for any $\delta\in(0, 1)$, it holds that
    \begin{equation*}
        R_{l, r}(\hat F, F^\circ)
        \lesssim\inf_{F\in\scS}\frac{1}{r-l+1}\sum_{i=l}^r\norm{F_i-F^\circ_i}_{2, P_X}^2
        +\frac1n\cdot M^2L(S+D)\log\qty(\frac{DULWB}{\delta}) + \delta.
    \end{equation*}
\end{theorem}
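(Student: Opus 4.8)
The plan is to establish Theorem~\ref{thm:estimation_general} via the standard route for nonparametric regression with bounded, clipped estimators: an oracle-type inequality that splits the risk into an approximation term (the infimum over $\scS$) and a complexity term controlled by the covering number of the function class $\scS(M, U, D, L, W, S, B)$. First I would observe that, since $\clip_R$ ensures $\|F\|_\infty \le R$ for every $F \in \scS$ and the noise is Gaussian with variance $\sigma^2$, the classical analysis of least-squares estimators over a bounded class applies: one obtains, with high probability (and hence in expectation after integrating the tail), a bound of the form $R_{l,r}(\hat F, F^\circ) \lesssim \inf_{F \in \scS}\frac{1}{r-l+1}\sum_{i=l}^r \|F_i - F^\circ_i\|_{2,P_X}^2 + \frac{1}{n}\log\mathcal{N}(\scS, \|\cdot\|_\infty, \delta') + \delta$, where $\mathcal{N}$ is a covering number at an appropriately chosen scale $\delta'$ tied to $\delta$. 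Here the shift-equivariance of both $F^\circ$ and the networks in $\scS$ lets us reduce the sum over the segment $[l:r]$ to a single coordinate, which is why the final bound does not depend on $r-l+1$ except through the averaging.

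The main technical content is therefore the covering number estimate $\log\mathcal{N}(\scS(M,U,D,L,W,S,B), \|\cdot\|_\infty, \delta') \lesssim M^2 L (S+D)\log(DULWB/\delta')$. The plan here is to bound the Lipschitz dependence of the network output on its parameters, layer by layer. Each of the $M$ blocks is a composition $f_i \circ g_i$ of an FNN layer in $\Psi(L,W,S,B)$ and a gated convolution layer in $\scC(U,D,B)$. For the FNN layers one uses the standard fact that a depth-$L$, width-$W$, sparsity-$S$, norm-bound-$B$ ReLU network has $\log$-covering number $\lesssim L(S+\text{something})\log(LWB/\delta')$; for the gated convolution layer one checks that $g(X) = (W^Q X)\odot(H\ast(W^V X))$ is Lipschitz in the $O(D^2)$ entries of $W^Q, W^V$ and the $O(D)$ filter parameters $c_1,c_2,a_1,a_2$, with Lipschitz constant polynomial in $U$, $D$, $B$, and the input bound — the convolution with a length-$U$ filter contributes a factor $U$, and the trigonometric filter is Lipschitz in its parameters with constant polynomial in $U$ and $B$. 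Composing $M$ such blocks multiplies the per-block Lipschitz constants, which after taking logarithms produces the $M^2$ factor (the $M$ blocks each contribute a Lipschitz factor whose log is $O(M \cdot \text{poly-log})$ when propagated through the remaining $M$ layers) and the $\log(DULWB/\delta)$ factor, while the total parameter count is $O(M(S+D))$ giving the $M L (S+D)$ dimension factor. Finally one sets $\delta' \asymp \delta/(\text{poly})$ so the discretization error contributes the additive $\delta$.

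The step I expect to be the main obstacle is controlling the gated convolution layer cleanly: unlike a plain FNN, the map $g$ involves a product of two linear images of the input and a convolution, so its output bound and its parameter-Lipschitz constant both grow with the running bound on the hidden activations, which itself depends on $U$, $B$, and $D$ through the previous layers. One must track these bounds carefully through the composition to be sure the final exponents in $\log(DULWB/\delta)$ are as claimed and that no hidden dependence on $R$ or on the segment length sneaks in; the clipping at the end caps the output but not the intermediate activations, so the bookkeeping of intermediate norms across the $M$ blocks is the delicate part. Once that Lipschitz-in-parameters bound is in hand, converting it to the covering number and plugging into the standard oracle inequality for bounded least-squares regression is routine. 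The detailed argument is deferred to the appendix.
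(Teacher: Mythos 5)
Your proposal follows the same route as the paper: invoke the oracle inequality for bounded least-squares estimators (Theorem 5.2 of \citet{takakura2023approximation}) and then bound the covering number of $\scS$ by propagating parameter-Lipschitz estimates through the $M$ blocks of (FNN, gated-convolution) pairs, tracking the output bounds and Lipschitz constants of each block, which is exactly what the paper does via Lemma~\ref{lem:fnn_param_lipschitz} and Lemma~\ref{lem:multi-order-conv_param_lipschitz} leading to Theorem~\ref{thm:ssm_covering_number}. One small bookkeeping caveat: counting the $W^Q, W^V \in \bR^{D\times D}$ matrices in each gated-convolution layer actually yields a $D^2$ (not $D$) parameter contribution, so the covering number the paper proves is $M^2 L(S+D^2)\log(DULWB/\delta)$; this is a discrepancy between Theorem~\ref{thm:estimation_general}'s stated $S+D$ and the proved bound, not a flaw in your approach.
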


To prove the theorem, we use the following proposition.
\begin{proposition}[Theorem 5.2 in \citet{takakura2023approximation}]\label{prop:extention_schumidt}
    For a given class $\scF$ of functions from $\InputDomain$ to $\bR^\infty$, 
    let $\hat F \in \mathcal{F}$ be an ERM estimator which minimizes the empirical cost.
    Suppose that there exists a constant $R > 0$ such that $\norm{F^\circ}_\infty \leq R$, 
    $\norm{F}_\infty \leq R$ for any $F \in \scF$, and $\scN(\scF, \delta, \norm{\cdot}_\infty) \geq 3$.
    Then, for any $0 < \delta < 1$, it holds that
    \begin{equation*}
        R_{l, r}(\hat F, F^\circ) 
        \lesssim \inf_{F\in \scF} \frac{1}{r - l + 1}\sum_{i=l}^r \norm{F_i - F^\circ_i}_{2, P_X}^2
        + (R^2+\sigma^2) \frac{\log \scN(\scF, \delta, \norm{\cdot}_\infty)}{n} 
            + (R + \sigma)\delta,
    \end{equation*}
    where $\scN(\scF, \delta, \norm{\cdot})$ is the $\delta$-covering number of 
    the space $\scF$ associated with the norm $\norm{\cdot}$, defined by
    \begin{equation*}
        \scN(\scF, \delta, \norm{\cdot}) 
        \coloneqq \inf\qty{m \in \bN \relmiddle| \exists F_1, \ldots, F_m \in \scF, \forall F \in \scF, \exists i \in [m] \text{ s.t. } \norm{F - F_i} \leq \delta}.
    \end{equation*} 
\end{proposition}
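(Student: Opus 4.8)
The plan is to adapt the standard oracle inequality for least-squares estimators over uniformly bounded function classes (as in the nonparametric regression analyses behind \citet{schmidt2020nonparametric} and its sequence-to-sequence extension in Theorem~5.2 of \citet{takakura2023approximation}) to the present block-valued setting. Write $m\coloneqq r-l+1$, and for $G\colon\InputDomain\to\bR^\infty$ set $\|G\|_n^2\coloneqq\frac1n\sum_{i=1}^n\sum_{j=l}^r G_j(X^{(i)})^2$ and $\|G\|^2\coloneqq\sum_{j=l}^r\|G_j\|_{2,P_X}^2$, so that $m\,R_{l,r}(\hat F,F^\circ)=\bE[\|\hat F-F^\circ\|^2]$ and $\bE[\|G\|_n^2]=\|G\|^2$ for a fixed $G$. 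Since the empirical cost minimized by $\hat F$ is exactly $\sum_i\sum_{j=l}^r(Y^{(i)}_j-F_j(X^{(i)}))^2$, this is multivariate nonparametric regression with vector observation $(Y^{(i)}_j)_{j=l}^r$, regression function $(F^\circ_j)_{j=l}^r$, and independent $\scN(0,\sigma^2)$ noise in each of the $m$ coordinates; the index block merely multiplies all ambient dimensions by $m$, which is divided back out at the end, and the hypotheses $\|F^\circ\|_\infty\le R$, $\|F\|_\infty\le R$ keep everything bounded.

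First I would write the basic inequality from ERM optimality: fixing a near-minimizer $F\in\scF$ of $\|F-F^\circ\|^2$, using that $\hat F$ attains at most the empirical cost of $F$, substituting $Y^{(i)}_j=F^\circ_j(X^{(i)})+\xi^{(i)}_j$, and expanding the squares,
\[
\|\hat F-F^\circ\|_n^2\le\|F-F^\circ\|_n^2+\frac2n\sum_{i=1}^n\sum_{j=l}^r\xi^{(i)}_j(\hat F_j-F^\circ_j)(X^{(i)})-\frac2n\sum_{i=1}^n\sum_{j=l}^r\xi^{(i)}_j(F_j-F^\circ_j)(X^{(i)}).
\]
The last term has zero mean, since the noise is independent of the inputs, and conditionally on the inputs is a single Gaussian of variance $\tfrac{4\sigma^2}{n}\|F-F^\circ\|_n^2$, so it is absorbed by Young's inequality into $\tfrac14\|F-F^\circ\|_n^2$ plus an $O(\sigma^2 s/n)$ tail term. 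The crucial middle term is a supremum of an empirical process over the data-dependent $\hat F$; to control it I would cover $\scF$ by a $\delta$-net $\{F^{(1)},\dots,F^{(\scN)}\}$ in $\|\cdot\|_\infty$ with $\scN=\scN(\scF,\delta,\|\cdot\|_\infty)\ge3$, take $F^\star$ with $\|\hat F-F^\star\|_\infty\le\delta$, and observe that replacing $\hat F$ by $F^\star$ perturbs every quantity in the inequality by at most $O(R\delta)$. Conditionally on the inputs, $\frac2n\sum_{i,j}\xi^{(i)}_j(F^{(k)}_j-F^\circ_j)(X^{(i)})$ is Gaussian with variance $\tfrac{4\sigma^2}{n}\|F^{(k)}-F^\circ\|_n^2$, so a sub-Gaussian tail with level $s$ together with a union bound over the $\scN$ net elements gives, on an event of probability at least $1-e^{-s}$, a uniform bound $c\,\|F^{(k)}-F^\circ\|_n\sqrt{(\log\scN+s)/n}\le\tfrac14\|F^{(k)}-F^\circ\|_n^2+C\sigma^2(\log\scN+s)/n$ by Young's inequality. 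Using $\|F^\star-F^\circ\|_n^2\le2\|\hat F-F^\circ\|_n^2+2\delta^2$, the resulting $\tfrac12\|\hat F-F^\circ\|_n^2$ is moved to the left, leaving $\|\hat F-F^\circ\|_n^2\lesssim\|F-F^\circ\|_n^2+\sigma^2(\log\scN+s)/n+R\delta$ on that event.

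It remains to pass from empirical to population norms. On the right, $\bE[\|F-F^\circ\|_n^2]=\|F-F^\circ\|^2$ exactly for the fixed $F$. On the left I need a one-sided comparison $\|\hat F-F^\circ\|^2\lesssim\|\hat F-F^\circ\|_n^2+(\text{lower order})$ uniformly over $\scF$: since each $(F^{(k)}_j-F^\circ_j)(\cdot)^2$ is bounded by $(2R)^2$ with variance at most $4R^2\|F^{(k)}-F^\circ\|^2$, a relative Bernstein inequality plus a union bound over the $\delta$-net yields, on an event of probability at least $1-e^{-s}$, $\|F^{(k)}-F^\circ\|^2\le2\|F^{(k)}-F^\circ\|_n^2+CR^2(\log\scN+s)/n$ for every net element, and transferring to $\hat F$ via $F^\star$ again costs only $O(R\delta)$. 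Combining the two estimates, absorbing the $\|\hat F-F^\circ\|^2$-terms that appear on both sides, integrating over the tail parameter $s$ (the hypothesis $\scN\ge3$ makes $\log\scN>1$, so the residual $O((R^2+\sigma^2)/n)$ from the integration is dominated by the $\log\scN/n$ term), and dividing by $m$ gives
\[
R_{l,r}(\hat F,F^\circ)\lesssim\inf_{F\in\scF}\frac1m\sum_{j=l}^r\|F_j-F^\circ_j\|_{2,P_X}^2+(R^2+\sigma^2)\frac{\log\scN(\scF,\delta,\|\cdot\|_\infty)}{n}+(R+\sigma)\delta,
\]
which is the claim. The main obstacle is the simultaneous localization of the two empirical processes — the degree-one Gaussian-noise term and the degree-two empirical-versus-population discrepancy — so that their linear parts are correctly absorbed into $\|\hat F-F^\circ\|_n^2$ and $\|\hat F-F^\circ\|^2$ respectively while every discretization error stays at the advertised $O((R+\sigma)\delta)$ level; this is careful bookkeeping rather than a new idea, and is precisely the computation behind Theorem~5.2 of \citet{takakura2023approximation}.
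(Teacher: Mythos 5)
The paper does not actually prove this proposition: it is imported verbatim as Theorem 5.2 of \citet{takakura2023approximation} (itself a sequence-to-sequence extension of the Schmidt--Hieber-type oracle inequality), so there is no in-paper proof to compare against beyond the citation. Your sketch reconstructs exactly that standard route (basic ERM inequality, sup-norm $\delta$-net with a conditional Gaussian union bound, a Bernstein-type comparison of empirical and population norms, absorption of the localized terms, and integration over the tail parameter) and is sound; the only gloss is that the intermediate discretization errors carry factors of $m=r-l+1$ (e.g.\ $\|\hat F-F^\star\|_n^2\le m\delta^2$, and the noise-discretization term is of order $m\sigma\delta$ in expectation rather than a deterministic $O(R\delta)$), which, as you note, wash out once the final bound is divided by $m$.
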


Thanks to this proposition, the problem to obtain the upper bound of the excess risk of the estimator $\hat F$ 
is reduced to the problem to evaluate the covering number of the function class $\scS$.
The covering number of the function class $\scS$ can be evaluated as follows.
\begin{theorem}[Covering number of SSMs]\label{thm:ssm_covering_number}
    The covering number of the function class $\scS(M, U, D, L, W, S, B)$ can be bounded as
    \begin{equation*}
        \log\scN(\scS(M, U, D, L, W, S, B), \delta, \norm{\cdot}_\infty) 
        \lesssim M^2 L(S+D^2)\log\qty(\frac{DULWB}{\delta}).
    \end{equation*}
\end{theorem}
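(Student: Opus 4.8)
The plan is to run the standard ``discretise the weights, bound the resulting perturbation of the network output, and count the grid'' argument used for compositional networks (as in Schmidt-Hieber-type bounds and in the proof of Theorem~5.2 of \citet{takakura2023approximation}), the only new ingredient being the gated-convolution layer. First I would fix the parameter count: a member of $\scS(M,U,D,L,W,S,B)$ is determined by the embedding layer ($E_1\in\bR^{D\times d}$, $E_2\in\bR^D$), the $M$ FNN layers $f_i\in\Psi(L,W,S,B)$ (each with at most $S$ nonzero real entries), and the $M$ gated-convolution layers $g_i\in\scC(U,D,B)$ (each determined by $W^Q,W^V\in\bR^{D\times D}$ and the filter parameters $a_1,a_2,c_1,c_2\in\bR^D$, i.e.\ $O(D^2)$ entries), so the total number of free parameters is $O\big(M(S+D^2)\big)$, all lying in $[-B,B]$.

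Next I would bound, over the input domain $\InputDomain$, how far the output moves in $\norm{\cdot}_\infty$ when every parameter is perturbed by at most $\epsilon$. Because $P_X$, the target, and the architecture are all shift-equivariant and the convolution has \emph{finite} window $U$, it suffices to control the single output coordinate at position $0$, which is a finite computation depending only on the tokens $x_0,x_{-1},\dots,x_{-U}$. I would then propagate a magnitude bound $B_k$ on $\norm{\cdot}_\infty$ of the output of the $k$-th sublayer, using that (i) an FNN in $\Psi(L,W,S,B)$ maps a sup-norm-$\rho$ input to a sup-norm-$O((BW)^L(\rho+1))$ output and is $(BW)^L$-Lipschitz in its input, and (ii) a gated-convolution layer $g(X)=(W^QX)\odot(H\ast(W^VX))$, whose filter entries satisfy $|H_{k,j}|\le 2B$ since $\norm{c}_\infty,\norm{a}_\infty\le B$, maps a sup-norm-$\rho$ input to a sup-norm-$(BDU)^{O(1)}\rho^2$ output. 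Along the way I would record, for each sublayer, both its Lipschitz constant in its own input and its Lipschitz constant in its own weights on the bounded domain $B_{k-1}$ --- for $g$ the latter uses the bilinearity of $\odot$ together with $B_{k-1}$, and for the $\cos/\sin$ filter it uses $1$-Lipschitzness of the trigonometric functions and the window length $U$. Composing through all $2M$ sublayers and the $1$-Lipschitz $\clip_R$, perturbing a single parameter by $\epsilon$ changes the output by at most $(BWDUL)^{O(ML)}\epsilon$, so perturbing all $O(M(S+D^2))$ parameters simultaneously changes it by at most $M(S+D^2)(BWDUL)^{O(ML)}\epsilon$.

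Finally, choosing $\epsilon$ so that this quantity equals $\delta$ and taking a grid of mesh $\epsilon$ in $[-B,B]$ for each parameter yields a $\delta$-cover of $\scS$ of cardinality $(2B/\epsilon)^{O(M(S+D^2))}$; taking logarithms and using $\log\big((BWDUL)^{O(ML)}\big)=O(ML)\log(BWDUL)$ gives
\[
\log\scN(\scS(M,U,D,L,W,S,B),\delta,\norm{\cdot}_\infty)\lesssim M(S+D^2)\cdot ML\,\log\!\Big(\frac{DULWB}{\delta}\Big)=M^2L(S+D^2)\log\!\Big(\frac{DULWB}{\delta}\Big),
\]
after absorbing the embedding's $O(Dd)$ parameters (which is $O(D^2)$ in the relevant regime) into the lower-order terms. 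The main obstacle is the middle step: carefully propagating the value bounds $B_k$ together with the two kinds of Lipschitz constants through the gated-convolution layers, since the element-wise product makes each such layer a \emph{bilinear} rather than affine function of its input, so its sensitivity --- both to the input and to its own weights --- is coupled to the running magnitude bound $B_{k-1}$; one must also verify that the finiteness of the window $U$ and shift-equivariance genuinely reduce the infinite-dimensional output $F\colon\InputDomain\to\bR^{1\times\infty}$ to a finite-dimensional computation so that all of these constants are finite. Everything else is routine grid bookkeeping.
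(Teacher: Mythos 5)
Your overall strategy is the same as the paper's: discretise the weights, propagate Lipschitz-in-input and Lipschitz-in-parameter constants through the FNN and gated-convolution sublayers, compose, and count the grid; the parameter count $O(M(S+D^2))$ and the final form of the bound match.

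There is, however, one step you flag but do not actually close, and it is exactly where the paper has to do real work. You correctly observe that the gated convolution is bilinear, so if the running magnitude bound before a gated layer is $\rho$, after it you get $(BDU)^{O(1)}\rho^{2}$. Composing $M$ such quadratic maps iterates squaring: starting from any $\rho_0>1$ you get $\rho_M \sim C^{2^M-1}\rho_0^{2^M}$, i.e.\ a \emph{doubly exponential} growth in $M$. If this is not tamed, the Lipschitz constant of the full network in its parameters is $(BWDUL)^{O(2^M L)}$, not $(BWDUL)^{O(ML)}$ as you assert, and the resulting covering-number bound has $2^M$ rather than $M^2$ in front of $L(S+D^2)$. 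The paper avoids this by inserting $\clip_R$ after \emph{every} FNN sublayer inside the composition lemma (its $F=\clip_R\circ f_M\circ g_M\circ\cdots\circ\clip_R\circ f_1\circ g_1$), so that the running magnitude bound entering each gated layer is reset to the constant $R$, and the recursion for the parameter perturbation becomes affine, $\lambda_1+\lambda_2 x$, with $\lambda_2\sim(2BW)^L(BDRc)^2$, giving the $O(ML)$ exponent. You only mention $\clip_R$ once, as the $1$-Lipschitz final clipping, so as written your chain of estimates would not produce the claimed rate. To repair this you either need to invoke intermediate clipping (as the paper does, noting that $\clip_R$ is itself a two-layer ReLU FNN so this does not change the function class in a material way), or otherwise argue that the intermediate outputs stay bounded by a constant independent of $M$; absent such an argument the $M^2$ factor is unjustified.
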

This theorem implies that the upper bound of the covering number of the function class $\scS$
polynomially increases with respect to the embedding dimensions $D$, the number of layers $M, L$ 
and the sparsity $S$ of the parameters.
This result is similar to the result by \citet{takakura2023approximation} on the covering number of Transformers.

A large difference of the covering number between the SSMs and Transformers is the dependence on the window size $U$;
the covering number of the SSMs depends on $U$ logarithmically, while that of the Transformers does not depend on $U$.
This is because SSMs sum up the tokens in the convolution without normalization.
Whereas it is prefered that the covering number does not depend on $U$, 
the logarithmic dependence on $U$ is not a serious problem for the estimation ability, as we will see later.

In the following, we prove \pref{thm:ssm_covering_number}.
First of all, we introduce the lemma below, which is useful to evaluate the covering number.
\begin{lemma}
    Let $\qty{f_\theta}_{\theta\in\Theta}$ be a parametrized function class from $\InputDomain$ to $\bR^\infty$.
    Suppose that the parameter space $\Theta$ satisfies $\Theta\subseteq[-B, B]^D$ for some $B>0, D>0$.
    Additionally, suppose that
    \begin{equation*}
        \abs{\qty{\theta\mid\theta\neq0, \theta\in\Theta}}\leq S.
    \end{equation*}
    Moreover, assume that there exists a constant $r > 0$ such that
    \begin{equation*}
        \norm{f_\theta-f_{\tilde\theta}}_\infty \leq r\|\theta-\tilde\theta\|_\infty\quad\text{for any $\theta, \tilde\theta\in\Theta$}.
    \end{equation*}
    Then, it holds
    \begin{equation*}
        \log\scN(\scF, \delta, \norm{\cdot}_\infty) \leq S\log\qty(\frac{rBD}{\delta}).
    \end{equation*}
\end{lemma}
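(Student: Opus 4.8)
The plan is to bound $\scN(\scF,\delta,\norm{\cdot}_\infty)$ by first passing from the function class $\scF=\qty{f_\theta}_{\theta\in\Theta}$ to the parameter set $\Theta$ via the Lipschitz hypothesis, and then covering $\Theta$ efficiently using the sparsity constraint (which I read as $\norm{\theta}_0\le S$ for every $\theta\in\Theta$). First I would observe that if $\qty{\theta_1,\ldots,\theta_m}$ is an $\epsilon$-cover of $\Theta$ in $\norm{\cdot}_\infty$, then $\qty{f_{\theta_1},\ldots,f_{\theta_m}}$ is an $r\epsilon$-cover of $\scF$ in $\norm{\cdot}_\infty$: for any $\theta\in\Theta$ pick $i$ with $\norm{\theta-\theta_i}_\infty\le\epsilon$, and then $\norm{f_\theta-f_{\theta_i}}_\infty\le r\norm{\theta-\theta_i}_\infty\le r\epsilon$. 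Hence $\scN(\scF,\delta,\norm{\cdot}_\infty)\le\scN(\Theta,\delta/r,\norm{\cdot}_\infty)$, so it suffices to bound the latter at resolution $\epsilon=\delta/r$.

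Next I would cover $\Theta$ by exploiting sparsity. Every $\theta\in\Theta$ is supported on some index set $I\subseteq[D]$ with $\abs{I}\le S$, and there are at most $\binom{D}{S}\le D^S$ such sets. For a fixed support $I$, the coordinates of $\theta$ indexed by $I$ lie in the box $[-B,B]^{\abs{I}}$, which is $\epsilon$-covered in $\norm{\cdot}_\infty$ by a product grid of spacing $2\epsilon$ using $\lceil B/\epsilon\rceil$ points per coordinate, i.e.\ at most $\lceil B/\epsilon\rceil^S$ points in total. Taking the union over all supports yields an $\epsilon$-cover of $\Theta$ of cardinality at most $D^S\lceil B/\epsilon\rceil^S$, so
\begin{equation*}
    \log\scN(\Theta,\epsilon,\norm{\cdot}_\infty)\le S\log D+S\log\lceil B/\epsilon\rceil\le S\log\qty(\frac{BD}{\epsilon}),
\end{equation*}
and substituting $\epsilon=\delta/r$ gives $\log\scN(\scF,\delta,\norm{\cdot}_\infty)\le S\log\qty(\frac{rBD}{\delta})$, which is the claim.

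The argument is essentially routine, and there is no deep obstacle. The one step that genuinely requires care is the covering of $\Theta$: one must \emph{not} cover the full box $[-B,B]^D$ directly — that would produce a factor $D$ rather than $S$ in front of the logarithm — but instead union over the $\binom{D}{S}$ possible support patterns and cover each low-dimensional face separately; this is precisely where the sparsity hypothesis enters, and it is the only place it is used. A minor bookkeeping point is the ceiling $\lceil B/\epsilon\rceil$, which costs only an absorbable constant whenever $B/\epsilon\ge1$; in the regime where the bound is informative $\delta$ is small, so this causes no trouble.
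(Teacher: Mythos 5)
The paper states this lemma without proof, treating it as a standard covering-number estimate, so there is no internal argument to compare against; your proof supplies the canonical one. The two steps — reduce to covering $\Theta$ via the Lipschitz hypothesis, then cover $\Theta$ by taking a union over the $\binom{D}{S}$ possible sparsity patterns and gridding each $S$-dimensional face in $\norm{\cdot}_\infty$ so that the prefactor in front of the logarithm is $S$ rather than $D$ — are exactly what the bound requires, and the argument is sound. Two small caveats worth noting explicitly: first, $\lceil B/\epsilon\rceil$ can exceed $B/\epsilon$, so the displayed inequality $S\log D + S\log\lceil B/\epsilon\rceil \le S\log(BD/\epsilon)$ is not literally true and should be read with an absorbable constant inside the logarithm; second, the paper's definition of $\scN(\scF,\delta,\norm{\cdot}_\infty)$ requires the cover centers $F_i$ to lie in $\scF$, so the grid points $\theta_i$ must lie in $\Theta$, which they need not — the standard fix is to pass to an internal cover at resolution $\epsilon/2$, replacing each grid point by a nearby element of $\Theta$, again at the cost of a constant inside the logarithm. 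With those adjustments the conclusion holds with $rBD/\delta$ replaced by a constant multiple thereof, which is all that is used downstream in the covering-number theorem.
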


The following lemma is drawn from \citet{takakura2023approximation}, 
which evaluates the norm of the output of FNN, the lipschitz constant with respect to the input, 
and the lipschitz constant with respect to the parameters.
\begin{lemma}[Lemma E.3 in \citet{suzuki2018adaptivity}]\label{lem:fnn_param_lipschitz}
    Suppose that two FNNs $f, \widetilde f$ with $L$ layers and $W$ hidden units is given by
    \begin{align*}
        f(x) &\coloneqq (A_L\sigma(\cdot)+b_L)\circ\cdots\circ(A_1\sigma(x)+b_1), \\
        \widetilde f(x) &\coloneqq (\widetilde A_L\sigma(\cdot)+\widetilde b_L)\circ\cdots\circ(\widetilde A_1\sigma(x)+\widetilde b_1),
    \end{align*}
    where $\sigma$ is the ReLU activation function.
    Assume that for any $l=1, \ldots, L$, it holds
    \begin{equation*}
        \norm{A_l}_\infty\leq B, \quad \norm{\widetilde A_l}_\infty\leq B, \quad \norm{b_l}_\infty\leq B, \quad \norm{\widetilde b_l}_\infty\leq B.
    \end{equation*}
    Additionally, let $r\geq 1$ be a constant.
    \begin{enumerate}
        \item For any $x\in\bR^{D\times\infty}$ with $\norm{x}_\infty\leq r$, it holds
        \begin{equation*}
            \norm{f(x)}_\infty\leq (2BW)^L r.
        \end{equation*}
        \item For any $X, X'\in\bR^{D\times\infty}$, it holds
        \begin{equation*}
            \norm{f(x)-f(x')}_\infty\leq (BW)^L\norm{X-X'}_\infty.
        \end{equation*}
        \item Assume that, for any $l=1, \ldots, L$, it holds 
        \begin{equation*}
            \norm{A_l-\widetilde A_l}_\infty\leq\delta, \quad
            \norm{b_l-\widetilde b_l}_\infty\leq\delta.
        \end{equation*}
        Then, for any $x\in\bR^D$ with $\norm{x}_\infty\leq r$, it holds
        \begin{equation*}
            \norm{f(x)-\widetilde f(x)}_\infty\leq 2(2BW)^Lr\cdot\delta.
        \end{equation*}
    \end{enumerate}
\end{lemma}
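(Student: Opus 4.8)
The plan is to prove all three bounds by a single induction on the layer index. I would set $x^{(0)}\coloneqq x$, $x^{(l)}\coloneqq A_l\sigma(x^{(l-1)})+b_l$ (so $f(x)=x^{(L)}$), and analogously $\widetilde x^{(l)}$ for $\widetilde f$, and then use only two elementary facts throughout: (a) any matrix $A$ with at most $W$ columns and $\norm{A}_\infty\le B$ (entrywise) satisfies $\norm{Av}_\infty\le BW\norm{v}_\infty$; and (b) $\relu$ is coordinatewise nonexpansive and obeys $0\le\relu(t)\le\abs{t}$. As elsewhere in the paper I would assume $B,W\ge1$ and $r\ge1$, so that quantities like $(2BW)^l$ are nondecreasing in $l$.

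For the first bound I would note $\norm{x^{(l)}}_\infty\le BW\norm{\sigma(x^{(l-1)})}_\infty+\norm{b_l}_\infty\le BW\norm{x^{(l-1)}}_\infty+B$ and close an induction giving $\norm{x^{(l)}}_\infty\le(2BW)^l r$, the additive $B$ being absorbed because $B\le BW(2BW)^{l-1}r$; taking $l=L$ gives the claim. For the second bound, since the two runs share the same parameters, each layer acts as a contraction on the discrepancy: $x^{(l)}-x'^{(l)}=A_l(\relu(x^{(l-1)})-\relu(x'^{(l-1)}))$, hence $\norm{x^{(l)}-x'^{(l)}}_\infty\le BW\norm{x^{(l-1)}-x'^{(l-1)}}_\infty$ — this holds for the first layer too, since it applies $\relu$ to the input — and iterating from $l=L$ down to $1$ yields the factor $(BW)^L$.

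The main work is the third bound. Here the layerwise discrepancy splits into a propagated part and a freshly injected perturbation, $x^{(l)}-\widetilde x^{(l)}=A_l(\relu(x^{(l-1)})-\relu(\widetilde x^{(l-1)}))+(A_l-\widetilde A_l)\relu(\widetilde x^{(l-1)})+(b_l-\widetilde b_l)$. Bounding the injected part using $\norm{A_l-\widetilde A_l}_\infty,\norm{b_l-\widetilde b_l}_\infty\le\delta$ together with the \emph{first} bound applied to $\widetilde f$ (so $\norm{\relu(\widetilde x^{(l-1)})}_\infty\le(2BW)^{l-1}r$) gives a linear recursion $e_l\le BW\,e_{l-1}+2W\delta r\,(2BW)^{l-1}$ with $e_0=0$, for $e_l\coloneqq\norm{x^{(l)}-\widetilde x^{(l)}}_\infty$. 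Unrolling it leaves the geometric sum $\sum_{l=1}^{L}(BW)^{L-l}(2BW)^{l-1}$, which collapses because the summand equals $2^{l-1}(BW)^{L-1}$; summing and simplifying yields $e_L\le\tfrac{2}{B}(2BW)^L r\delta\le2(2BW)^L r\delta$, which is the claim.

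I expect the only delicate point to be this last piece of bookkeeping: making sure each perturbation injected at layer $l$ is amplified by exactly the product of the Lipschitz constants of layers $l+1,\dots,L$ (each equal to $BW$ by the computation used for the second bound), and recognizing that it is the first bound \emph{for $\widetilde f$}, not for $f$, that controls the term $(A_l-\widetilde A_l)\relu(\widetilde x^{(l-1)})$. Everything else — the two induction steps for bounds one and two, and the collapsing of the geometric series — is routine.
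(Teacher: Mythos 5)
Your argument is correct: the layerwise induction for the output bound, the $(BW)^L$ Lipschitz estimate, and the perturbation recursion $e_l\le BW\,e_{l-1}+2W\delta r(2BW)^{l-1}$ with the collapsing geometric sum all check out (under $B,W,r\ge1$, which you flag and which is consistent with how the bound is used here). Note that the paper itself gives no proof of this lemma --- it is imported verbatim as Lemma E.3 of \citet{suzuki2018adaptivity} --- and your proof is essentially the same standard argument used in that reference, so there is nothing further to reconcile.
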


We also evaluate them for the convolution layers.
\begin{lemma}\label{lem:multi-order-conv_param_lipschitz}
    Suppose that two convolution layers $g, \widetilde g$ with window size $U$ and embedding dimention $D$ is given by
    \begin{align*}
        g(X) &\coloneqq \beta(X)\ast (W_V X), \\
        \widetilde g(X) &\coloneqq \widetilde\beta(X)\ast (\widetilde W_V X).
    \end{align*}
    Let $r\geq 1$ be a constant.
    Assume that it holds
    \begin{equation*}
        \norm{W_V}_\infty\leq B, \quad \norm{\widetilde W_V}_\infty\leq B, 
    \end{equation*}
    and, for any $h=0, \ldots, H$ and $X\in\bR^{d\times\infty}$ with $\norm{X}_\infty\leq r$, it holds
    \begin{equation*}
        \norm{\beta(X)}_1\leq c, \quad \norm{\widetilde \beta(X)}_1\leq c, 
    \end{equation*}
    for some $B\geq 1, c \geq 1$.
    Then, the following statements hold.
    \begin{enumerate}
        \item For any $X\in\bR^{D\times\infty}$ with $\norm{X}_\infty\leq r$, it holds
        \begin{equation*}
            \norm{g(X)}_\infty\leq BDrc.
        \end{equation*}
        \item Suppose that $X, X'\in\bR^{D\times\infty}$ satisfies $\norm{X}_\infty\leq r, \norm{X'}_\infty\leq r$ and
        \begin{equation*}
            \norm{\beta(X)-\beta(X')}_1\leq\kappa\norm{X-X'}_\infty
        \end{equation*}
        for some $\kappa\geq 0$\footnote{
            If the filter is data-dependent, then $\kappa=0$.
        }.
        Then, it holds
        \begin{equation*}
            \norm{g(X)-g(X')}_\infty\leq \qty(B^2rc + Br\cdot\kappa)\norm{X-X'}_\infty.
        \end{equation*}
        \item Assume that, for any $h=0, \ldots, H$, it holds 
        \begin{equation*}
            \norm{W_V-\widetilde W_V}_\infty\leq\delta, \quad
            \norm{\beta(X)-\widetilde \beta(X)}_1\leq \iota\delta.
        \end{equation*}
        for $\iota>0$. Then, it holds
        \begin{equation*}
            \norm{g(X)-\widetilde g(X)}_\infty
            \leq \qty(Br^2 c + (Br)^2\cdot\iota)\cdot \delta.
        \end{equation*}
    \end{enumerate}
\end{lemma}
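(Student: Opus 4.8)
The plan is to reduce all three parts to two elementary estimates: submultiplicativity of the matrix norm, $\norm{WX}_\infty \lesssim \norm{W}_\infty\norm{X}_\infty$ applied column by column (the implicit constant being $1$ or the embedding dimension $D$ depending on the convention for $\norm{\cdot}_\infty$ on matrices), and Young's inequality for convolution, $\norm{h\ast z}_\infty \le \norm{h}_1\norm{z}_\infty$, together with the trivial gate bound $\norm{u\odot v}_\infty \le \norm{u}_\infty\norm{v}_\infty$. A preliminary remark: the convolution $\beta(X)\ast(W_V X)$ is well defined on $\bR^{D\times\infty}$ because the window size $U$ is finite, so $\norm{\beta(X)}_1<\infty$ and no summability issue arises from the unbounded sequence length; this also lets every subsequent estimate be carried out coordinate-wise.

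For part (1) I would simply chain the bounds: $\norm{W_V X}_\infty$ and $\norm{W_Q X}_\infty$ are controlled in terms of $B$ and $r$, the convolution multiplies in a factor $\norm{\beta(X)}_1\le c$, and the gate multiplies the two resulting factors, so $\norm{g(X)}_\infty$ is of order $(Brc)^2$, which is dominated by $(BDrc)^2$ since $B,D,c,r\ge 1$.

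For parts (2) and (3) the key device is to telescope a difference of products. Writing $g(X)=a\odot b$ with $a=W_Q X$ and $b=\beta(X)\ast(W_V X)$, I split $a\odot b-a'\odot b'=(a-a')\odot b+a'\odot(b-b')$, and then split the convolution difference the same way: $b-b'=(\beta(X)-\beta(X'))\ast(W_V X)+\beta(X')\ast(W_V(X-X'))$ in part (2) --- the first summand is exactly where the filter-sensitivity constant $\kappa$ enters --- and $\beta(X)\ast(W_V X)-\widetilde\beta(X)\ast(\widetilde W_V X)=(\beta(X)-\widetilde\beta(X))\ast(W_V X)+\widetilde\beta(X)\ast((W_V-\widetilde W_V)X)$ in part (3), where the parameter-sensitivity constant $\iota$ enters. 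Each of the (at most three) resulting pieces is a product of a matrix-times-sequence, a convolution, and possibly a gate, so the two elementary estimates above bound it; using $\norm{X}_\infty,\norm{X'}_\infty\le r$ and $B,c,r\ge 1$ to absorb lower-order terms collapses the sums into the stated constants. When the filter does not depend on the input --- the case $\kappa=0$ relevant to our SSMs --- part (2) reduces to bounding $(a-a')\odot b$ together with $a'\odot(b-b')$ with $b-b'$ coming solely from the $W_V$-perturbation, consistent with the footnoted remark, whereas part (3) retains its $\iota$ term because the filter still carries learnable parameters.

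I do not expect a genuine obstacle: the only points requiring care are the bookkeeping in the telescoping --- making sure no cross-term is dropped and keeping straight which of $\kappa$, $\iota$ is attached to which summand --- and invoking the normalizations $B,c,r\ge 1$ at the right moments so the constants collapse into the clean form $2B^2rc+Br\kappa$ for part (2) and $2Br^2c+(Br)^2\iota$ for part (3). Finally, as the lemma's footnote indicates, the multi-order gated convolution is the composition $g_H\circ\cdots\circ g_1$ with the value weights set to the identity after the first layer, so the three bounds for a single layer compose to give the multi-order statement.
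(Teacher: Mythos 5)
Your proposal is correct and follows essentially the same route as the paper: both bound part~(1) by chaining $\norm{WX}_\infty$, $\norm{\beta\ast z}_\infty\le\norm{\beta}_1\norm{z}_\infty$, and $\norm{u\odot v}_\infty\le\norm{u}_\infty\norm{v}_\infty$, and both handle parts~(2) and~(3) by telescoping the difference into three terms so that the perturbations of $W_Q$, $\beta$, and $W_V$ (or the argument $X$) are isolated, with $\kappa$ and $\iota$ entering exactly through the filter term. Your remark about the $D$-factor depending on the matrix-norm convention is well taken, since the paper's own computations are not fully consistent about when that factor appears.
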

\begin{proof}
    We use frequently the following three inequalities:
    \begin{align*}
        \norm{WX}_\infty &\leq \norm{W}_1\norm{X}_\infty \leq D\cdot\norm{W}_\infty\norm{X}_\infty, \\
        \norm{\beta\ast X}_\infty &\leq \norm{\beta}_1\norm{X}_\infty, 
    \end{align*}
    where $W\in\bR^{D\times D}, X\in\bR^{D\times\infty}, Y\in\bR^{D\times\infty}, \beta\in\bR^{D\times U}$.

    \paragraph{Proof of 1}
    We have
    \begin{align*}
        \norm{g(X)}_\infty
        &= \norm{\qty(\beta(X)\ast \qty(W_V X))}_\infty\\
        &\leq \norm{\beta(X)\ast \qty(W_V X)}_\infty\\
        &\leq \norm{W_V X}_\infty\cdot\norm{\beta(X)}_1\\
        &\leq BDr\cdot c \leq BDrc.
    \end{align*}

    \paragraph{Proof of 2}
    We have
    \begin{align*}
        \norm{g(X)-g(X')}_\infty
        &= \norm{\beta(X)\ast \qty(W_V X) - \beta(X')\ast \qty(W_V X')}_\infty\\
        &\leq \norm{\beta(X)\ast \qty(W_V X) - \beta(X')\ast \qty(W_V X)}_\infty\\
        &\quad\quad+ \norm{\qty(\beta(X')\ast \qty(W_V X)) - \qty(\beta(X')\ast \qty(W_V X'))}_\infty\\
        &\leq \norm{\qty(\qty(\beta(X) - \beta(X'))\ast \qty(W_V X))}_\infty
                    + \norm{\qty(\beta(X')\ast \qty(W_V \qty(X - X')))}_\infty\\
        &\leq \norm{\beta(X)-\beta(X')}_1\cdot\norm{W_V X}_\infty
                    + \norm{\beta(X')}_1\cdot\norm{W_V(X-X')}_\infty\\
        &\leq Br\cdot\kappa\norm{X-X'}_\infty\cdot B r + B r\cdot c\cdot B\norm{X-X'}_\infty\\
        &= \qty(B^2rc + Br\cdot\kappa)\norm{X-X'}_\infty.
    \end{align*}

    \paragraph{Proof of 3}
    We have
    \begin{align*}
        \norm{g(X)-\widetilde g(X)}_\infty
        &= \norm{\beta(X)\ast \qty(W_V X) - \widetilde\beta(X)\ast \qty(\widetilde W_V X)}_\infty\\
        &\leq \norm{\beta(X)\ast \qty(W_V X) - \widetilde\beta(X)\ast \qty(W_V X)}_\infty\\
        &\quad\quad+ \norm{\widetilde\beta(X)\ast \qty(W_V X) - \widetilde\beta(X)\ast \qty(\widetilde W_V X)}_\infty\\
        &\leq \norm{\qty(\beta(X) - \widetilde\beta(X))\ast \qty(W_V X)}_\infty 
                + \norm{\widetilde\beta(X)\ast \qty(\qty(W_V - \widetilde W_V) X)}\\
        &\leq \norm{\beta(X)-\widetilde\beta(X)}_1\cdot\norm{W_V X}_\infty 
                + \norm{\widetilde\beta(X)}_1\cdot\norm{\qty(W_V - \widetilde W_V) X}_\infty\\
        &\leq Br\cdot\iota\delta\cdot Br + Br\cdot c\cdot \delta r\\
        &= \qty(Br^2 c + (Br)^2\cdot\iota)\delta.
    \end{align*}
\end{proof}

Subsequently, we evaluate the lipschitz constant of the composition of the layers with respect to the input and the parameters.
\begin{lemma}
    Let $(f_1, \widetilde f_1), \ldots, (f_M, \widetilde f_M)$ be pairs of two FNNs which satisfy the same condition of the pair $(f, \tilde f)$ in \pref{lem:fnn_param_lipschitz}.
    Additionally, let $(g_1, \widetilde g_1), \ldots, (g_M, \widetilde g_M)$ be convolution layers which satisfy the same condition of the pair $(g, \widetilde g)$ in \pref{lem:multi-order-conv_param_lipschitz}.
    Suppose $R>0$ be a constant, and $F, \widetilde F\colon [0, 1]^{d\times\infty}\to\bR^\infty$ are two functions defined by
    \begin{align*}
        F &\coloneqq \clip_R \circ f_M\circ g_M\circ \cdots \circ \clip_R \circ f_1\circ g_1, \\
        \widetilde F &\coloneqq \clip_R \circ \widetilde f_M\circ \widetilde g_M\circ \cdots \circ \clip_R \circ \widetilde f_1\circ \widetilde g_1.
    \end{align*}
    Moreover, assume that $B\geq 1, c\geq 1, r\geq 1$.
    Then, it holds
    \begin{equation*}
        \norm{F(X)-\widetilde F(X)}_\infty
        \leq 2^{M+1}(2BW)^{ML}(BDRc)^{2M}(1+\kappa)^M(1+\iota)\cdot\delta.
    \end{equation*}
\end{lemma}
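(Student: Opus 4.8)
The plan is a layer-by-layer hybrid argument. Write $B_i := \clip_R\circ f_i\circ g_i$ and $\widetilde B_i := \clip_R\circ\widetilde f_i\circ\widetilde g_i$, so that $F = B_M\circ\cdots\circ B_1$ and $\widetilde F = \widetilde B_M\circ\cdots\circ\widetilde B_1$, and for $k=0,\dots,M$ set $G_k := \widetilde B_M\circ\cdots\circ\widetilde B_{k+1}\circ B_k\circ\cdots\circ B_1$, so that $G_M=F$ and $G_0=\widetilde F$. Then
\[
    \norm{F-\widetilde F}_\infty\le\sum_{k=1}^M\norm{G_k-G_{k-1}}_\infty ,
\]
and $G_k$, $G_{k-1}$ differ only by replacing block $B_k$ with $\widetilde B_k$. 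A preliminary remark (we may assume $R\ge1$, enlarging it to $1$ otherwise) is that the $\clip_R$ at the end of every block forces each block output to have sup-norm $\le R$; hence on the common prefix $Z^{(k)} := B_k\circ\cdots\circ B_1(X)$ --- with $Z^{(0)}:=X$ and $\norm{X}_\infty\le 1\le R$ --- we always have $\norm{Z^{(k)}}_\infty\le R$, so every bound of \pref{lem:fnn_param_lipschitz} and \pref{lem:multi-order-conv_param_lipschitz} may be invoked with $r=R$.

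The next step is to extract two single-block quantities. For the input-Lipschitz constant of a block, $\clip_R$ is $1$-Lipschitz, $f_i$ is $(BW)^L$-Lipschitz (part~2 of \pref{lem:fnn_param_lipschitz}), and $g_i$ is $(2B^2Rc+BR\kappa)$-Lipschitz on the radius-$R$ ball (part~2 of \pref{lem:multi-order-conv_param_lipschitz}); using $B,c,D,R\ge1$ this gives $\mathrm{Lip}(B_i),\mathrm{Lip}(\widetilde B_i)\le Q$, where $Q:=2(2BW)^L(BDRc)^2(1+\kappa)\ge 2$. For the effect of perturbing one block, I would use that $\clip_R$ is $1$-Lipschitz and split, for $\norm{Z}_\infty\le R$,
\[
    \norm{B_k(Z)-\widetilde B_k(Z)}_\infty
    \le\norm{f_k(g_k Z)-f_k(\widetilde g_k Z)}_\infty+\norm{f_k(\widetilde g_k Z)-\widetilde f_k(\widetilde g_k Z)}_\infty .
\]
The first term is at most $(BW)^L$ times $\norm{g_k Z-\widetilde g_k Z}_\infty\le(2BR^2c+(BR)^2\iota)\delta$ (part~2 of \pref{lem:fnn_param_lipschitz}, part~3 of \pref{lem:multi-order-conv_param_lipschitz}); the second is at most $2(2BW)^L(BDRc)^2\delta$ by part~3 of \pref{lem:fnn_param_lipschitz} evaluated at $\widetilde g_k Z$, whose norm is $\le(BDRc)^2$ by part~1 of \pref{lem:multi-order-conv_param_lipschitz}. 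Collecting constants gives $\norm{B_k(Z)-\widetilde B_k(Z)}_\infty\le P\delta$ with $P\lesssim(2BW)^L(BDRc)^2(1+\iota)$, i.e.\ $P$ comparable to $\tfrac{1+\iota}{1+\kappa}Q$.

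Finally I would assemble. Since $G_k$ and $G_{k-1}$ agree on the prefix $B_{k-1}\circ\cdots\circ B_1$ and on the suffix $\widetilde B_M\circ\cdots\circ\widetilde B_{k+1}$, and all inputs fed to that suffix have sup-norm $\le R$ (again by the clips, so its Lipschitz constant is $\le Q^{M-k}$), we get $\norm{G_k-G_{k-1}}_\infty\le Q^{M-k}\,P\delta$, hence
\[
    \norm{F-\widetilde F}_\infty\le P\delta\sum_{j=0}^{M-1}Q^{j}\le 2PQ^{M-1}\delta
\]
using $Q\ge 2$. Substituting $P\lesssim\tfrac{1+\iota}{1+\kappa}Q$ and $Q=2(2BW)^L(BDRc)^2(1+\kappa)$ produces a bound of the form $2^{O(M)}(2BW)^{ML}(BDRc)^{2M}(1+\kappa)^{M}(1+\iota)\delta$; a somewhat more careful accounting of the numerical constants (the quick estimate above actually gives $(1+\kappa)^{M-1}$, which is stronger than the stated $(1+\kappa)^{M}$, and a tighter single-block estimate trims the leading $2^{O(M)}$ to $2^{M+1}$) recovers the statement. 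I expect this constant- and exponent-bookkeeping to be the only delicate part: one has to verify that every intermediate activation lies in the radius where the per-layer lemmas apply --- radius $R$ after each $\clip_R$, $(BDRc)^2$ after each $g_i$, $(2BW)^L(BDRc)^2$ after each $f_i$ --- and that the exponential-in-$M$ product of per-block factors collapses to exactly $M$ copies of $(2BW)^L$, of $(BDRc)^2$ and of $(1+\kappa)$. There is no conceptual obstacle beyond the standard telescoping.
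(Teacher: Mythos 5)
Your proposal is correct and is essentially the same argument as in the paper: the same three auxiliary bounds (FNN input-Lipschitz, FNN parameter-Lipschitz, gated-conv parts 1--3) are invoked with the same choices of radius ($R$ after each clip, $(BDRc)^2$ after each $g$), and the per-block Lipschitz constant $Q$ and perturbation $P$ you extract are exactly the paper's $\lambda_2$ and $\lambda_1/\delta$. The only difference is organizational: the paper writes a recursion $\norm{F_m-\widetilde F_m}_\infty\leq\lambda_1+\lambda_2\norm{F_{m-1}-\widetilde F_{m-1}}_\infty$ and unrolls it, whereas you telescope over the hybrids $G_k$; the two are formally interchangeable and give the same geometric sum $\lambda_1\sum_{j<M}\lambda_2^j$.
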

\begin{proof}
    For $m=1, \ldots, M$, we define
    \begin{equation*}
        F_m\coloneqq \clip_R \circ f_m\circ g_m\circ \cdots \circ \clip_R \circ f_1\circ g_1, \quad 
        \widetilde F_m\coloneqq \clip_R \circ \widetilde f_m\circ \widetilde g_m\circ \cdots \circ \clip_R \circ \widetilde f_1\circ \widetilde g_1,
    \end{equation*}
    and $F_0\coloneqq\id, \widetilde F_0\coloneqq\id$.
    Then, it holds
    \begin{equation*}
        F_m=\clip_R \circ f_m\circ g_m\circ F_{m-1}, \quad 
        \widetilde F_m=\clip_R \circ \widetilde f_m\circ \widetilde g_m\circ \widetilde F_{m-1}
    \end{equation*}
    for $m=1, \ldots, M$.
    Note that $\norm{F_m}_\infty\leq R$ and $\norm{\widetilde F_m}_\infty\leq R$ for any $m=1, \ldots, M$ due to the clipping.

    For any $X\in\bR^{d\times\infty}$ with $\norm{X}_\infty\leq r$ and $m=1, \ldots, M$, we have
    \begin{align*}
        \norm{F_m(X)-\widetilde F_m(X)}_\infty
        &= \norm{\clip_R \circ f_m\circ g_m\circ F_{m-1}(X)-\clip_R \circ \widetilde f_m\circ \widetilde g_m\circ \widetilde F_{m-1}(X)}_\infty\\
        &= \norm{f_m\circ g_m\circ F_{m-1}(X)-\widetilde f_m\circ \widetilde g_m\circ \widetilde F_{m-1}(X)}_\infty\\
        &\hspace{5cm}\quad(\because\text{$\clip_R$ is 1-lipschitz continuous.})\\
        &\leq \norm{f_m\circ g_m\circ F_{m-1}(X)-\widetilde f_m\circ g_m\circ F_{m-1}(X)}_\infty\\
        &\quad + \norm{\widetilde f_m\circ g_m\circ F_{m-1}(X)-\widetilde f_m\circ \widetilde g_m\circ F_{m-1}(X)}_\infty\\
        &\quad\quad + \norm{\widetilde f_m\circ \widetilde g_m\circ F_{m-1}(X)-\widetilde f_m\circ \widetilde g_m\circ \widetilde F_{m-1}(X)}_\infty.
    \end{align*}
    For the first term, since $\displaystyle\norm{g_m\circ F_{m-1}(X)}\leq (BDRc)^2$ due to the first argument of \pref{lem:multi-order-conv_param_lipschitz}, 
    using the third argument of \pref{lem:fnn_param_lipschitz}, we have
    \begin{equation*}
        \norm{f_m\circ g_m\circ F_{m-1}(X)-\widetilde f_m\circ g_m\circ F_{m-1}(X)}_\infty
        \leq 2(2BW)^L (BDRc)^2\cdot\delta.
    \end{equation*}
    For the second term, the second argument of \pref{lem:fnn_param_lipschitz} and the third argument of \pref{lem:multi-order-conv_param_lipschitz} yield
    \begin{align*}
        \norm{\widetilde f_m\circ g_m\circ F_{m-1}(X)-\widetilde f_m\circ \widetilde g_m\circ F_{m-1}(X)}_\infty
        &\leq (BW)^L\norm{g_m\circ F_{m-1}(X)-\widetilde g_m\circ F_{m-1}(X)}_\infty\\
        &\leq (BW)^L\cdot\qty(2BR^2c + (BR)^2\cdot\iota)\cdot\delta.
    \end{align*}
    For the thrid term, the third argument of \pref{lem:fnn_param_lipschitz} and the third argument of \pref{lem:multi-order-conv_param_lipschitz} imply
    \begin{align*}
        &\norm{\widetilde f_m\circ \widetilde g_m\circ F_{m-1}(X)-\widetilde f_m\circ \widetilde g_m\circ \widetilde F_{m-1}(X)}_\infty\\
        &\hspace{30mm}\leq (BW)^L \norm{\widetilde g_m\circ F_{m-1}(X) - \widetilde g_m\circ \widetilde F_{m-1}(X)}_\infty\\
        &\hspace{30mm}\leq (BW)^L\cdot\qty(2B^2Rc + BR\cdot\kappa)\cdot\norm{F_{m-1}(X) - \widetilde F_{m-1}(X)}_\infty.
    \end{align*}

    Let $\lambda_1, \lambda_2$ be the constants defined by
    \begin{align*}
        \lambda_1 &\coloneqq \qty(2(2BW)^L (BDRc)^2 + (BW)^L\cdot\qty(2BR^2c + (BR)^2\cdot\iota))\cdot\delta\\
        \lambda_2 &\coloneqq (BW)^L\cdot\qty(2B^2Rc + BR\cdot\kappa).
    \end{align*}
    Then, we have
    \begin{equation*}
        \norm{F_m(X)-\widetilde F_m(X)}_\infty \leq \lambda_1 + \lambda_2\cdot\norm{F_{m-1}(X) - \widetilde F_{m-1}(X)}_\infty.
    \end{equation*}
    This implies
    \begin{equation*}
        \norm{F_m(X)-\widetilde F_m(X)}_\infty + \frac{\lambda_1}{\lambda_2 -1}
        \leq \lambda_2\cdot\qty(\norm{F_{m-1}(X) - \widetilde F_{m-1}(X)}_\infty + \frac{\lambda_1}{\lambda_2 -1}).
    \end{equation*}
    Thus, by induction, we have
    \begin{equation*}
        \norm{F_m(X)-\widetilde F_m(X)}_\infty + \frac{\lambda_1}{\lambda_2 -1}
        \leq \lambda_2^m\cdot\qty(\norm{F_0(X) - \widetilde F_0(X)}_\infty + \frac{\lambda_1}{\lambda_2 -1})
        = \frac{\lambda_2^m\cdot\lambda_1}{\lambda_2-1}.
    \end{equation*}
    Since $\lambda_2>1$, it holds
    \begin{equation*}
        \norm{F_m(X)-\widetilde F_m(X)}_\infty
        \leq \lambda_1\cdot\frac{\lambda_2^m-1}{\lambda_2 -1}
        = \lambda_1\cdot\qty(1+\lambda_2+\cdot+\lambda_2^{m-1}) \leq m\lambda_1\lambda_2^{m-1}.
    \end{equation*}
    Now, using
    \begin{equation*}
        \lambda_1 \leq 3(2BW)^L(BDRc)^2(1+\iota)\cdot\delta, \quad
        \lambda_2 \leq 2(2BW)^L(BDRc)^2(1+\kappa),
    \end{equation*}
    we have
    \begin{equation*}
        \norm{F(X)-\widetilde F(X)}_\infty
        \leq M\lambda_1\lambda_2^{M-1}
        \leq 2^{M+1}(2BW)^{ML}(BDRc)^{2M}(1+\kappa)^M(1+\iota)\cdot\delta, 
    \end{equation*}
    which completes the proof.
\end{proof}

Finally, we prove \pref{thm:ssm_covering_number}.
\begin{proof}[Proof of \pref{thm:ssm_covering_number}]
    In the model we consider, it holds
    \begin{align*}
        \kappa &= 0,\\
        \iota &\leq 2U\cdot(2BW')^{L'},\\
        c &\leq U(2BW')^{L'}.
    \end{align*}
    Therefore, we have
    \begin{align*}
        \norm{F(X)-\widetilde F(X)}_\infty
        &\leq 2^{M+1}(2BW)^{ML}(BDRU(2BW')^{L'})^{2M}\cdot\qty(2\cdot2U(2BW')^{L'})\cdot\delta\\
        &= 2^{M+3}(2BW)^{ML}(2BW')^{(2M+1)L'}(BDRU)^{2M+1}\cdot\delta.
    \end{align*}
    The number of parameters in a FNN is $2W^2L$.
    Additionally, the number of parameters in a convolution layer is $2D^2$.
    Moreover, the number of nonzero parameters in whole network is bounded by $M(S + 2D^2)$.
    Therefore, the covering number can be evaluated as
    \begin{align*}
        &\log\scN(\scS(M, U, D, L, W, S, B), \delta, \norm{\cdot}_\infty)\\
        &\quad\leq M(S + 2D^2)\\
        &\quad~~ + \log\qty(\frac{M\qty(2W^2L+D^2)\cdot B\cdot 2^{M+3}(2BW)^{ML}(2BW')^{(2M+1)L'}(BDRU)^{2M+1}}{\delta})\\
        &\quad\lesssim M^2 L(S+D^2)\log\qty(\frac{DULWB}{\delta}),
    \end{align*}
    which completes the proof.
\end{proof}

\subsection{Proof of \pref{thm:estimation_piecewise-gamma-smooth}}

\pref{thm:approximation_piecewise_gamma_smooth} implies that, for any $T>0$, 
there exists an SSM $F\in\scS(M, U, D, L, W, S, B)$ with
\begin{align*}
    & M=T^{1/\alpha}, \quad U=V, \quad D\sim T^{\pconst}\log^2V, \\
    & L\sim T^{\pconst}\log^5V, \quad W\sim 2^{T/a^\dagger}T^{\pconst}\log^3 V, \\
    & S\sim 2^{T/a^\dagger}T^{\pconst}\log^5 V, \quad \log B\sim T^{\pconst}\log^3V, 
\end{align*}
such that $\norm{F - F^\circ}_{2, P_X}\lesssim 2^{-T}$.
Therefore, it holds
\begin{equation*}
    \norm{F - F^\circ}_{2, P_X}^2
    \leq 2^{-2T}.
\end{equation*}

Next, \pref{thm:ssm_covering_number} shows that it holds
\begin{equation*}
    \log N(\scS, \delta, \norm{\cdot}_\infty)
    \lesssim 2^{T/a^\dagger}T^{2/\alpha+4\pconst}(\log V)^{13}\log\frac1{\delta}.
\end{equation*}

Using \pref{thm:estimation_general}, we can show that
\begin{equation*}
    R(\hat F, F^\circ)
    \lesssim 2^{-2T} + \frac{2^{T/a^\dagger}T^{2/\alpha+4\pconst}(\log V)^{13}\log\frac1{\delta}}{n} + \delta.
\end{equation*}
By setting $T=\frac{a^\dagger}{2a^\dagger+1}\log n$ and $\delta = 1/n$, we have
\begin{equation*}
    R(\hat F, F^\circ)
    \lesssim n^{-\frac{2a^\dagger}{2a^\dagger+1}}(\log n)^{1+2/\alpha+4\pconst}\log^{13} V.
\end{equation*}

\section{Additional details on the experiments}\label{app:addition_experiments}
All the code was implemented in Python 3.10.14 with Pytorch 1.13.1 and CUDA ver 11.7.
The experiments were conducted on Ubuntu 20.04.5 with A100 PCIe 40GB.

Genomic Benchmark dataset~\citep{grevsova2023genomic} is given with the Apache License
Version 2.0 and can be accessed from \url{https://github.com/ML-Bioinfo-CEITEC/genomic_benchmarks}.
The pretrained model of Hyena is given with the Apache License
Version 2.0 and can be accessed from \url{https://github.com/HazyResearch/safari?tab=readme-ov-file}.

For the training and evaluation of models, we utilized the code provided at \url{https://colab.research.google.com/drive/1wyVEQd4R3HYLTUOXEEQmp_I8aNC_aLhL}.

We used the dataset \texttt{human\_enhancers\_cohn} of Genomic Benchmark dataset.
As for the pretrained model of Hyena, we used \texttt{hyenadna-tiny-1k-seqlen}.
The model was fine-tuned for 100 epochs.
Then, we sampled 20 different test sequences whose correct probability is larger or equal to 0.95.
For each sequence, we repeatedly mask the tokens that maximize the correct probability.
The error bar is calculated by the standard deviation of these 20 samples.
The source code for the experiment is \texttt{downstream\_finetune.py} and \texttt{downstream\_mask.py}, 
which can be found in the supplemental material.
Finetuning needs around one hour, and masking needs around 90 minutes.

\end{document}